\newenvironment{customprop}[1]
  {\innercustomprop}
  {\endinnercustomprop}
\newenvironment{customassumption}[1]
  {\innercustomassumption}
  {\endinnercustomassumption}
\DeclareMathOperator*{\argmin}{arg\,min}
\theoremstyle{plain}
\newtheorem{theorem}{Theorem}[section]
\newtheorem{proposition}[theorem]{Proposition}
\newtheorem{lemma}[theorem]{Lemma}
\theoremstyle{definition}
\newtheorem{assumption}[theorem]{Assumption}
\theoremstyle{remark}
\newtheorem*{remark}{Remark}
\icmltitlerunning{Coin Sampling: Gradient-Based Bayesian Inference without Learning Rates}
\begin{document}

\twocolumn[
\icmltitle{Coin Sampling: Gradient-Based Bayesian Inference without Learning Rates}



\icmlsetsymbol{equal}{*}

\begin{icmlauthorlist}
\icmlauthor{Louis Sharrock}{yyy}
\icmlauthor{Christopher Nemeth}{yyy}
\end{icmlauthorlist}

\icmlaffiliation{yyy}{Department of Mathematics, Lancaster University, UK}

\icmlcorrespondingauthor{Louis Sharrock}{l.sharrock@lancaster.ac.uk}

\icmlkeywords{Machine Learning, ICML}

\vskip 0.3in
]



\printAffiliationsAndNotice{}  

\begin{abstract}
In recent years, particle-based variational inference (ParVI) methods such as Stein variational gradient descent (SVGD) have grown in popularity as scalable methods for Bayesian inference. Unfortunately, the properties of such methods invariably depend on hyperparameters such as the learning rate, which must be carefully tuned by the practitioner in order to ensure convergence to the target measure at a suitable rate.  In this paper, we introduce a suite of new particle-based methods for scalable Bayesian inference based on coin betting, which are entirely learning-rate free. We illustrate the performance of our approach on a range of numerical examples, including several high-dimensional models and datasets, demonstrating comparable performance to other ParVI algorithms with no need to tune a learning rate.
\end{abstract}

\section{Introduction}
\label{sec:intro}
The task of sampling from complex, high-dimensional probability distributions is of fundamental importance to Bayesian inference \cite{Robert2004,Gelman2013}, machine learning \cite{Neal1996,Andrieu2003a,Wilson2020}, molecular dynamics \cite{Krauth2006,Lelievre2016}, and scientific computing \cite{MacKay2003,Liu2009}. In this paper, we consider the canonical task of sampling from a target probability distribution $\pi(\mathrm{d}x)$ on $\mathbb{R}^d$ with density $\pi(x)$ with respect to the Lebesgue measure of the form\footnote{In a slight abuse of notation, we use $\pi$ to denote both the target distribution and its density.}
\begin{equation}
\pi(x) := \frac{e^{-U(x)}}{Z},
\end{equation}
where $U:\mathbb{R}^d\rightarrow\mathbb{R}$ is a continuously differentiable function known as the potential, and $Z = \int_{\mathbb{R}^d} e^{-U(x)}\mathrm{d}x$ is an unknown normalising constant.
 
 Recently, there has been growing interest in hybrid sampling methods which combine the non-parametric nature of Markov chain Monte Carlo (MCMC) sampling with the parametric approach used in variational inference (VI). In particular, {particle based variational inference} (ParVI) methods \cite{Liu2016a,Chen2018,Liu2019} approximate the target distribution using an ensemble of interacting particles, which are deterministically updated by iteratively minimising a metric such as the Kullback-Leibler (KL) divergence.
 
\begin{figure*}[t!]
\centering
\subfigure[Gaussian.]{\includegraphics[trim=0 16mm 0 16mm, clip, width=.32\textwidth]{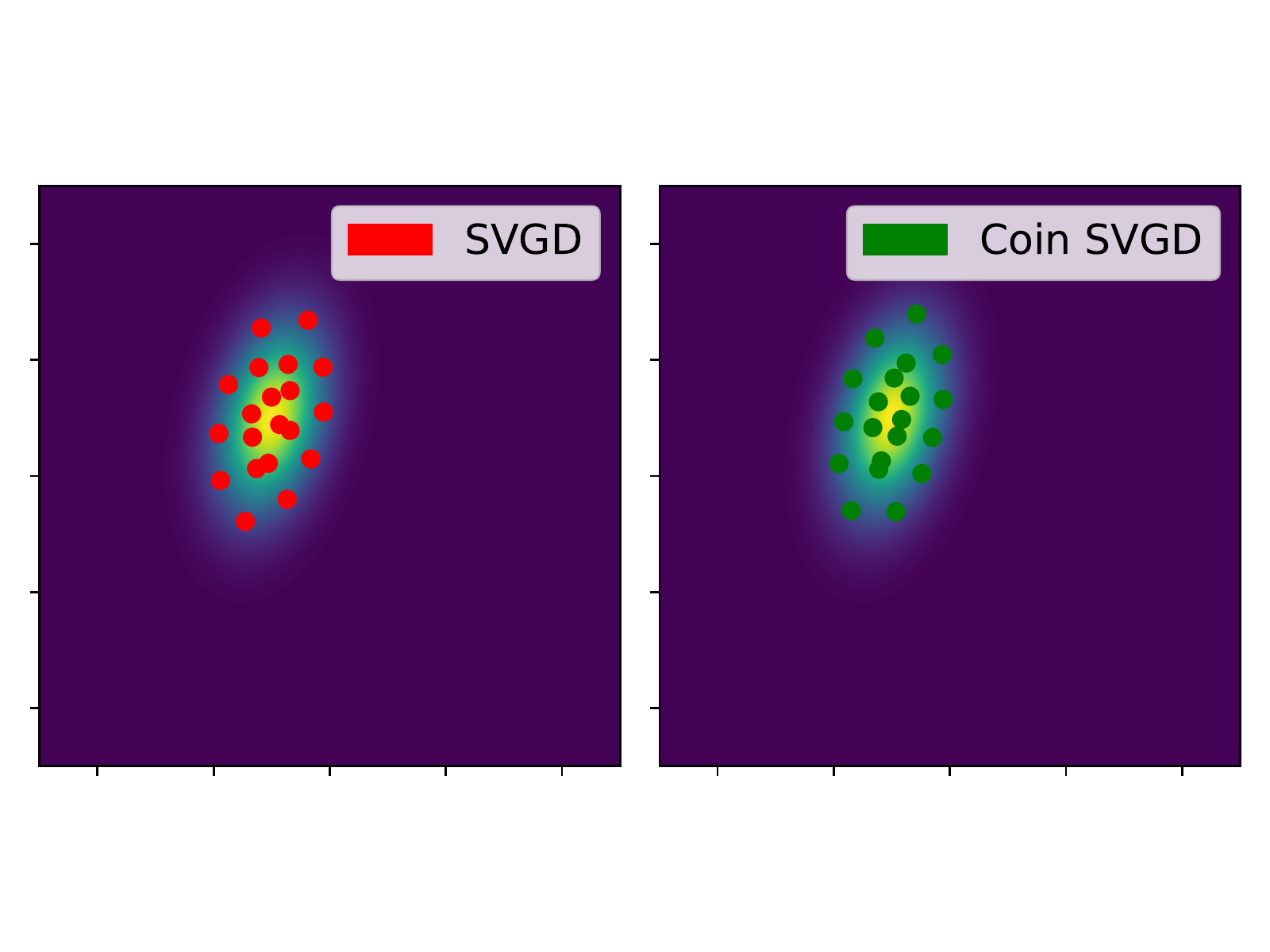}}
\subfigure[Mixture of Gaussians.]{\includegraphics[trim=0 16mm 0 16mm, clip, width=.32\textwidth]{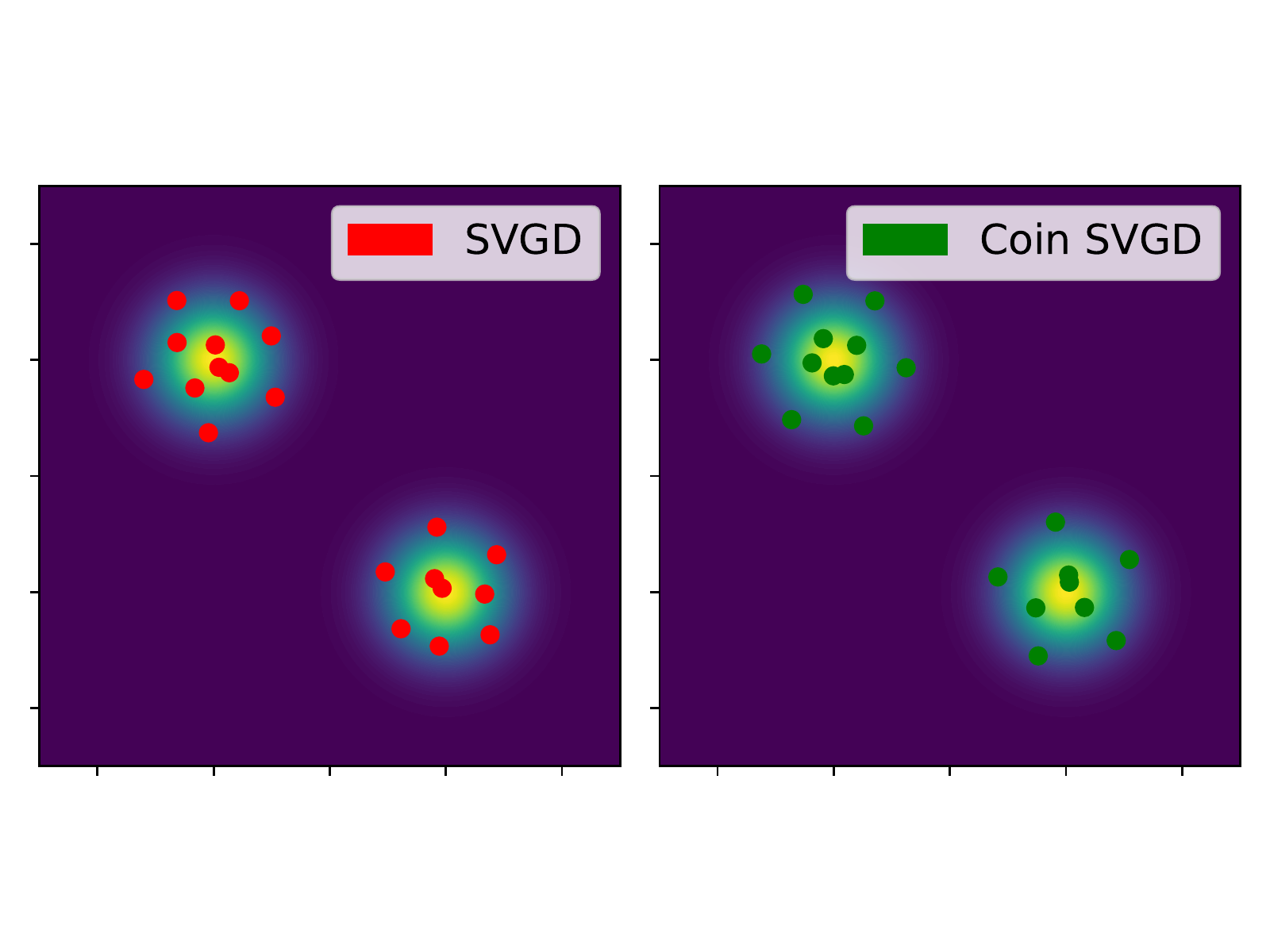}}
\subfigure[`Donut'.]{\includegraphics[trim=0 16mm 0 16mm, clip, width=.32\textwidth]{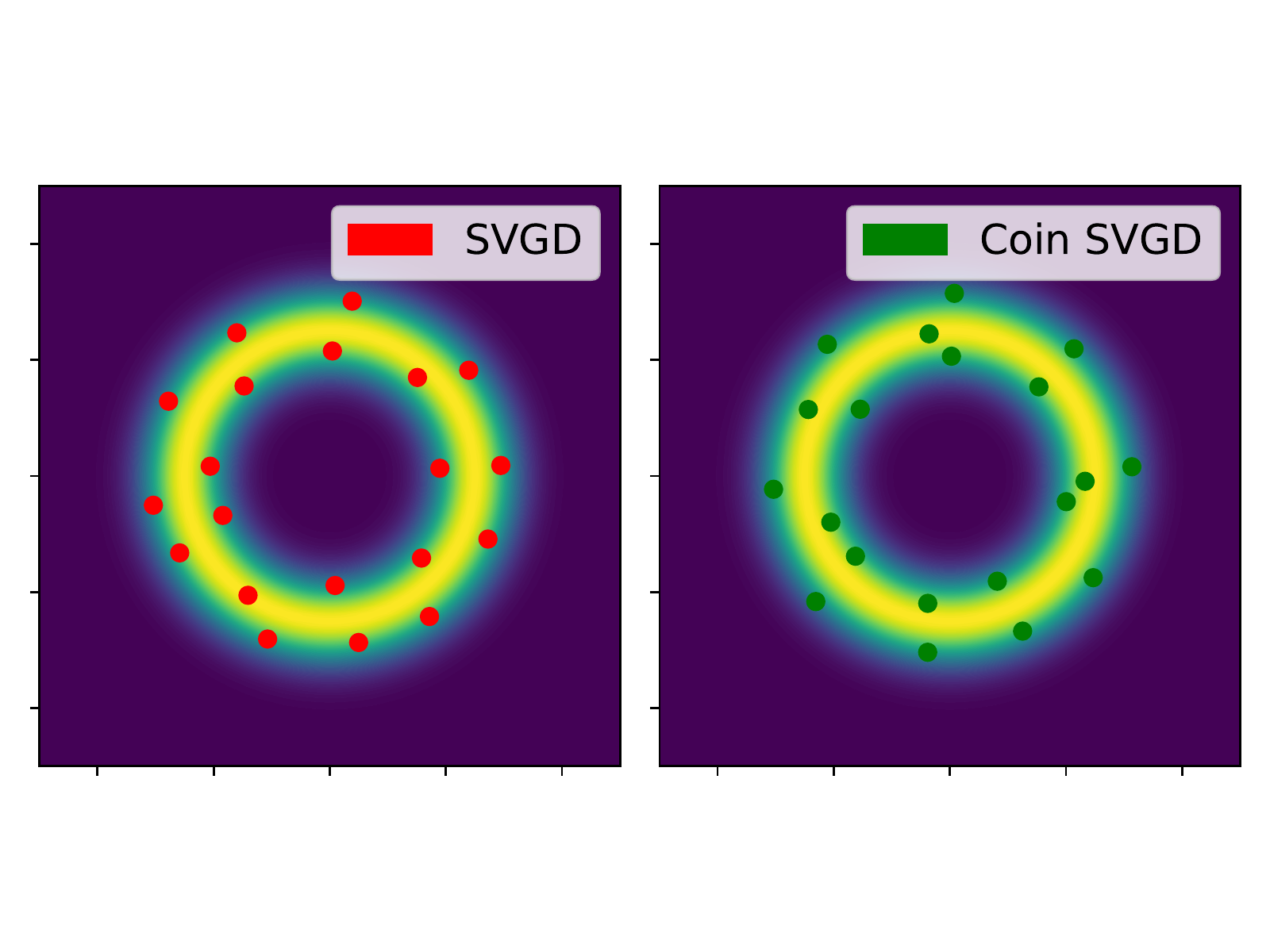}}
\subfigure[Rosenbrock Banana.]{\includegraphics[trim=0 16mm 0 16mm, clip, width=.32\textwidth]{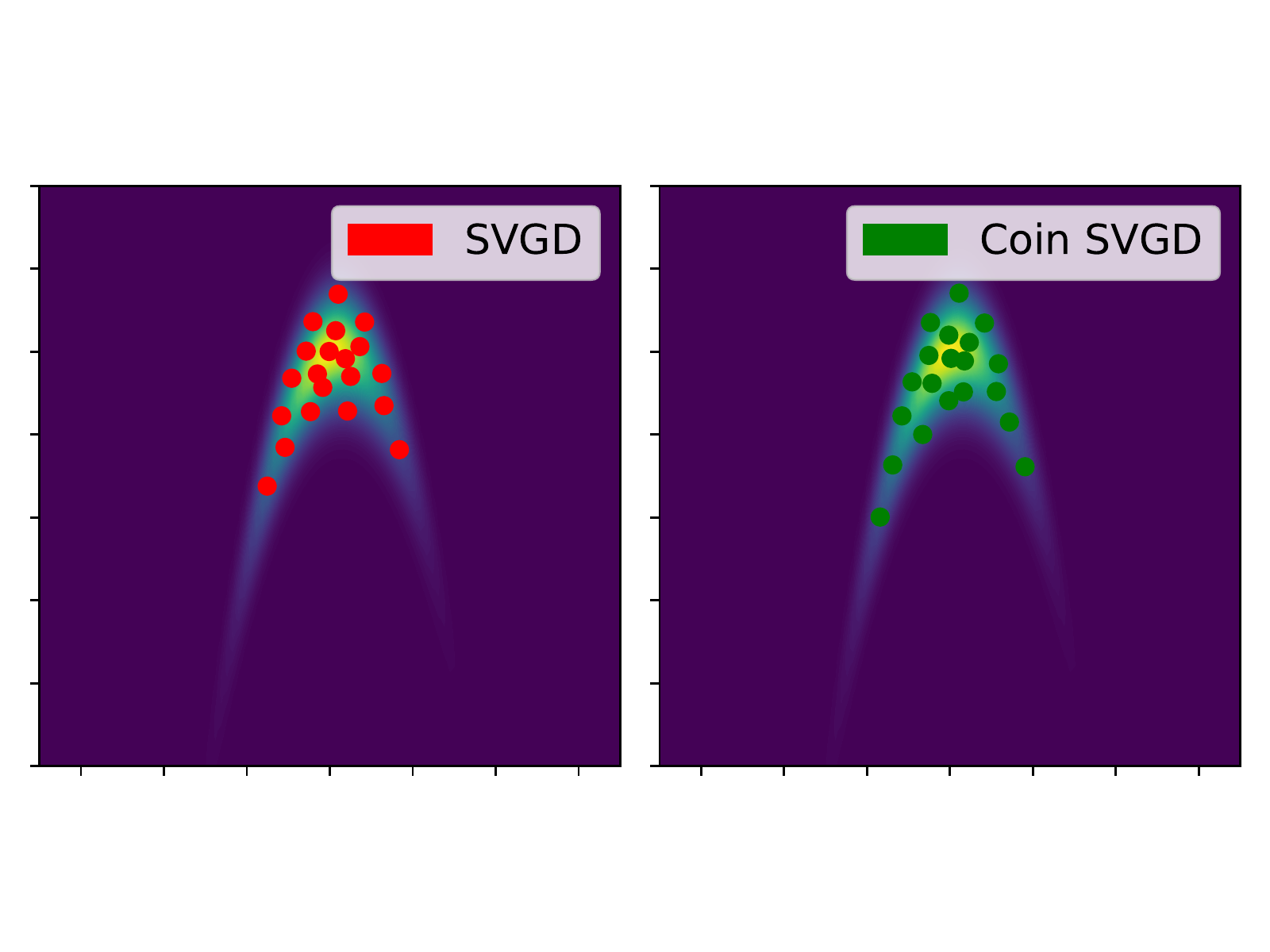}}
\subfigure[`Squiggle'.]{\includegraphics[trim=0 16mm 0 16mm, clip, width=.32\textwidth]{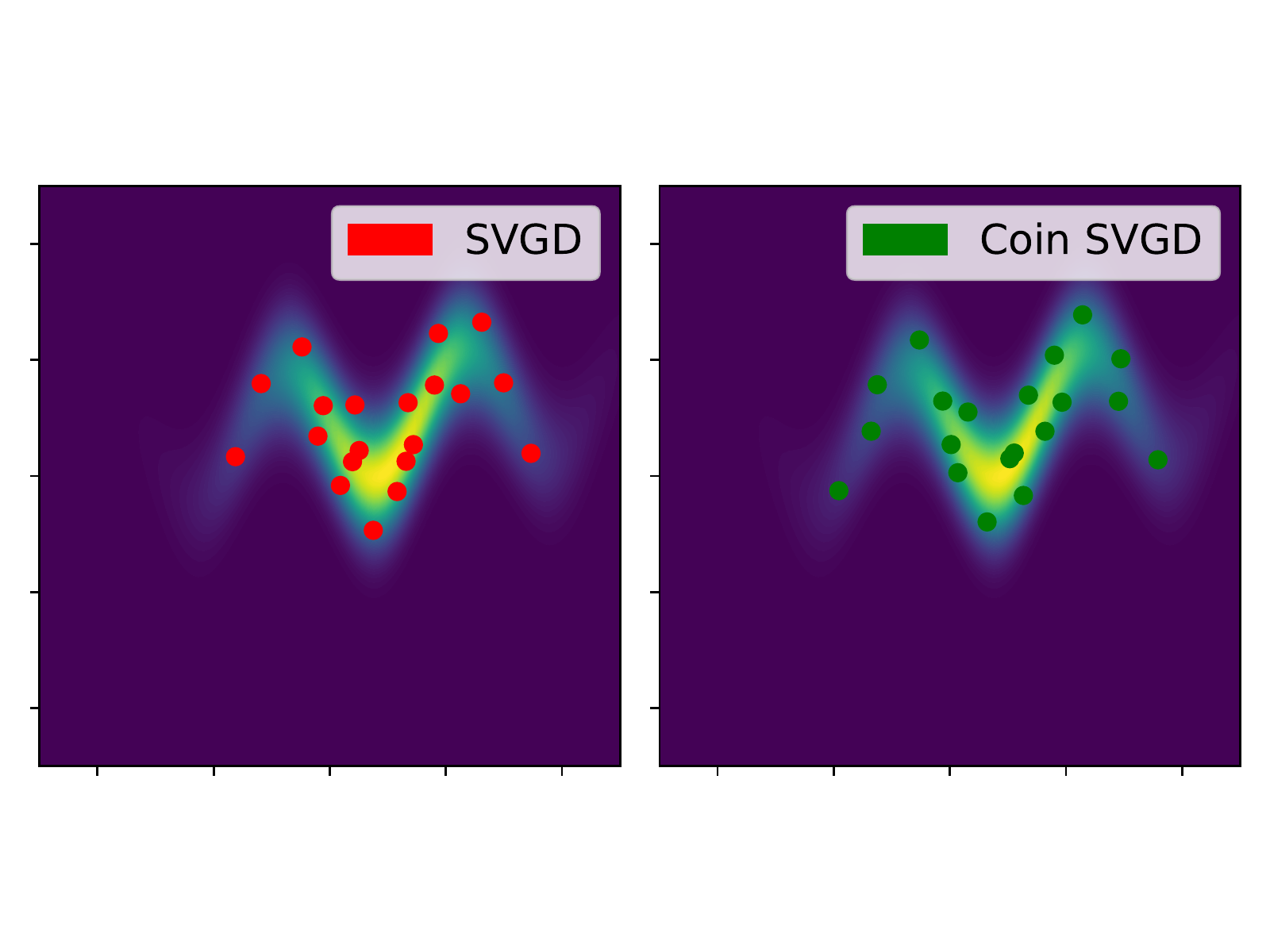}}
\subfigure[`Funnel'.]{\includegraphics[trim=0 16mm 0 16mm, clip, width=.32\textwidth]{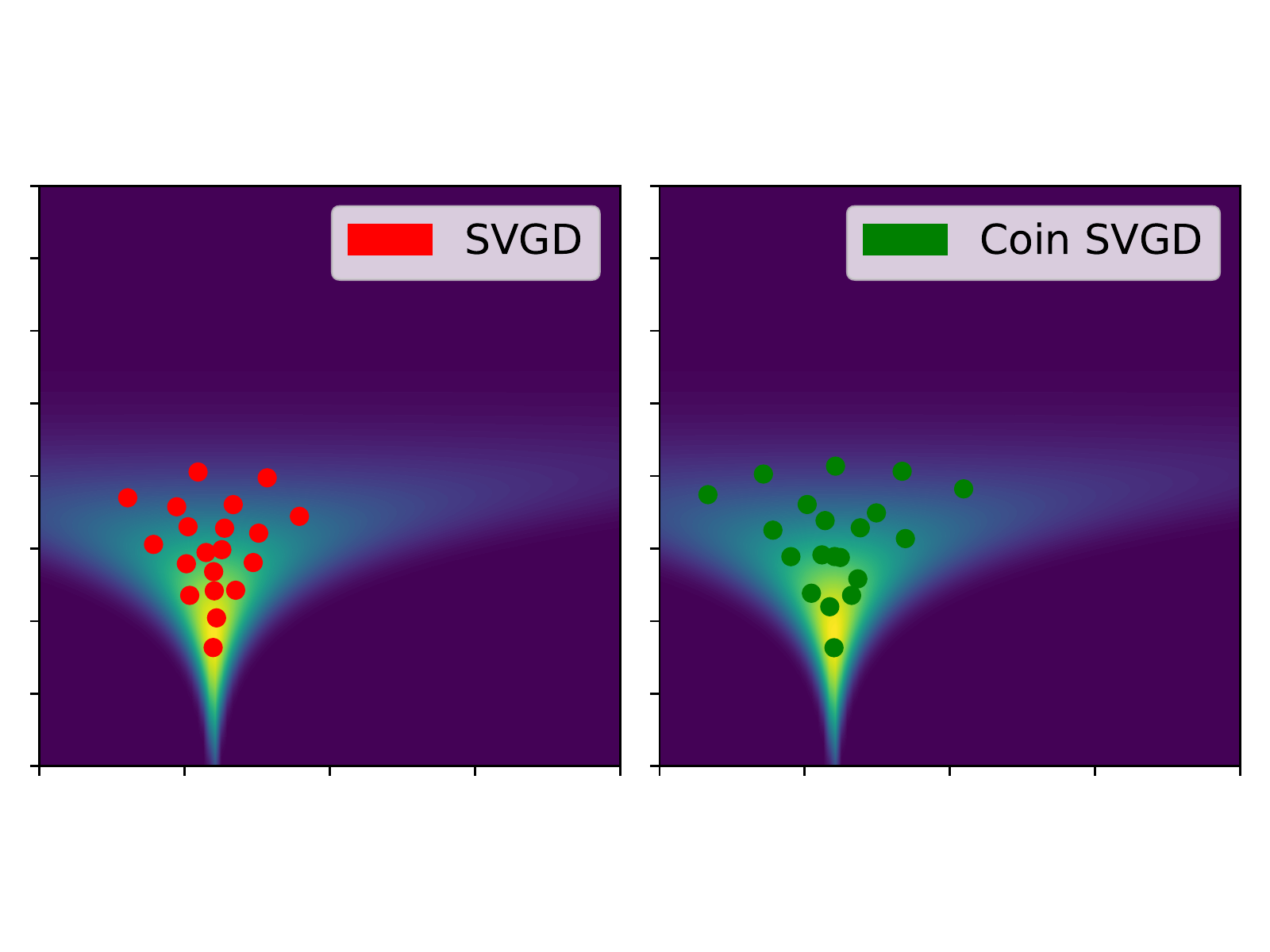}}
\caption{\textbf{A comparison between SVGD \cite{Liu2016a} and its learning-rate free analogue, Coin SVGD (Alg. \ref{alg:param_free_svgd})}. We plot the samples generated by both methods for several two-dimensional target distributions. Further details are provided in Sec. \ref{sec:numerics} and App. \ref{sec:toy-details}.}
\label{fig:figure1}
\vspace{-0.3cm}
\end{figure*}

 Perhaps the most well known of these methods is Stein variational gradient descent (SVGD) \cite{Liu2016a}, which iteratively updates the particles according to a form of gradient descent on the KL divergence, with the descent direction restricted to belong to a unit ball in a reproducing kernel Hilbert space (RKHS). This approach has since given rise to several variants \cite{Liu2017,Han2018a,Liu2018a,Zhuo2018,Chen2018a,Detommaso2018,Futami2019,Futami2019a,Wang2019a,Chen2020,Ye2020,Liu2022,Sun2022}; and found success in a range of problems, including uncertainty quantification \cite{Zhu2018}, reinforcement learning \cite{Haarnoja2017,Liu2017a,Zhang2018b}, learning deep probabilistic models \cite{Pu2017,Wang2017a}, and Bayesian meta-learning \cite{Feng2017,Yoon2018}.

In order to construct and analyse sampling algorithms of this type, one popular approach is to reformulate the sampling problem as an optimisation problem in the space of measures \cite{Jordan1998,Wibisono2018,Cheng2018a,Durmus2019}. In this setting, one views the target $\pi$ as the solution of an optimisation problem 
\begin{equation}
 \pi = \argmin_{\mu\in\mathcal{P}_2(\mathbb{R}^d)} \mathcal{F}(\mu),
\end{equation} 
where $\mathcal{P}_2(\mathbb{R}^d)$ denotes the set of probability measures $\{\mu:\int_{\mathbb{R}^d}||x||^2\mu(\mathrm{d}x)<\infty\}$, and $\mathcal{F}:\mathcal{P}(\mathbb{R}^d)\rightarrow\mathbb{R}$ is a functional which is uniquely minimised at $\pi$. A general strategy for solving this problem is then to simulate a time-discretisation of 
the gradient flow of $\mathcal{F}$ over $\mathcal{P}_2(\mathbb{R}^d)$, having equipped this space with a suitable metric \cite{Ambrosio2008}. 

Many popular sampling algorithms can be understood from this perspective. For example, Langevin Monte Carlo (LMC), a popular MCMC algorithm, corresponds to the so-called forward-flow discretisation of the gradient flow of the KL divergence with respect to the quadratic Wasserstein metric \cite{Wibisono2018,Durmus2019}.\footnote{We note that the connection between the law of the overdamped Langevin diffusion (i.e., the continuous-time dynamics of LMC) and the gradient flow of the KL divergence dates back to Otto et al. \cite{Jordan1998,Otto2001,Otto2005}.} 
Meanwhile, SVGD can be viewed as the explicit Euler discretisation of the gradient flow of the KL divergence with respect to a kernelised Wasserstein metric \cite{Liu2017,Duncan2019}. Other more recent examples, designed with this perspective in mind, include maximum mean discrepancy (MMD) gradient descent \cite{Arbel2019}, the Wasserstein proximal gradient algorithm \cite{Salim2020}, 
kernel Stein discrepancy descent (KSDD) \cite{Korba2021}, Laplacian adjusted Wasserstein gradient descent (LAWGD) \cite{Chewi2020}, mollified interaction energy descent (MIED) \cite{Li2023}, and the various other ParVI methods described in \citet{Chen2018,Liu2019,Liu2019a}.

One feature common to all of these approaches is the need to specify an appropriate learning rate (i.e., step size) $\gamma$, or a learning rate schedule $(\gamma_t)_{t\geq 1}$. This learning rate must be sufficiently small to ensure convergence to the target measure, or a close approximation thereof, but also large enough to ensure convergence within a reasonable time period. In theory, for a given target $\pi$, existing convergence rates allow one to derive an optimal learning rate (see, e.g., \citealp{Korba2020,Salim2022,Sun2022} for SVGD; \citealp{Dalalyan2017,Dalalyan2017a,Durmus2017,Dalalyan2019,Durmus2019a} for LMC). Invariably, however, the optimal learning rate is a function of the unknown target measure (e.g., Corollary 6 in \citealp{Korba2020};  Theorem 9 in \citealp{Durmus2019}) and thus, in practice, cannot be computed. 

With these considerations in mind, a natural question is whether one can obtain a gradient-based sampling method which does not require a learning rate. In this paper, we answer this question in the affirmative. In particular, inspired by the parameter-free optimisation methods developed by Orabona and coworkers \cite{Orabona2014,Orabona2016,Orabona2017,Cutkosky2018,Jun2019,Chen2022}, and leveraging the view of sampling as an optimisation problem in the space of measures \cite{Wibisono2018}, we obtain a new suite of particle-based algorithms for scalable Bayesian inference which are entirely learning rate free. Similar to other ParVIs, our algorithms deterministically update an ensemble of interacting particles in order to approximate the target distribution. However, unlike other ParVIs, our algorithms do not correspond to the time-discretisation of any gradient flow, and thus bear little resemblance to existing methods. 

Under the assumption of log-concavity, we outline how to establish convergence to the target measure in the infinite-particle regime and obtain a non-asymptotic convergence rate. We then illustrate the performance of our approach on a range of numerical examples, including both convex and non-convex targets. Our results indicate that the proposed methodology achieves comparable performance to existing particle-based sampling algorithms in a range of tasks, with no need to tune a learning rate.

\section{Preliminaries}
\label{sec:prelims}

\subsection{Optimisation in Euclidean Space}
We begin by reviewing optimisation in Euclidean spaces, focusing on the learning-rate free stochastic optimisation method introduced by \citet{Orabona2016}. This will later provide the foundation for our learning-rate free sampling algorithms. 

\subsubsection{Notation}
Let $\mathcal{X}\subseteq\mathbb{R}^d$, and write $||\cdot||$ and $\langle \cdot,\cdot\rangle$ for the Euclidean norm and inner product in $\mathbb{R}^d$. Let $f:\mathcal{X}\rightarrow\mathbb{R}\cup\{-\infty,\infty\}$, and let $f^{*}:\mathcal{X}^{*}\rightarrow\mathbb{R}\cup\{-\infty,\infty\}$ denote the Fenchel conjugate of $f$, so that $f^{*}(u) = \sup_{x\in\mathcal{X}}\left[\langle u,x\rangle - f(x)\right]$.

Suppose that $f$ is $m$-strongly convex, for some $m\geq 0$. Let $x\in\mathcal{X}$. We say that $g\in\mathcal{X}$ is a subgradient of $f$ at $x$, and write $g\in\partial f(x)$ if, for any $z\in\mathcal{X}$, 
\begin{equation}
f(z) - f(x) \geq \langle g,z-x\rangle + \frac{m}{2}||z-x||^2.
\end{equation}
If $f$ is differentiable at $x$, then the differential set $\partial f(x)$ contains a single element, $\partial f(x)= \{\nabla f(x)\}$, where $\nabla f(x)$ denotes the gradient of $f$ at $x$. 

\subsubsection{Euclidean Gradient Flows}
We begin by considering the optimisation problem
\begin{equation}
x^{*} = \argmin_{x\in\mathcal{X}} f(x), \label{eq:minimisation}
\end{equation}
where $f:\mathcal{X}\rightarrow\mathbb{R}$ is $m$-strongly convex. We can solve this problem using the gradient flow of $f$, defined as the solution $x:[0,\infty)\rightarrow\mathbb{R}^d$ of the following differential inclusion
\begin{equation}
    \dot{x}_t \in -\partial f(x_t), \label{eq:diff_eq}
\end{equation}
initialised at $x_0\in\mathcal{X}$. This inclusion admits a unique, absolutely continuous solution for almost all $t\geq 0$ (e.g., Theorem 3.1 in \citealp{Brezis1973}, Theorem 2.7 in \citealp{Peypouquet2010}; Proposition 2.1 in \citealp{Santambrogio2017}).
Moreover, the function $t\mapsto f(x_t)$ is decreasing, with $\lim_{t\rightarrow\infty} f(x_t) = \inf_{x\in\mathcal{X}}f(x)$ \citep[Proposition 3.1]{Peypouquet2010}.

In practice, it is necessary to use a time-discretisation of this gradient flow. One standard choice is a backward Euler discretisation, which results in the proximal point algorithm \cite{Guler1991,DeGiorgi1993}. Alternatively, one can utilise a forward Euler discretisation, which results in the standard subgradient descent algorithm \cite{Shor1985}
\begin{equation}
    x_{t+1} = x_{t} - \gamma g_t~,~~~g_t\in\partial f(x_t). \label{eq:GD}
\end{equation}
The properties of this algorithm depend, necessarily, on the choice of learning rate $\gamma>0$. For example, given an $L$-Lipschitz function, it is well known that the average of the algorithm iterates $\bar{x}_T = \frac{1}{T}\sum_{t=1}^{T} x_t$ satisfies \citep[e.g.,][]{Zinkevich2003} 
\begin{equation}
    f\left(\bar{x}_T\right) - f(x^{*}) \leq \frac{1}{T}\left[ \frac{||x_1-x^{*}||^2}{2\gamma} + \frac{L^2T\gamma}{2}\right]. \label{eq:sub_GD_bound} 
\end{equation}
Using this expression, one can obtain the `ideal' learning rate as $\gamma_{\text{ideal}} = \frac{||x_1-x^{*}||}{L\sqrt{T}}$, which implies the optimal error bound
\begin{equation}
    f(\bar{x}_T) - f(x^{*}) \leq \frac{L||x_1-x^{*}||}{\sqrt{T}}. \label{eq:optimal_bound}
\end{equation}
In practice, however, it is not possible to achieve this bound. Indeed, even in hindsight, one cannot compute the ideal learning rate $\gamma_{\text{ideal}}$, since it depends on the unknown $||x_1-x^{*}||$.

\subsubsection{Learning-Rate Free Gradient Descent}
\label{sec:param_free_grad_descent}
Following \citet{Orabona2016}, we now outline an alternative approach for solving the stochastic optimisation problem in \eqref{eq:minimisation} which is entirely learning-rate free. Consider a gambler who bets on the outcomes of a series of adversarial coin flips. Suppose that the gambler starts with an initial wealth $w_0 = \varepsilon>0$. In the $t^{\text{th}}$ round, the gambler bets on the outcome of a coin flip $c_t\in\{-1,1\}$, where $+1$ denotes heads and $-1$ denotes tails. For now, we make no assumptions on how $c_t$ is generated.

We will encode the gambler's bet in the $t^{\text{th}}$ round by $x_t\in\mathbb{R}$. In particular, $\mathrm{sign}(x_t)\in\{-1,1\}$ will denote whether the bet is on heads or tails, and $|x_t|\in\mathbb{R}$ will denote the size of the bet. Thus, in the $t^{\text{th}}$ round, the gambler wins $x_tc_t$ if $\mathrm{sign}(c_t) = \mathrm{sign}(x_t)$; and loses $x_tc_t$ otherwise. Finally, we will write $w_t$ for the wealth of the gambler at the end of the $t^{\text{th}}$ round. Clearly, we then have that
\begin{equation}
    w_t = \varepsilon + \sum_{i=1}^t c_ix_i.
\end{equation}
We will restrict our attention to the case in which the gambler's bets satisfy $x_t = \beta_tw_{t-1}$, for some betting fraction $\beta_t\in[-1,1]$. This is equivalent to the assumption that the gambler cannot borrow any money.

We will now outline how to solve the convex optimisation problem $x^{*}=\argmin_{x\in\mathbb{R}}f(x)$ using a coin-betting algorithm. For simplicity, we will restrict our attention to the simple one-dimensional function $f(x) = |x-10|$. We note, however, that this approach can easily be extended to any convex function $f:\mathbb{R}^d\rightarrow\mathbb{R}$ \cite{Orabona2016}. 
Suppose we define the outcome of a coin flip $\smash{c_t\in\{-1,1\}}$ to be equal to $\smash{-g_t \in -\partial [f(x_t)]}$, the negative subgradient of $\smash{f(x_t)}$. In this case, under a certain assumption on the betting strategy $(\beta_t)_{t=1}^T$, \citet{Orabona2016} show that the average of bets $f(\bar{x}_T)$ converges to $f(x^{*})$, with a rate which depends on the quality of the betting strategy.
\begin{lemma}
Suppose that the betting strategy $\smash{(\beta_t)_{t=1}^T}$ guarantees that, for any sequence of coin flips $\smash{(c_t)_{t=1}^T} \in\{-1,1\}$, there exists a function $\smash{h:\mathbb{R}\rightarrow\mathbb{R}}$ such that the wealth after $T$ rounds satisfies $\smash{w_T \geq h(\sum_{t=1}^T c_t)}$. Then
\label{orabona_lemma}
\begin{align}
    f\left(\frac{1}{T}\sum_{t=1}^T x_t\right) - f(x^{*})  \leq \frac{h^{*}(x^{*}) + \varepsilon}{T}.
\end{align}
\end{lemma}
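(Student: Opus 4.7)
The plan is to chain together three ingredients: the wealth lower bound (from the betting strategy), the convexity of $f$, and Fenchel duality. I will work in the one-dimensional setting from the lemma, where the coins are taken to be $c_t = -g_t$ with $g_t \in \partial f(x_t)$, and the bets are $x_t = \beta_t w_{t-1}$.

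First I will rewrite the wealth identity $w_T = \varepsilon + \sum_{t=1}^T c_t x_t$, substitute the assumed lower bound $w_T \geq h(\sum_{t=1}^T c_t)$, and rearrange to
\begin{equation*}
\sum_{t=1}^T c_t x_t \geq h\!\left(\sum_{t=1}^T c_t\right) - \varepsilon.
\end{equation*}
Next I will use convexity of $f$, in the form $f(x^{*}) - f(x_t) \geq g_t(x^{*} - x_t) = -c_t(x^{*} - x_t)$, summed over $t$, which rearranges to
\begin{equation*}
\sum_{t=1}^T \bigl[f(x_t) - f(x^{*})\bigr] \leq x^{*}\sum_{t=1}^T c_t - \sum_{t=1}^T c_t x_t.
\end{equation*}

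Combining the two displays eliminates the $\sum_t c_t x_t$ term and yields
\begin{equation*}
\sum_{t=1}^T \bigl[f(x_t) - f(x^{*})\bigr] \leq x^{*}\sum_{t=1}^T c_t - h\!\left(\sum_{t=1}^T c_t\right) + \varepsilon.
\end{equation*}
Now comes the Fenchel-conjugate step: since the inequality holds for the particular value $S := \sum_t c_t$ realised by the coin sequence, and in general $x^{*}S - h(S) \leq \sup_{S'\in\mathbb{R}}[x^{*}S' - h(S')] = h^{*}(x^{*})$, we obtain $\sum_{t=1}^T [f(x_t) - f(x^{*})] \leq h^{*}(x^{*}) + \varepsilon$. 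Finally, applying Jensen's inequality to the convex $f$ gives $f(\bar{x}_T) \leq \frac{1}{T}\sum_{t=1}^T f(x_t)$, dividing through by $T$ then gives the stated bound.

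None of the individual steps is hard; the conceptual move worth flagging is the use of Fenchel duality at the end. The coin sum $S = \sum_t c_t$ is an unknown, adversarially determined quantity, so we cannot evaluate $x^{*}S - h(S)$ directly; passing to the supremum is precisely what decouples the bound from the realised coin sequence and produces a quantity ($h^{*}(x^{*})$) that depends only on the betting strategy (through $h$) and on the unknown optimum $x^{*}$. This is the step that delivers a learning-rate-free guarantee, since no step size ever entered the derivation, and it is the one I would write out most carefully.
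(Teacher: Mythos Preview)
Your proof is correct and follows essentially the same approach as the paper: both combine Jensen's inequality, the subgradient inequality for convex $f$, the wealth lower bound, and the Fenchel-conjugate step. The only cosmetic difference is ordering (the paper applies Jensen first and you apply it last), which does not affect the argument.
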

\begin{proof}
See App. \ref{sec:theoretica_results}.
\end{proof}
We can thus use any suitable coin-betting algorithm to obtain $x^{*} = \argmin_{x\in\mathbb{R}}f(x)$, given access to the subgradients of $f$. Moreover, any such algorithm will be entirely learning-rate free. There are various betting strategies which satisfy the requirement that $\smash{w_T \geq h(\sum_{t=1}^T c_t)}$ \citep[e.g.,][]{Orabona2016,Orabona2017,Chen2022}. Perhaps the simplest such strategy is one based on the Krichevsky-Trofimov (KT) estimator \cite{Krichevsky1981}, which defines the betting strategy to be equal to $\beta_t = \sum_{i=1}^{t-1}c_i/t$. This results in the coin betting algorithm 
\begin{equation}
    x_{t} = -\frac{\sum_{i=1}^{t-1}g_i}{t}\left(\varepsilon - \sum_{i=1}^{t-1} g_i x_i \right). \label{eq:KT_algorithm}
\end{equation}
In this case, it is possible to show \citep[][Lemma 14]{Orabona2016} that the wealth is lower bounded by
\begin{equation}
    h\left(\sum_{t=1}^T c_t\right) = \frac{\varepsilon}{K\sqrt{T}} \exp\bigg( \frac{\left(\sum_{t=1}^T c_t\right)^2}{2T}\bigg),
\end{equation}
where $K$ is a universal constant. Thus, using Lemma \ref{orabona_lemma} and an appropriate bound on the convex conjugate of $h$, one obtains \citep[][Corollary 5]{Orabona2016}
\begin{equation}
    f(\bar{x}_T) - f(x^{*}) \leq K \frac{||x^{*}|| \sqrt{\log (1 + \frac{24T^2||x^{*}||^2}{\varepsilon^2})} + \varepsilon}{\sqrt{T}}. \label{eq:KT_bound}
\end{equation}
It is instructive to compare this bound with \eqref{eq:optimal_bound}, the corresponding bound for subgradient descent with an optimally chosen learning rate. Although the coin-betting approach does not quite achieve the optimal bound in \eqref{eq:optimal_bound}, it comes close, containing only an additional log-factor. This can be viewed as the trade-off for the fact that the algorithm is now learning-rate free.

\section{Coin Sampling for Bayesian Inference}
\label{sec:svgd}
Our approach, summarised in Alg. \ref{alg:param_free_grad_descent_v2}, can be viewed as a natural extension of the learning-rate free optimisation methods introduced in Sec. \ref{sec:param_free_grad_descent} to the Wasserstein space. In particular, coin sampling replaces Euclidean gradients with Wasserstein gradients in the coin-betting framework, and can thus be used to solve optimisation problems on the space of probability measures, i.e., for Bayesian inference.

\subsection{Optimisation in Wasserstein Space}
\label{sec:opt_wasserstein_space}
To extend coin betting to our setting, we will require some basic concepts from optimal transport, including the definition of the Wasserstein space, and of a Wasserstein gradient flow. We provide additional details on geodesic convexity and subdifferential calculus in App. \ref{sec:wasserstein_details}; see also the books of \citet{Ambrosio2008} and \citet{Villani2008}.

\subsubsection{The Wasserstein Space}
\label{sec:wasserstein-space}
Let $\mathcal{P}_2(\mathbb{R}^d)$ denote the set of probability measures on $\mathbb{R}^d$ with finite $2^{\text{nd}}$ moment: $\int_{\mathbb{R}^d} ||x||^2 \mu(\mathrm{d}x)<\infty$. For any $\mu\in\mathcal{P}_{2}(\mathbb{R}^d)$, let $L^{2}(\mu)$ denote the set of measurable functions $f:\mathbb{R}^d\rightarrow\mathbb{R}^d$ such that $\int_{\mathbb{R}^d}||f(x)||^2\mu(\mathrm{d}x)<\infty$. We will write $||\cdot||^2_{L^2(\mu)}$ and $\langle\cdot,\cdot\rangle_{L^2(\mu)}$ to denote, respectively, the norm and the inner product of this space.

Given a probability measure $\mu\in\mathcal{P}_2(\mathbb{R}^d)$ and a measurable function $T:\mathbb{R}^d\rightarrow\mathbb{R}^d$, we write $T_{\#}\mu$ for the pushforward measure of $\mu$ under $T$, that is, the measure such that $T_{\#}\mu(B)=\mu(T^{-1}(B))$ for all Borel measurable $B\in\mathcal{B}(\mathbb{R}^d)$. For every $\mu,\nu\in\mathcal{P}_2(\mathbb{R}^d)$, let $\Gamma(\mu,\nu)$ be the set of couplings (or transport plans) between $\mu$ and $\nu$, defined as  
$\Gamma(\mu,\nu) = \{\gamma\in\mathcal{P}_2(\mathbb{R}^d): Q^{1}_{\#}\gamma = \mu, Q^{2}_{\#}\gamma = \nu \}$, 
where $Q^{1}$ and $Q^{2}$ denote the projections onto the first and second components of $\mathbb{R}^d\times\mathbb{R}^d$. The Wasserstein $2$-distance between $\mu$ and $\nu$ is then defined according to
\begin{equation}
W_2^2(\mu,\nu) = \inf_{\gamma \in\Gamma(\mu,\nu)}  \int_{\mathbb{R}^d\times\mathbb{R}^d}||x-y||^2 \gamma(\mathrm{d}x,\mathrm{d}y). \label{eq:wasserstein}
\end{equation}
The Wasserstein distance $W_2$ is a distance over $\mathcal{P}_2(\mathbb{R}^d)$. Thus $(\mathcal{P}_2(\mathbb{R}^d), W_2)$ is a metric space of probability measures, known as the Wasserstein space.

\subsubsection{Wasserstein Gradient Flows}
Recall the optimisation problem from Sec. \ref{sec:intro},
\begin{equation}
    \pi = \argmin_{\mu\in\mathcal{P}_2(\mathbb{R}^d)}\mathcal{F}(\mu), \label{eq:wasserstein_optimisation}
\end{equation}
where $\mathcal{F}:\mathcal{P}_2(\mathbb{R}^d)\rightarrow(-\infty,\infty]$ is a proper, lower semi-continuous 
functional uniquely minimised at $\pi$. There are various possible choices for the dissimilarity functional $\mathcal{F}$ \citep[see, e.g., ][]{Simon-Gabriel2018}. In the context of Bayesian inference, perhaps the most common choice is $\mathrm{KL}(\mu|\pi)$, the Kullback-Leibler (KL) divergence of $\mu$ with respect to $\pi$. Other possibilities include the chi-squared divergence $\mathcal{X}^2(\mu|\pi)$ \cite{Chewi2020}, and the maximum mean discrepancy $\mathrm{MMD}(\mu|\pi)$ \cite{Arbel2019}, of which the kernel Stein discrepancy $\mathrm{KSD}(\mu|\pi)$ \cite{Korba2021} is a special case.

Similarly to the Euclidean case, typical solutions to \eqref{eq:wasserstein_optimisation} are based on the use of a gradient flow. In particular, one can now consider the {Wasserstein gradient flow} of $\mathcal{F}$, defined as the weak solution $\smash{\mu:[0,\infty)\rightarrow\mathcal{P}_2(\mathbb{R}^d)}$ of the continuity equation \citep[][Chapter 11]{Ambrosio2008}
\begin{equation}
    \partial_{t}\mu_t  +\nabla \cdot \left(v_t\mu_t\right)=0~,~~~v_t \in -\partial \mathcal{F}(\mu_t), \label{eq:wasserstein_gf}
\end{equation}
where $\partial\mathcal{F}(\mu)$ denotes the Fr\'{e}chet subdifferential $\partial\mathcal{F}(\mu)$ of $\mathcal{F}$ at $\mu$ (see App. \ref{sec:wasserstein_details} for a precise definition).
Under mild conditions, this equation admits a unique solution for any initial condition (e.g., Theorem 11.1.4 and Theorem 11.2.1 in \citealp{Ambrosio2008}; Proposition 4.13 in \citealp{Santambrogio2017}). In addition, the function $t\mapsto \mathcal{F}(\mu_t)$ is decreasing, so that $\lim_{t\rightarrow\infty} \mathcal{F}(\mu_t) = \inf_{\mu\in\mathcal{P}_2(\mathbb{R}^d)}\mathcal{F}(\mu)$ \citep[][Chapter 11]{Ambrosio2008}.

\subsubsection{Discretised Wasserstein Gradient Flows}
For practical purposes, it is once more necessary to discretise the gradient flow in \eqref{eq:wasserstein_gf}. One option is the backward Euler discretisation, which corresponds to the minimising movement \citep[Definition 2.0.6]{Ambrosio2008} or JKO \cite{Jordan1998} scheme. Another natural choice is the forward Euler scheme, which yields the Wasserstein (sub)gradient descent algorithm \citep[e.g.,][]{Guo2022}
\begin{equation}
    \mu_{t+1} = \left(\boldsymbol{\mathrm{id}} - \gamma \xi_t\right)_{\#} \mu_t~,~~~\xi_t \in \partial \mathcal{F}(\mu_t). \label{eq:wasserstein_subgradient_descent}
\end{equation}
For different choices of the functional $\mathcal{F}$, this discretisation yields the population limit of several existing particle-based algorithms. These include MMD gradient descent \cite{Arbel2019}, KSDD \cite{Korba2021}, and, replacing the Wasserstein gradient \eqref{eq:wasserstein_subgradient_descent} by a kernel approximation, SVGD \cite{Liu2016a} and LAWGD \cite{Chewi2020}. 

Regardless of the choice of numerical discretisation, the properties of the resulting algorithm depend, necessarily, on the choice of learning rate $\gamma>0$. To illustrate this point, we recall the following bound for the Wasserstein subgradient descent algorithm \citep[Theorem 8]{Guo2022}
\begin{align}
    \mathcal{F}\left(\bar{\mu}_T\right) - \mathcal{F}(\pi) \leq 
\frac{1}{T}&\left[\frac{W_2^2(\mu_1,\pi)}{2\gamma} +  \frac{L^2 T \gamma}{2} \right], \label{eq:wasserstein_gradient_descent_bound}
\end{align}
where $\bar{\mu}_T = \frac{1}{T}\sum_{t=1}^T \mu_t$, which holds under the assumption that the Wasserstein subgradients $||\xi_t||_{L^2(\mu_t)}\leq L$. We note that a similar bound also holds for the Langevin Monte Carlo (LMC) algorithm \citep[Sec.  3]{Durmus2019}. 

Based on \eqref{eq:wasserstein_gradient_descent_bound}, one can obtain the optimal worst case learning rate as $\gamma_{\text{ideal}} = \frac{W_2(\mu_1,\pi)}{L\sqrt{T}}$, and thus the optimal error bound is given by
\begin{equation}
    \mathcal{F}\left(\bar{\mu}_T\right) - \mathcal{F}(\pi) \leq \frac{L W_2(\mu_1,\pi)}{\sqrt{T}}. 
\end{equation}
Similar to the Euclidean case, however, this rate cannot be achieved in practice. In particular, computing $\gamma_{\text{ideal}}$ now depends on the unknown Wasserstein distance $W_2(\mu_1,\pi)$.

\subsection{Coin Wasserstein Gradient Descent}
\label{sec:learning-rate-freee-wasserstein}

We now introduce an alternative approach to solving \eqref{eq:wasserstein_optimisation} which is entirely learning rate free. Consider a {gambler}, indexed by some initial bet $x_0\in\mathbb{R}^d$, who bets on a series of outcomes $c_t = c_t(x_0)\in[-1,1]^d$. Similar to before, we assume that this gambler has initial wealth $w_0>0$. In the $t^{\text{th}}$ round, we now suppose that this gambler bets $x_t - x_0 \in\mathbb{R}^d$ on the outcome $c_t\in\mathbb{R}^d$. The wealth $w_t = w_t(x_0)$ of the gambler thus accumulates as 
\begin{equation}
    w_t = w_0 + \sum_{s=1}^t \langle c_s, x_s - x_0 \rangle .
\end{equation}
We will assume, similar to before, that the bets $x_t - x_0$ satisfy $x_t - x_0 = \beta_t w_{t-1}$, for some vector-valued betting fraction $\beta_t = \beta_t(x_0)\in[-1,1]^{d}$. In fact, henceforth we will always assume that $\smash{\beta_t = \frac{1}{t}{\sum_{s=1}^{t-1}c_s}}$, which corresponds to the KT betting strategy. The sequence of bets is thus given by 
\begin{equation}
    x_t = x_0 + \frac{\sum_{s=1}^{t-1}c_s}{t}\bigg(w_0 + \sum_{s=1}^{t-1} \langle c_s, x_s - x_0 \rangle \bigg). \label{eq:wasserstein_bets}
\end{equation}
Suppose, now, that in fact $x_0\sim \mu_0$, for some `initial betting distribution' $\mu_0\in\mathcal{P}_2(\mathbb{R}^d)$. In addition, suppose that we write $\varphi_{t}:\mathbb{R}^d\rightarrow\mathbb{R}^d$ for the function which maps $x_0\mapsto x_t$. We can then define a sequence of `betting distributions' $\mu_t\in\mathcal{P}_2(\mathbb{R}^d)$ as the push-forwards of $\mu_0$, under $\varphi_t:\mathbb{R}^d\rightarrow\mathbb{R}^d$, viz, 
\begin{equation}
    \mu_t = (\varphi_t)_{\#}\mu_0.
\end{equation}
This implies, in particular, that given $x_0\sim\mu_0$, the random variable $x_t:=\varphi_t(x_0)$ is distributed according to $\mu_t$. 

 We propose to use this framework to solve the minimisation problem in \eqref{eq:wasserstein_optimisation}. In particular, taking inspiration from \citet{Orabona2016}, we will consider a betting game in which the bets are given by \eqref{eq:wasserstein_bets}, and the outcomes are given by ${c}_t =-\frac{1}{L}\nabla_{W_2}\mathcal{F}(\mu_t)(x_t)$, where $L$ is an upper bound on the Wasserstein gradients (see Assumption \ref{assumption2}). We will refer to this betting game, summarised in Alg. \ref{alg:param_free_grad_descent_v2}, as \emph{coin Wasserstein gradient descent} or \emph{coin sampling}.

\begin{algorithm*}[t]
   \caption{Coin Wasserstein Gradient Descent}
   \label{alg:param_free_grad_descent_v2}
\begin{algorithmic}
   \STATE {\bfseries Input:} initial measure $\mu_0\in\mathcal{P}_2(\mathbb{R}_{d})$, initial parameter $x_0\sim \mu_0$, initial wealth $w_0\in\mathbb{R}_{+}$, dissimilarity functional $\mathcal{F}:\mathcal{P}_2(\mathbb{R}^d)\rightarrow(-\infty,\infty]$, gradient upper bound $L$.
   \FOR{$t=1$ {\bfseries to} $T$}
   \STATE Compute 
\begin{equation}
x_{t} = x_0 - \frac{\sum_{s=1}^{t-1} \nabla_{W_2}\mathcal{F}(\mu_s)(x_s)}{Lt} \bigg( w_0 - \sum_{s=1}^{t-1} \langle \frac{1}{L}\nabla_{W_2}\mathcal{F}(\mu_s)(x_s), x_s - x_0 \rangle \bigg)~~~,~~~\mu_t = (\varphi_{t})_{\#}\mu_{0}. \label{eq:x_transform}
\end{equation}
   \ENDFOR
   \STATE {\bfseries Output:} $\mu_T$ or $\frac{1}{T}\sum_{t=1}^T \mu_t$.
\end{algorithmic}
\end{algorithm*}

\subsection{Theoretical Results}
In this case, under a rather strong sufficient condition (see App. \ref{sec:proofs}), we can show that the average of the betting distributions $\frac{1}{T}\sum_{t=1}^T \mu_t$ converges to the target $\pi$, at a rate determined by the betting strategy. For this result to hold, we will also require the following assumptions.

\begin{assumption} \label{assumption1}
The functional $\mathcal{F}:\mathcal{P}_2(\mathbb{R}^d) \rightarrow (-\infty,\infty]$ is (i) proper and lower semi-continuous, and (ii) geodesically convex. 
\end{assumption}

\begin{assumption} \label{assumption2}
There exists $L>0$ such that, for all $t\in[T]$,  $||\nabla_{W_2}\mathcal{F}(\mu_t)(x_t)||\leq L$.
\end{assumption}

Assumption \ref{assumption1}(i) is a general technical condition satisfied in all relevant cases \citep[e.g.,][Sec. 10]{Ambrosio2008}. Assumption \ref{assumption1}(ii) is a standard condition used in the analysis of existing algorithms such as LMC \cite{Wibisono2018,Durmus2019a}. This assumptions holds, for example, if $\mathcal{F}(\mu) = \mathrm{KL}(\mu|\pi)$, and the potential $U:\mathbb{R}^d\rightarrow\mathbb{R}$ is convex \citep[Sec. 9.4]{Ambrosio2008}. 

To our knowledge, Assumption \ref{assumption2} has also only explicitly appeared in the analysis of the Wasserstein subgradient descent algorithm in \citet{Guo2022}. However, similar conditions have also been used to analyse the convergence of SVGD to its population limit (\citealp{Liu2017a}, Theorem 3.2; \citealp{Korba2020}, Proposition 7). Meanwhile, convergence rates for SVGD (in the infinite particle regime) can be established under boundedness assumptions for the kernel function, as well as bounds on either the KSD \cite{Liu2017a}, the Stein Fisher information \cite{Korba2020}, or the Hessian of the potential \cite{Salim2022,Shi2022a} at each iteration.

\begin{proposition} \label{theorem:convergence_rate}
Let Assumptions \ref{assumption1} - \ref{assumption2} and Assumption \ref{sufficient_condition_2} (see App. \ref{sec:proofs}) hold. Then
    \begin{align}
    &\mathcal{F}\left(\frac{1}{T}\sum_{t=1}^T \mu_t\right) - \mathcal{F}(\pi)
    \leq \frac{L}{T} \bigg[ ~w_0 \label{eq:theorem1_bound}\\
    &+\int_{\mathbb{R}^d} \left|\left|x\right|\right| \sqrt{T \ln \left(1+ \frac{96K^2T^2\left|\left|x\right|\right|^2}{w_0^2}\right)} \pi(\mathrm{d}x) \nonumber \\
    &+\int_{\mathbb{R}^d} \left|\left|x\right|\right| \sqrt{T \ln \left(1+ \frac{96T^2\left|\left|x\right|\right|^2}{w_0^2}\right)} \mu_0(\mathrm{d}x) \bigg]. \nonumber 
\end{align}
\end{proposition}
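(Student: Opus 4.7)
The plan is to mirror the Euclidean coin-betting proof of Lemma~\ref{orabona_lemma} at the level of transport maps, using the geodesic convexity of $\mathcal{F}$ in place of Euclidean convexity. The key observation is that, for each initial point $x_0\sim\mu_0$, Algorithm~\ref{alg:param_free_grad_descent_v2} runs a pointwise instance of the KT coin-betting algorithm with bets $x_t - x_0$, initial wealth $w_0$, and coin sequence $c_t = -\tfrac{1}{L}\nabla_{W_2}\mathcal{F}(\mu_t)(x_t)$. Assumption~\ref{assumption2} guarantees $\|c_t\|\leq 1$, so the vector-valued KT wealth lower bound of \citet{Orabona2016} applies pointwise, yielding, for each $x_0$,
\[
    w_T(x_0) \;\geq\; h\!\left(\textstyle\sum_{t=1}^T c_t\right),\qquad h(s) \;=\; \frac{w_0}{K\sqrt{T}}\exp\!\left(\tfrac{\|s\|^2}{2T}\right).
\]

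Next I would convert this pointwise wealth bound into a pointwise regret bound in the usual Orabona--Pal fashion. Since $w_T(x_0) - w_0 = \sum_t \langle c_t, x_t - x_0\rangle$, Fenchel--Young applied to $\langle \sum_t c_t,\,v-x_0\rangle$ for any $v\in\mathbb{R}^d$ gives $\sum_{t=1}^T \langle -c_t, v - x_t\rangle \leq w_0 + h^{*}(v-x_0)$. Substituting $-c_t = \tfrac{1}{L}\nabla_{W_2}\mathcal{F}(\mu_t)(x_t)$, taking the comparator $v = T_{\mu_0}^{\pi}(x_0)$ (so that $v\sim\pi$), and integrating against $\mu_0$ yields
\[
    \sum_{t=1}^T \mathbb{E}_{\mu_0}\!\left[\langle \nabla_{W_2}\mathcal{F}(\mu_t)(x_t),\, x_t - T_{\mu_0}^{\pi}(x_0)\rangle\right] \;\leq\; L w_0 + L\,\mathbb{E}_{\mu_0}[h^{*}(T_{\mu_0}^{\pi}(x_0) - x_0)].
\]

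The crux is to lower-bound the left-hand side by $\sum_t[\mathcal{F}(\mu_t) - \mathcal{F}(\pi)]$. Geodesic convexity (Assumption~\ref{assumption1}(ii)) supplies this inequality, but only with the time-dependent comparator $T_{\mu_t}^{\pi}\circ\varphi_t$, whereas the Fenchel--Young step above forces the single, time-independent comparator $T_{\mu_0}^{\pi}$. Reconciling these is the main obstacle, and is precisely the role of the sufficient condition in Assumption~\ref{sufficient_condition_2} (stated in the appendix), which effectively requires that $T_{\mu_t}^{\pi}\circ\varphi_t$ agrees with $T_{\mu_0}^{\pi}$, or at least that the resulting inner-product inequality holds with the fixed comparator. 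Granted this, convexity of $\mathcal{F}$ together with Jensen's inequality applied to $\bar{\mu}_T = \tfrac{1}{T}\sum_t\mu_t$ converts the summed excess functional values into an excess at the averaged measure.

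Finally, I would invoke the explicit Fenchel bound of \citet[Lemma 14]{Orabona2016}, namely $h^{*}(z) \leq \|z\|\sqrt{T\ln(1+cT^2\|z\|^2/w_0^2)}$ for a universal constant $c$ (with $K$-factors tracked through the conjugation), apply the triangle inequality $\|T_{\mu_0}^{\pi}(x_0)-x_0\|\leq \|T_{\mu_0}^{\pi}(x_0)\|+\|x_0\|$ together with the sub-additivity (up to constants) of $r\mapsto r\sqrt{T\ln(1+cT^2 r^2/w_0^2)}$, and use $(T_{\mu_0}^{\pi})_{\#}\mu_0 = \pi$ to re-express one of the two resulting integrals against $\pi$ rather than $\mu_0$. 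Dividing by $T$ produces the right-hand side of \eqref{eq:theorem1_bound}, with the asymmetric $K^2$-factor in the $\pi$-integral arising from how the constant $K$ in the KT wealth bound is absorbed into one of the two sub-additive pieces.
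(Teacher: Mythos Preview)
Your sketch conflates Assumption~\ref{sufficient_condition_2} with the paper's \emph{alternative} assumption (Assumption~B.1$'$). Assumption~\ref{sufficient_condition_2} is not a statement about $t_{\mu_t}^{\pi}\circ\varphi_t$ coinciding with a fixed $T_{\mu_0}^{\pi}$; it compares the squared norms of two \emph{accumulated gradient} vectors,
\[
v(x)=\textstyle\sum_t -\nabla_{W_2}\mathcal{F}(\mu_t)(t_{\pi}^{\mu_t}(x)),\qquad
\tilde v(x)=\textstyle\sum_t -\nabla_{W_2}\mathcal{F}(\mu_t)(\varphi_t\circ t_{\pi}^{\mu_0}(x)),
\]
and asks that $\|v\|^2-\|\tilde v\|^2\le 2L^2T\ln K$ pointwise. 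Accordingly, the paper's proof does \emph{not} pass through a single comparator $T_{\mu_0}^{\pi}$ and then split via a triangle inequality. Instead, after Jensen and geodesic convexity it algebraically decomposes the regret into two separate inner-product integrals against $\pi$, one involving $\langle u(x),x\rangle$ (with $u=v/L$) and one involving $\langle \tilde u(x),-t_{\pi}^{\mu_0}(x)\rangle$ (with $\tilde u=\tilde v/L$). It then applies Fenchel--Young \emph{twice}, once per term, after inserting a scalar $A=\int I(\tilde u)\,\mathrm{d}\pi/\int I(u)\,\mathrm{d}\pi$ so that the wealth function $I$ appears in both conjugations. Assumption~\ref{sufficient_condition_2} enters only at the very end, to show $A^{-1}\le K$; the $K^2$ in the $\pi$-integral is exactly $A^{-2}$, and has nothing to do with the universal constant in the KT wealth bound (that constant, $K_1=e\sqrt{\pi}$, is absorbed via $w_0(1-\tfrac{A}{2K_1\sqrt{T}})\le w_0$).

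Your final step also fails on its own terms: the map $r\mapsto r\sqrt{T\ln(1+cT^2r^2/w_0^2)}$ is convex with value $0$ at $0$, hence \emph{super}-additive on $[0,\infty)$, so you cannot bound $h^{*}(\|T_{\mu_0}^{\pi}(x_0)\|+\|x_0\|)$ from above by the sum of the two pieces. The paper avoids this entirely because the two integrals arise directly from the algebraic split, not from a norm decomposition.
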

\vspace{-7mm}
\begin{proof} 
See App. \ref{sec:proofs}.
\end{proof}
\vspace{-3mm}

The proof of Proposition \ref{theorem:convergence_rate} closely follows the proof used to establish the convergence rate of the parameter-free optimisation algorithm in \citet{Orabona2016}. In our case, however, it is no longer evident how to convert a lower bound on the wealth into an upper bound on the regret (see Lemma \ref{orabona_lemma}). In App. \ref{sec:proofs}, we provide a technical sufficient condition (Assumption \ref{sufficient_condition_2}) which allow us to obtain the rate in Proposition \ref{theorem:convergence_rate}. It is unclear, however, how to verify this condition in practice. We leave as an open question whether it is possible to obtain more easily verifiable conditions under which this result still holds.

\subsection{Practical Implementation}
In principle, Alg. \ref{alg:param_free_grad_descent_v2} requires knowledge of a bound on the Wasserstein gradients (see Assumption \ref{assumption2}). If such a constant is unknown in advance, then it can be adaptively estimated using a similar approach to the one proposed in \citet{Orabona2017}. We provide full details of this adaptive approach, which in practice we use in all of our numerical experiments, in App. \ref{sec:coinWGD_adaptive}. 

Alg. \ref{alg:param_free_grad_descent_v2} also assumes that it is possible to observe the sequence of vector fields $(\nabla_{W_2}\mathcal{F}(\mu_t)_{t\in[T]}$. In practice, this is unrealistic: these quantities depend on knowledge of the measures $(\mu_t)_{t\in[T]}$, which typically we cannot compute in closed form. Following existing ParVIs, a standard approach is to approximate these quantities using a set of interacting particles. In particular, suppose we initialise $\smash{
(x_0^{i})_{i=1}^N\stackrel{\text{i.i.d.}}{\sim} \mu_0(\mathrm{d}x)}$, with empirical law $\smash{\mu_0^N = \frac{1}{N}\sum_{i=1}^n \delta_{x_0^{i}}}$. We can then update the particles according to an empirical version of \eqref{eq:x_transform}. This yields, after each iteration, particles $(x_t^{i})_{i=1}^N$, with empirical distribution $\smash{\mu_t^{N} = \frac{1}{N}\sum_{i=1}^N\delta_{x_t^{i}}}$. 

This approach relies, crucially, on being able to compute or approximate $\smash{(\nabla_{W_2}\mathcal{F}(\mu_t^{N}))_{t\in[T]}}$, the Wasserstein gradients of $\mathcal{F}$ evaluated at $\smash{(\mu_t^{N})_{t\in[T]}}$. Fortunately, this is also central to existing particle-based sampling algorithms, including SVGD \cite{Liu2016a}, KSDD \cite{Korba2020}, and LAWGD \cite{Chewi2020}. We can thus take inspiration from these methods to compute or to approximate the required terms. In fact, for different choices of $\mathcal{F}$, and different approximations of $\nabla_{W_2}\mathcal{F}(\mu_t^{N})$, we obtain learning-rate free versions of SVGD (Alf. \ref{alg:param_free_svgd}), LAWGD (App. \ref{sec:coin-lawgd}), and KSDD (App. \ref{sec:coin-ksd}). We refer to these algorithms as Coin SVGD, Coin LAWGD, and Coin KSDD, respectively.

\textbf{Coin Stein Variational Gradient Descent}. 
\label{sec:coin-svgd}
\begin{algorithm*}[t]
   \caption{Coin Stein Variational Gradient Descent (Coin SVGD)}
   \label{alg:param_free_svgd}
\begin{algorithmic}
   \STATE {\bfseries Input:} initial measure $\mu_0\in\mathcal{P}_2(\mathbb{R}_{d})$, initial particles $(x_0^{i})_{i=1}^N \stackrel{\mathrm{i.i.d.}}{\sim}\mu_0$, initial wealth of particles $(w_0^{i})_{i=1}^N\in\mathbb{R}_{+}$, kernel $k$, gradient upper bound $L$. 
   \FOR{$t=1$ {\bfseries to} $T$}
   \FOR{$i=1$ {\bfseries to} $N$} 
   \STATE Compute, with $P_{\mu_s^N,k}\nabla_{W_2}\mathcal{F}(\mu_s^N)(\cdot)$ defined as in \eqref{eq:SVGD_grad}, \vspace{-1mm}
\begin{align}
\hspace{-7mm} x_{t}^{i} &= x_0^{i} -\frac{{\sum_{s=1}^{t-1} P_{\mu_s^N,k}\nabla_{W_2}\mathcal{F}(\mu_s^N)(x_s^{i})}}{Lt} \bigg( w_0^{i} - \sum_{s=1}^{t-1} \bigg\langle \frac{1}{L}P_{\mu_s^N,k}\nabla_{W_2}\mathcal{F}(\mu_s^N)(x_s^{i}), x_s^{i} - x_0^{i} \bigg\rangle \bigg),\quad 
\vspace{-2mm}
\end{align}
   \STATE Define $\smash{\mu_{t}^{N} = \frac{1}{N}\sum_{i=1}^N \delta_{x_t^{i}}}$. \vspace{2mm}
   \ENDFOR
   \ENDFOR
   \STATE {\bfseries Output:} $\mu_T^{N}$ or $\frac{1}{T}\sum_{t=1}^T \mu_t^{N}$.
\end{algorithmic}
\end{algorithm*}
We now provide further details on Coin SVGD. Let $\mathcal{F}(\mu) = \mathrm{KL}(\mu|\pi)$, with $\nabla_{W_2}\mathcal{F}(\mu) = \nabla \ln \frac{\mu}{\pi}$. Let $\smash{k:\mathbb{R}^d\times\mathbb{R}^d\rightarrow\mathbb{R}}$ denote a positive semi-definite kernel, and $\mathcal{H}_k$ the associated reproducing kernel Hilbert space (RKHS). Finally, let $P_{\mu,k}: L^2(\mu)\rightarrow L^2(\mu)$ denote the integral operator defined according to $P_{\mu,k} f(\cdot) = \int_{\mathbb{R}^d} k(x,\cdot) f(x)\mu(\mathrm{d}x)$.

Following \citet{Liu2016a}, suppose that we replace $\nabla_{W_2}\mathcal{F}(\mu)$ by $P_{\mu,k}\nabla_{W_2}\mathcal{F}(\mu)$ in Alg. \ref{alg:param_free_grad_descent_v2}, its image under the integral operator $P_{\mu,k}$. This essentially plays the role of the Wasserstein gradient in $\mathcal{H}_k$. Using integration by parts, and recalling that $\pi\propto e^{-U}$, it holds that   $P_{\mu,k}\nabla_{W_2}\mathcal{F}(\mu)=\mathbb{E}_{x\sim \mu}\left[k(x,\cdot) \nabla U(x) - \nabla_{1}k(x,\cdot)\right]$. Thus, in particular,
\begin{align}
    &P_{\mu_t^N,k}\nabla_{W_2}\mathcal{F}(\mu_t^N)(x_t^{i}) \nonumber \\
    &= \tfrac{1}{N}\textstyle\sum_{j=1}^N[k(x_t^{j},x_t^{i}) \nabla U(x_t^{j}) -  \nabla_{1}k(x_t^{j},x_t^{i})].  \label{eq:SVGD_grad}
\end{align}
Substituting this expression into Alg. \ref{alg:param_free_grad_descent_v2}, we arrive at a learning-rate free analogue of SVGD. This algorithm is summarised in Alg. \ref{alg:param_free_svgd}. We note this algorithm is not entirely tuning free, since it requires a choice of bandwidth for the kernel. In practice, however, this parameter can be tuned automatically using the median rule \cite{Liu2016a}.

\section{Numerical Results}
\label{sec:numerics}
In this section, we evaluate the numerical performance of Coin SVGD (Alg. \ref{alg:param_free_svgd}). We provide additional results for Coin LAWGD (Alg.  \ref{alg:param_free_lawgd}) and Coin KSDD (Alg.  \ref{alg:param_free_ksd}) in App. \ref{sec:numerics_LAWGD} and \ref{sec:numerics_KSD}. In all experiments, we implement the adaptive version of Coin SVGD (see App. \ref{sec:coinWGD_adaptive}). For both Coin SVGD and SVGD, we use the RBF kernel $k(x,x') = \exp(-\frac{1}{h}||x-x'||_2^2)$, with bandwidth chosen using the median heuristic in \citet{Liu2016a}. Additional implementation details and results are provided in App. \ref{sec:experimental-details}. Code to reproduce our numerical results can be found at \url{https://github.com/louissharrock/Coin-SVGD}.

\subsection{Toy Examples}
\label{sec:toy_examples}

We first illustrate the performance of Coin SVGD on a series of toy examples (see App. \ref{sec:toy-details} for full details). In Fig. \ref{fig:figure1} (see Sec. \ref{sec:intro}), we plot the samples generated by Coin SVGD and SVGD after $T=1000$ iterations, using $N=20$ particles. In all examples, Coin SVGD qualitatively appears to converge to the correct target distribution. 

In Fig. \ref{fig:figure1_KSD} - \ref{fig:KSD_v_N} (App. \ref{sec:toy-details}), we provide a more quantitative assessment of our method, plotting the KSD and the energy distance \cite{Szekely2013} between the targets in Fig. \ref{fig:figure1}, and the approximations obtained by Coin SVGD and SVGD. Our results indicate that the performance of Coin SVGD is competitive with the best performance of SVGD (i.e., the performance of SVGD using the optimal but {a priori} unknown learning rate) (Fig. \ref{fig:figure1_KSD}) and that Coin SVGD often converges more rapidly to the target distribution (Fig. \ref{fig:figure1_KSD_vs_t}). They also confirm, as expected, that Coin SVGD generates increasingly accurate posterior approximations as the number of particles $N$ increases (Fig. \ref{fig:figure1_energy} and Fig. \ref{fig:KSD_v_N}).

\subsection{Bayesian Independent Component Analysis}
\label{sec:bayes_ica}
We next consider a Bayesian independent component analysis (ICA) model \citep[e.g.,][]{Comon1994}. Suppose we observe $\boldsymbol{x} = (x_1,\dots,x_p)\in\mathbb{R}^p$. The task of ICA is to infer the `unmixing matrix' $\mathbf{W}\in\mathbb{R}^{p \times p}$ such that $\boldsymbol{x}= \mathbf{W}^{-1} \boldsymbol{s}$, where $\boldsymbol{s}=(s_1,\dots,s_p)\in\mathbb{R}^p$ denote the latent independent sources. We will assume each $s_i$ has the same density: $s_i\sim p_s$. The log-likelihood of this model is then given by $\log p(\boldsymbol{x}|\mathbf{W}) = \log |\mathbf{W}| + \sum_{i=1}^p p_s([\mathbf{W}\boldsymbol{x}]_{i})$. For the prior, we assume that the entries of $\mathbf{W}$ are i.i.d., with law $\mathcal{N}(0,1)$. The posterior is then $p(\mathbf{W}|\boldsymbol{x}) \propto p(\boldsymbol{x}|\mathbf{W}) p(\mathbf{W})$, with
\begin{equation}
\nabla_{\mathbf{W}} \log p(\mathbf{W}|\boldsymbol{x}) = (\mathbf{W}^{-1})^{\top} - \frac{p'_s(\mathbf{W}\boldsymbol{x})}{p_s(\mathbf{W}\boldsymbol{x})} \boldsymbol{x}^{\top} - \mathbf{W}. 
\end{equation}
Following \citet{Korba2021}, we choose $p_s$ such that $\smash{{p_s'(\cdot)}/{p_s(\cdot)} = \tanh(\cdot)}$. We are interested in sampling from $p(\mathbf{W}|\boldsymbol{x})$. In our experiments, we generate 1000 samples of $\boldsymbol{x}$ from the ICA model, for $p\in\{2,4, 8, 16\}$. We use $N=10$ particles, so that each algorithm returns 10 estimated unmixing matrices $(\bar{\boldsymbol{W}}_i)_{i=1}^{10}$. We then repeat each experiment 50 times, thus obtaining 500 estimates for each method. To assess convergence, we compute the Amari distance \cite{Amari1995} between the true $\mathbf{W}$ and the estimates 
generated by each algorithm. 
This is equal to zero if and only if the two matrices are the same up to scale and permutation. 
We run SVGD for three learning rates: an `optimal' rate, which we determine by running SVGD over a range of six candidate values $\gamma\in[1\times 10^{-5},1\times 10^{0}]$, and selecting the one which returns the lowest average Amari distance, a smaller rate, and a larger rate. We also include the results of a random output, where the estimated matrices have entries which are generated i.i.d. $\mathcal{N}(0,1)$. 

\begin{figure}[t!]
\centering
\subfigure[$p=2$. \label{fig:BayesICAp2}]{\includegraphics[trim=0 0mm 5 0mm, clip, width=.467\columnwidth]{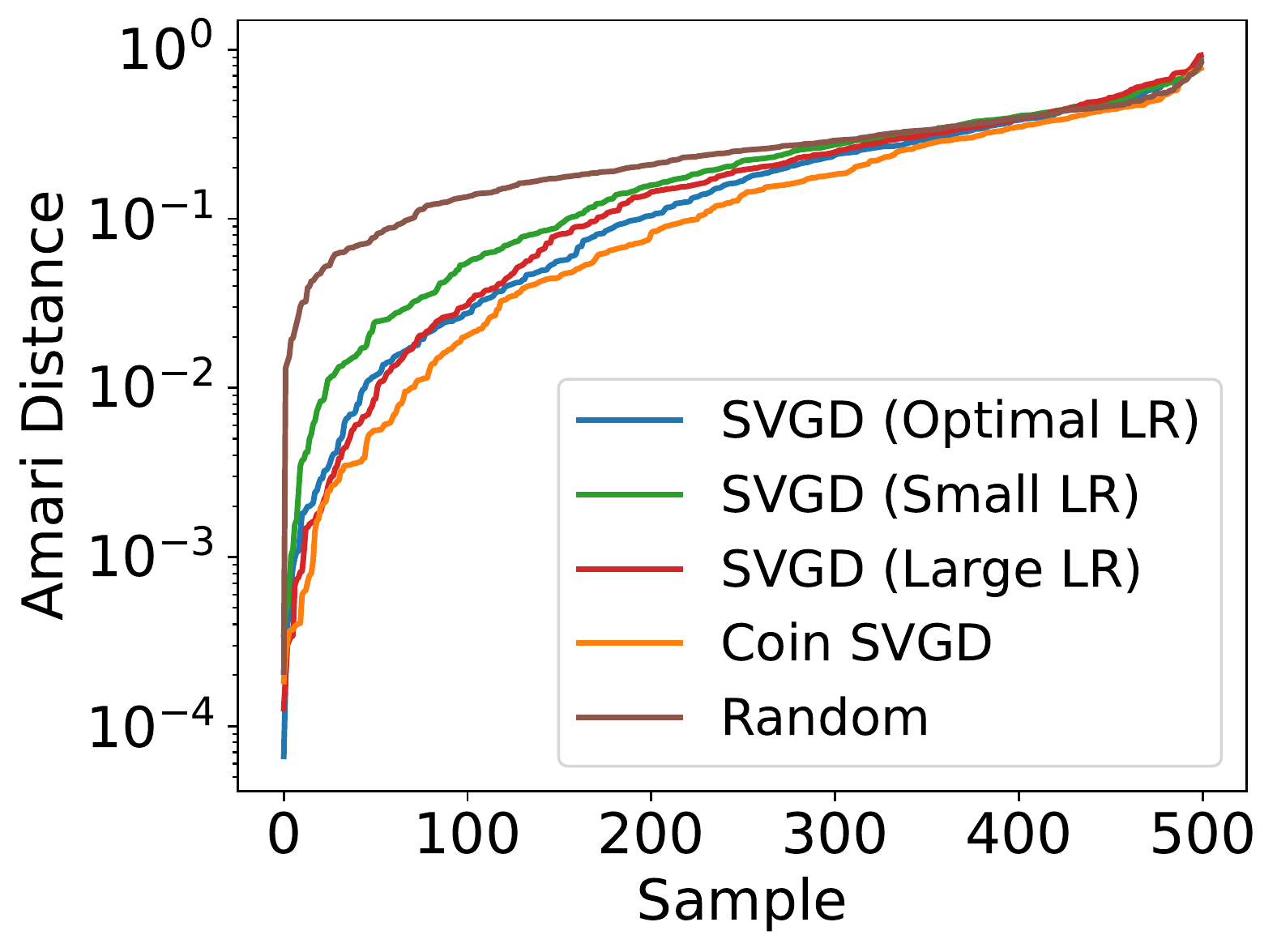}} \hfill
\subfigure[$p=4$.]{\includegraphics[trim=0 0mm 5 0mm, clip, width=.467\columnwidth]{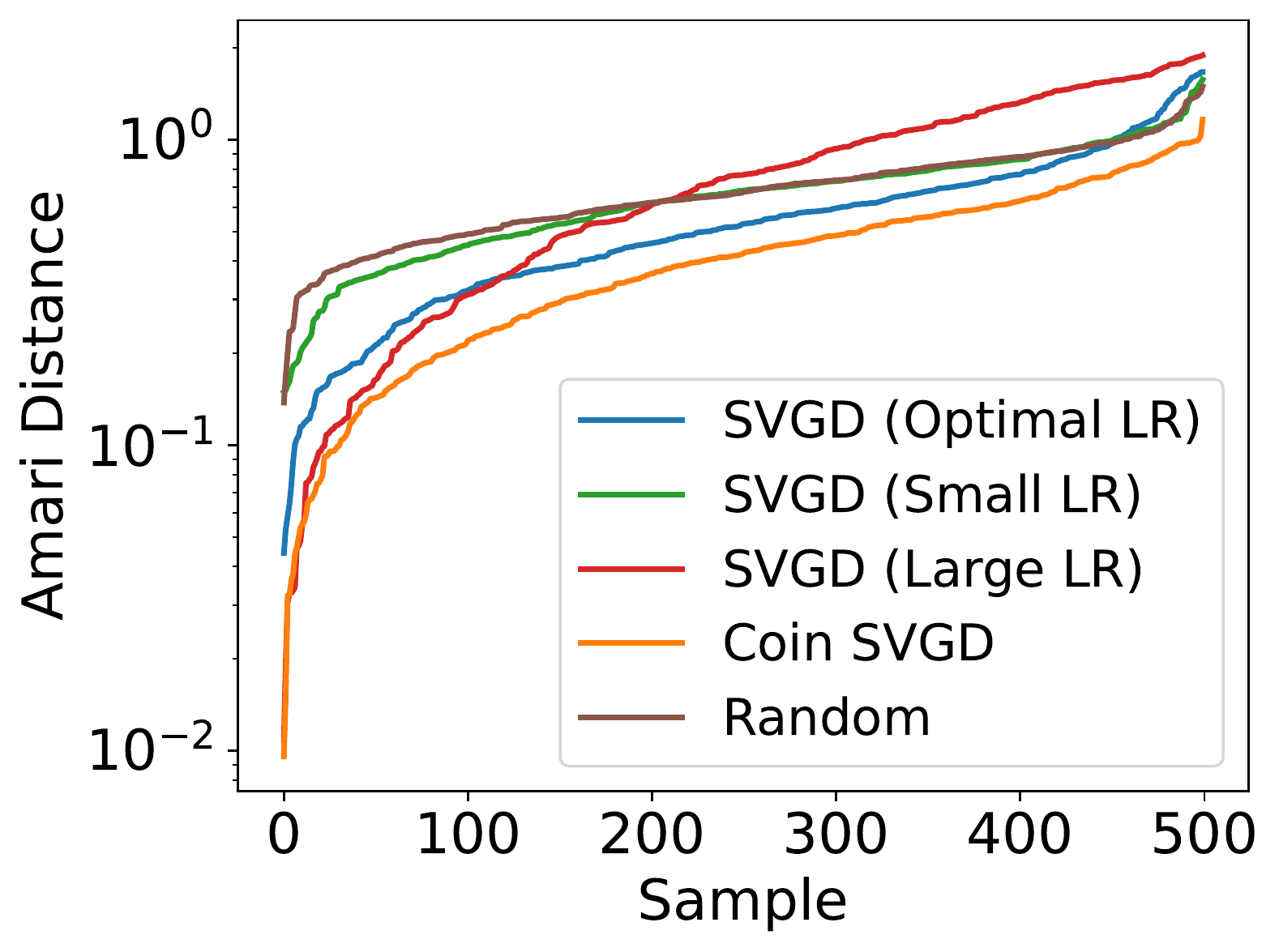}}
\subfigure[$p=8$.]{\includegraphics[trim=0 0mm 5 0mm, clip, width=.467\columnwidth]{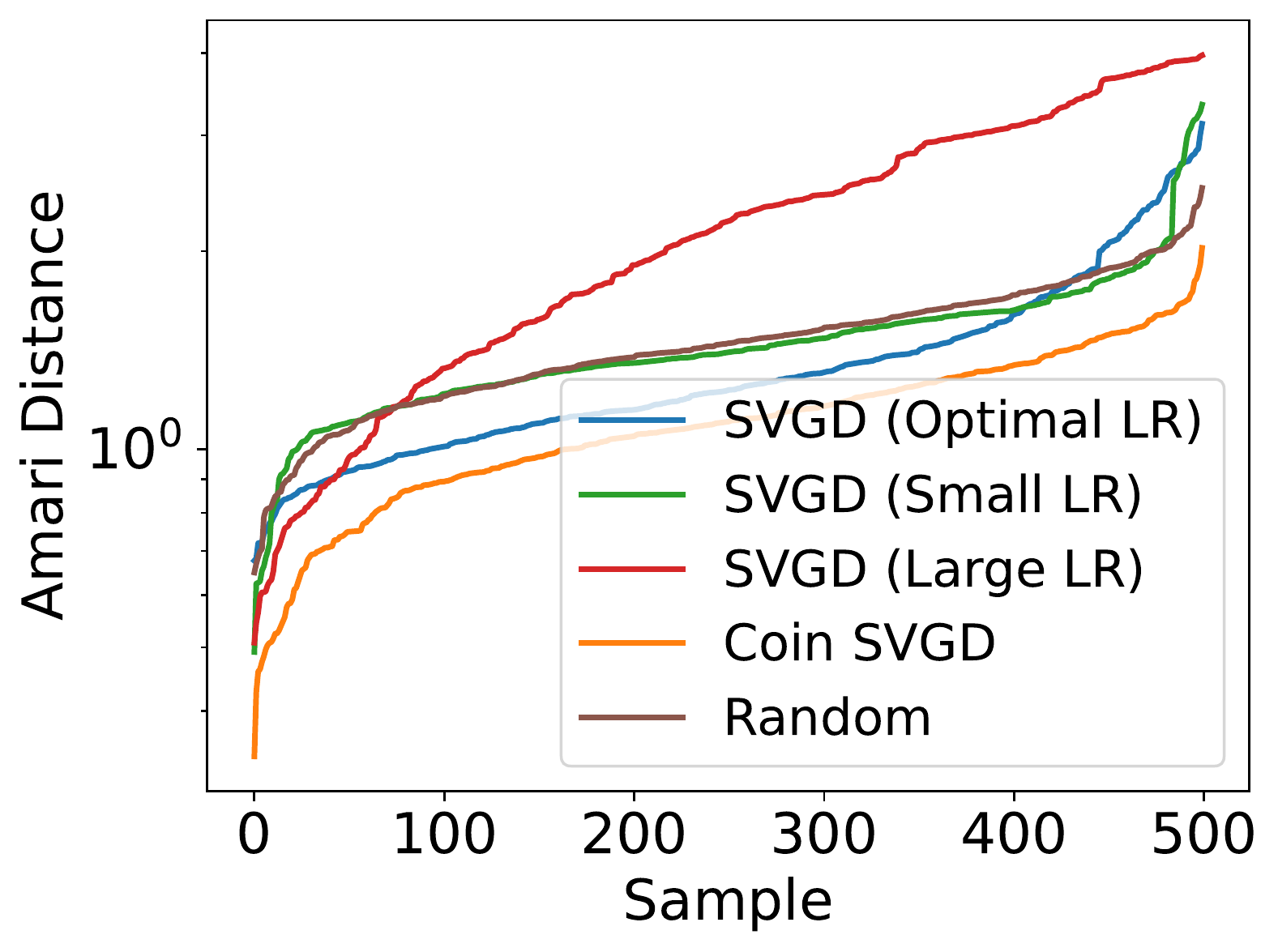}} \hfill
\subfigure[$p=16$.]{\includegraphics[trim=0 0mm 5 0mm, clip, width=.467\columnwidth]{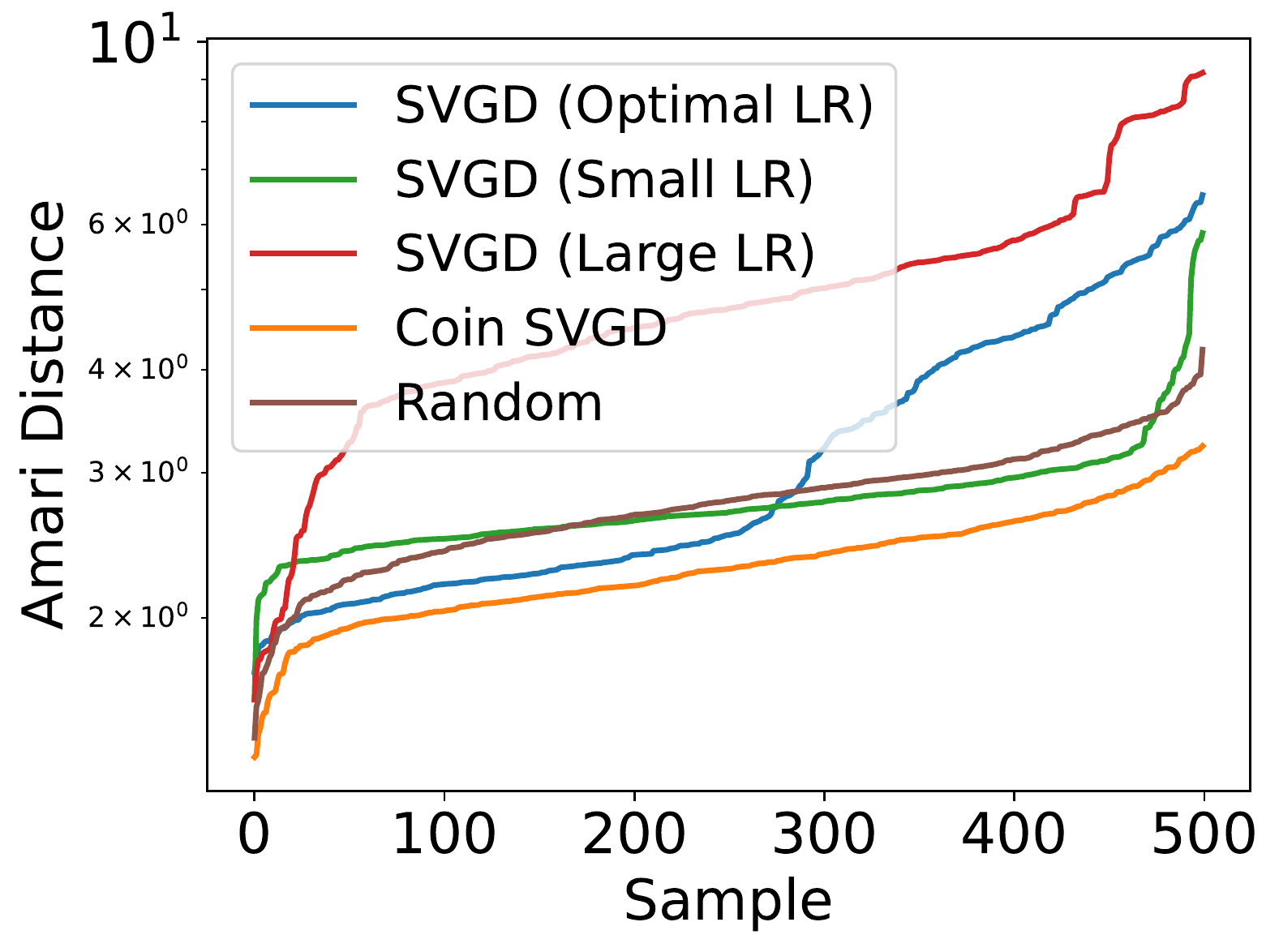}}
\vspace{-2mm}
\caption{\textbf{Results for the Bayesian ICA model}. Amari distances between the true unmixing matrix $\boldsymbol{W}$, and the 500 approximate unmixing matrices output by Coin SVGD and SVGD, for three different values of the learning rate.}
\label{fig:BayesICA}
\vspace{-5mm}
\end{figure}

Our results are plotted in Fig. \ref{fig:BayesICA}. For lower dimensional data, Coin SVGD performs similarly to SVGD with the optimal learning rate (see Fig. \ref{fig:BayesICAp2}). In fact, in this case, using a smaller or larger learning rate does not have a significant effect on the performance of SVGD. On the other hand, for higher dimensions, the gap between the two algorithms increases, as does the importance of choosing a good learning rate for SVGD. In particular, for $p\in\{4,8,16\}$, Coin SVGD increasingly {outperforms} SVGD, for any choice of the learning rate. These results perfectly illustrate the robustness of Coin SVGD: our algorithm performs consistently well across these experiments, even as the dimension varies.

\subsection{Bayesian Logistic Regression}
\label{sec:bayes-lr}
We next consider the Bayesian logistic regression model for binary classification, as described in \citet{Gershman2012}. Let $\mathcal{D} = (\boldsymbol{x}_i,y_i)_{i=1}^N$ be a dataset with feature vectors $\boldsymbol{x}_i\in\mathbb{R}^p$, and binary labels $y_i\in\{-1,1\}$. We assume that $p(y_i=1 |\boldsymbol{x}_i, \boldsymbol{w}) = (1+\exp(-\boldsymbol{w}^T \boldsymbol{x}_i))^{-1}$, for some $\boldsymbol{w}\in\mathbb{R}^p$. We place a Gaussian prior $p(\boldsymbol{w}|\alpha) = \mathcal{N}(\boldsymbol{w}|0,\alpha^{-1})$ on the regression weights $\boldsymbol{w}$, and a Gamma prior $p(\alpha) = \mathrm{Gamma}(\alpha | 1, 0.01)$ on $\alpha \in\mathbb{R}_{+}$. We would like to sample from $p(\boldsymbol{\theta}|\mathcal{D})$, where the parameter of interest is $\boldsymbol{\theta} = [\boldsymbol{w},\log \alpha]^T\in\mathbb{R}^{p+1}$. We test our algorithm using the Covertype dataset, which consists of 581,012 data points and 54 features. We randomly partition the data into a training dataset (70\%), validation dataset (10\%), and testing dataset (20\%). We run each algorithm with $N=50$ particles for $T=5000$ iterations, and compute stochastic gradients using mini-batches of size 100. The results are averaged over 20 random train-test splits.

In Fig. \ref{fig:BayesLRa} - \ref{fig:BayesLRb}, we plot the test accuracy and the negative log-likelihood for Coin SVGD, and for SVGD as a function of the step size. Meanwhile, in Fig. \ref{fig:BayesLRc} - \ref{fig:BayesLRd}, we plot the test accuracy and the negative log-likelihood against the number of iterations. Once again, we consider three learning rates for SVGD: the optimal learning rate as determined by the results in Fig. \ref{fig:BayesLRa}, a smaller learning rate, and a larger learning rate. Similar to before, the performance of Coin SVGD is similar to the best performance of SVGD. On the other hand, when the learning rate is too small or too large, SVGD either converges slowly (green lines, Fig. \ref{fig:BayesLRc} - \ref{fig:BayesLRd}) or is unstable (red lines, Fig. \ref{fig:BayesLRc} - \ref{fig:BayesLRd}).

\begin{figure}[t!]
\centering
\subfigure[Test Accuracy. \label{fig:BayesLRa}]{\includegraphics[trim=0 0mm 15mm 13mm, clip, width=.485\columnwidth]{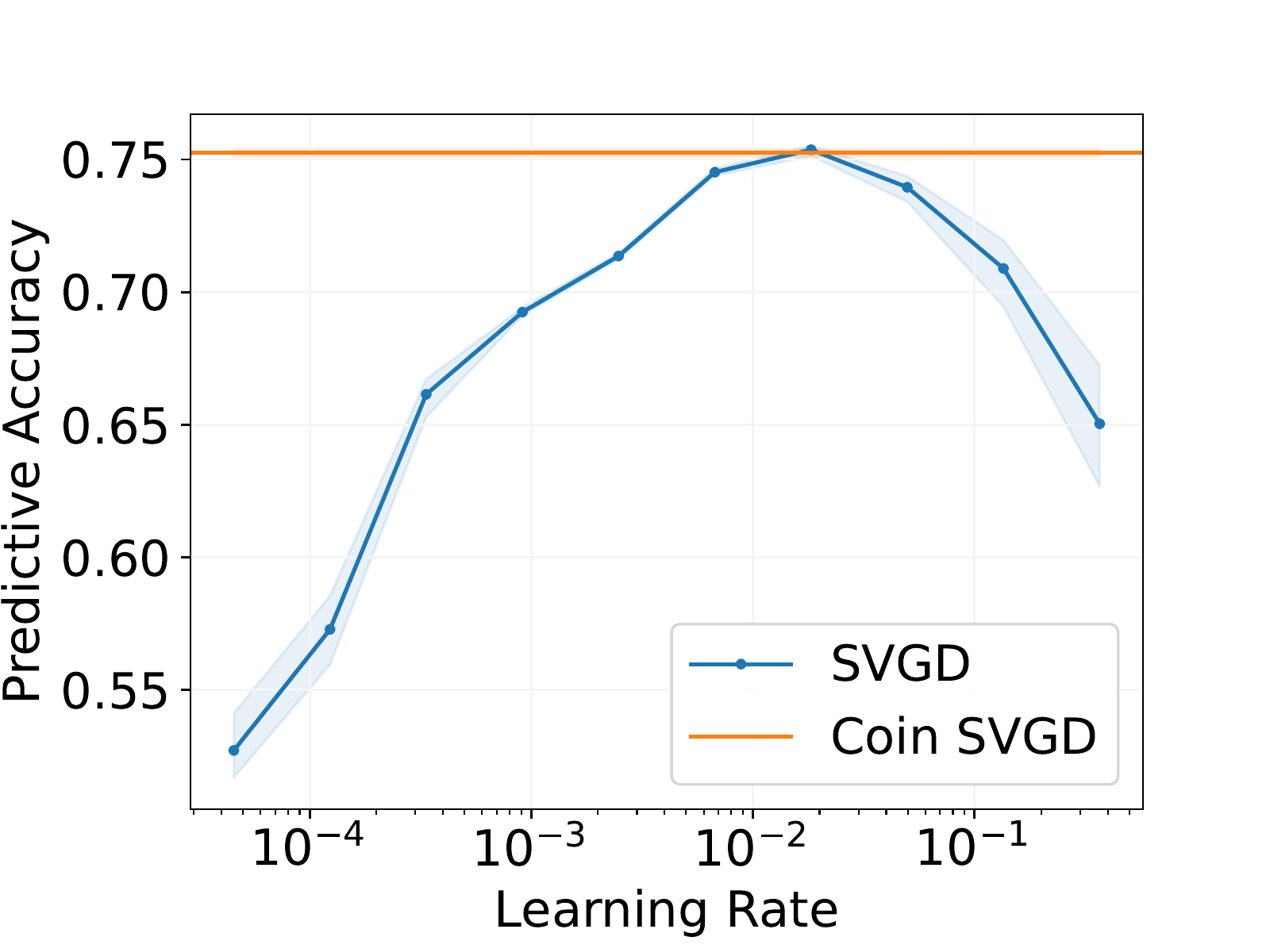}} \hfill
\subfigure[Negative Log-Likelihood. \label{fig:BayesLRb}]{\includegraphics[trim=0 0 15mm 13mm, clip, width=.485\columnwidth]{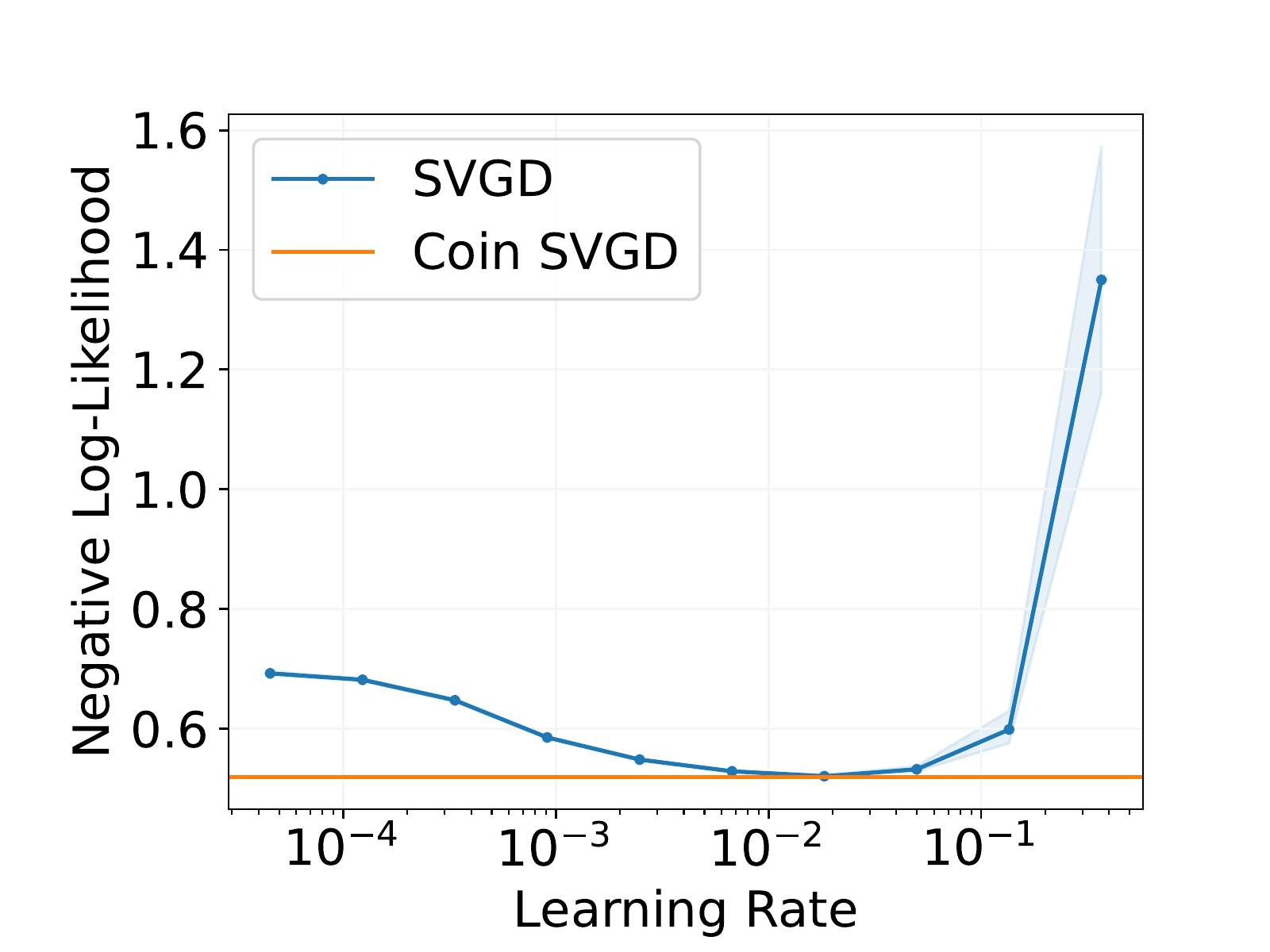}} \\
\subfigure[Test Accuracy. \label{fig:BayesLRc}]{\includegraphics[trim=0 0mm 0mm 0mm, clip, width=.485\columnwidth]{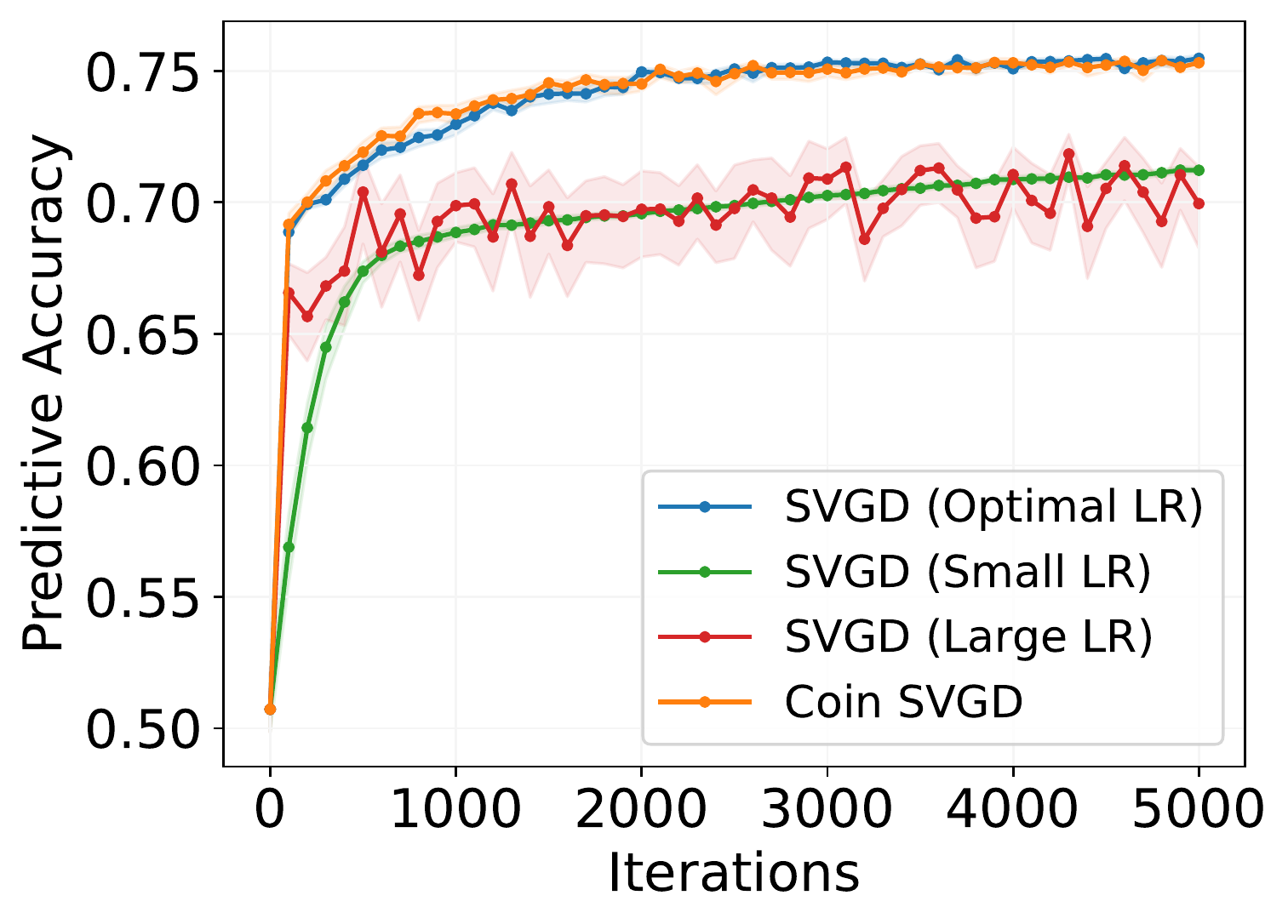}} \hfill
\subfigure[Negative Log-Likelihood. \label{fig:BayesLRd}]{\includegraphics[trim=0 0 0mm 0mm, clip, width=.495\columnwidth]{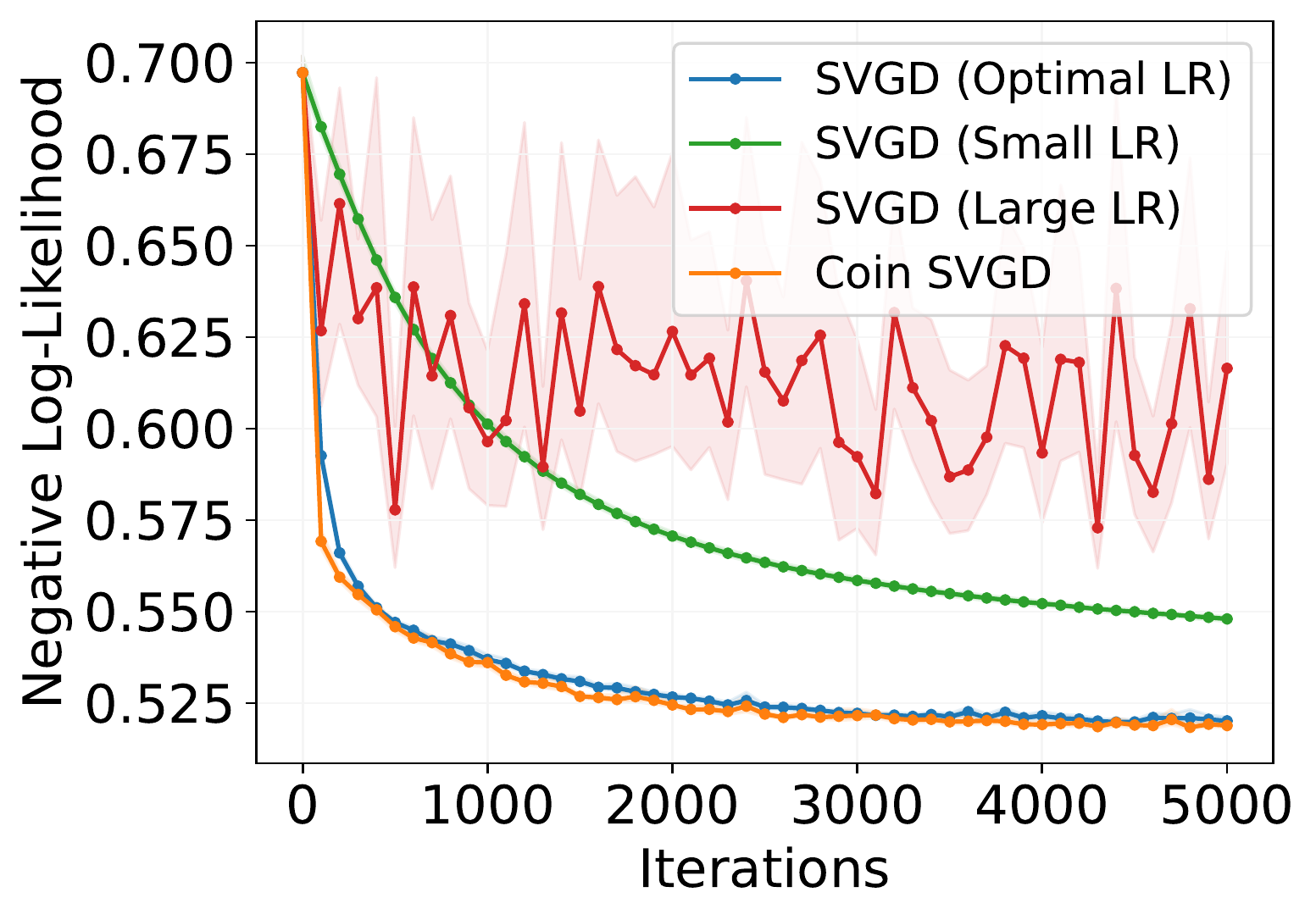}}
\vspace{-2mm}
\caption{\textbf{Results for the Bayesian logistic regression model}. (a)-(b). Test accuracy and negative log-likelihood for Coin SVGD and SVGD, as a function of the learning rate. (c)-(d).  Test accuracy and negative log-likelihood for Coin SVGD and SVGD (three learning rates) as a function of the number of iterations.}
\label{fig:BayesLR2}
\vspace{-5mm}
\end{figure}

\subsection{Bayesian Neural Network}
\label{sec:bnn}

We next consider a Bayesian neural network model. Our settings are identical to those given in \citet{Liu2016a}; see also \citet{Hernandez-Lobato2015}. In particular, we use a two-layer neural network with 50 hidden units with $\mathrm{RELU}(x) = \max(0,x)$ as the activation function. We assume the output is normal, and place a $\mathrm{Gamma}(1,0.1)$ prior on the inverse covariance. We then assign an isotropic Gaussian prior to the neural network weights. We test the performance of our algorithms on several UCI datasets. The datasets are partitioned into 90\% for training and 10\% for testing, and our results are averaged over 20 random train-test splits. Finally, we use $N=20$ particles, and consider a snapshot of the performance after $T=2000$ iterations.

Our results, shown in Fig. \ref{fig:BayesNN} (see also Fig. \ref{fig:BayesNN-additional} in App. \ref{sec:bnn-extra}), indicate that SVGD slightly outperforms Coin SVGD for well chosen learning rates, but significantly under-performs Coin SVGD when the learning rate is too small or too large. For certain datasets, the performance of Coin SVGD is close to the optimal performance of SVGD, while for others, there remains a reasonable performance gap. We expect that this could be reduced using recent advancements in parameter-free stochastic optimisation \citep[e.g.][]{Chen2022,Chen2022a}.

\begin{figure}[t!]
\centering
\subfigure[Boston.]{\includegraphics[trim=0 0mm 15mm 13mm, clip, width=.49\columnwidth]{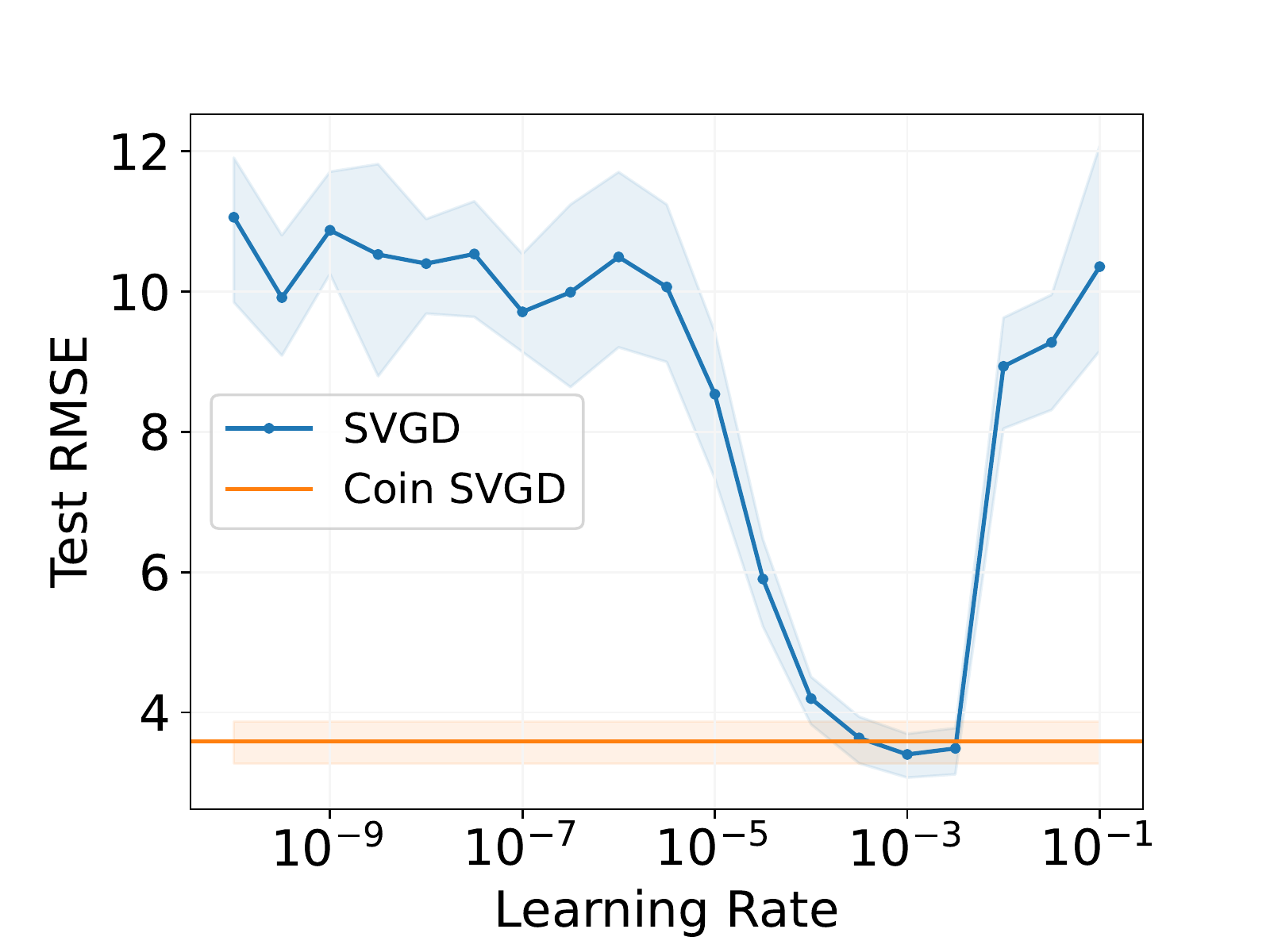}} \hfill
\subfigure[Concrete.]{\includegraphics[trim=0 0mm 15mm 13mm, clip, width=.49\columnwidth]{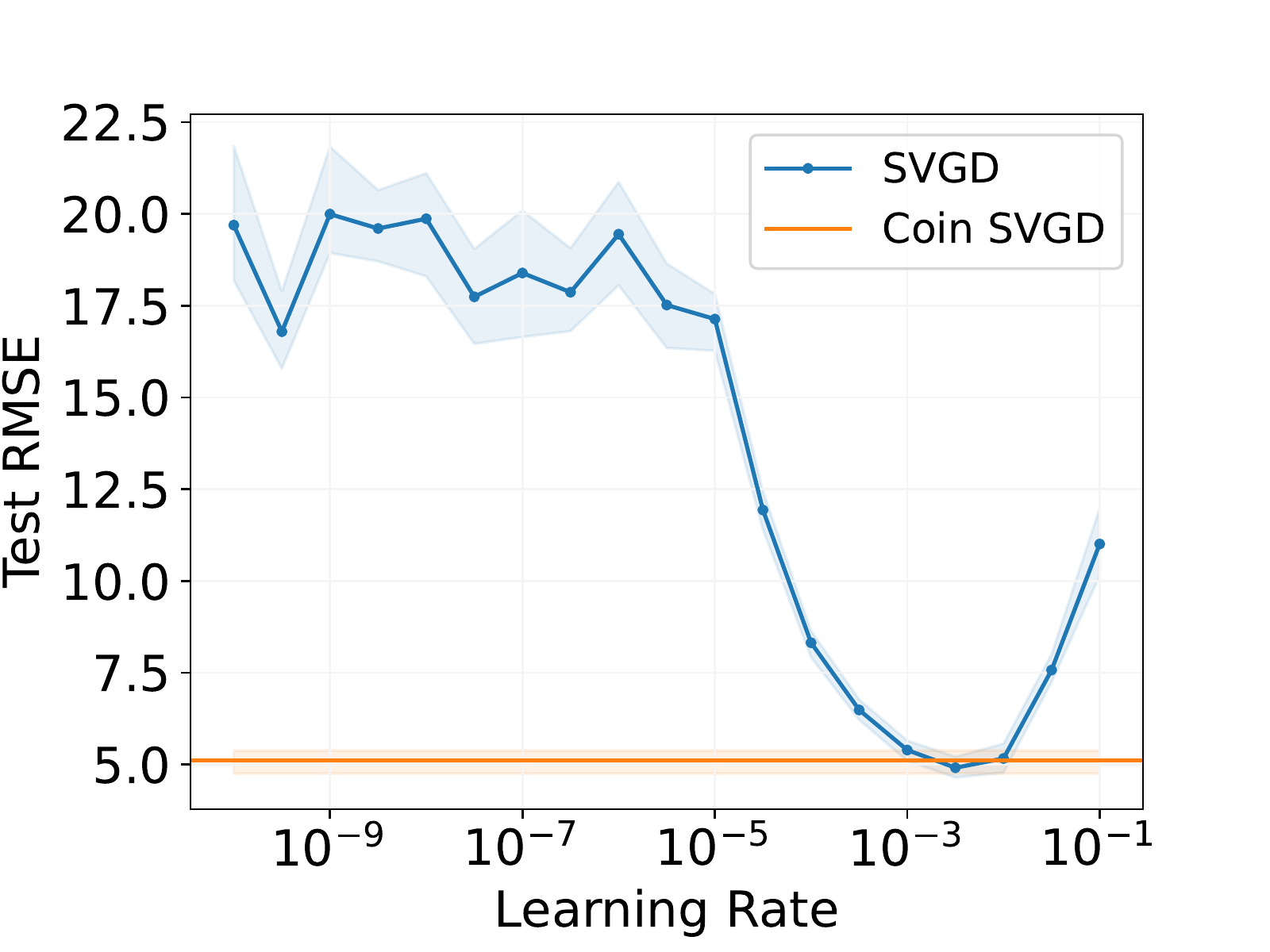}} \hfill
\subfigure[Energy.]{\includegraphics[trim=0 0mm 15mm 13mm, clip, width=.49\columnwidth]{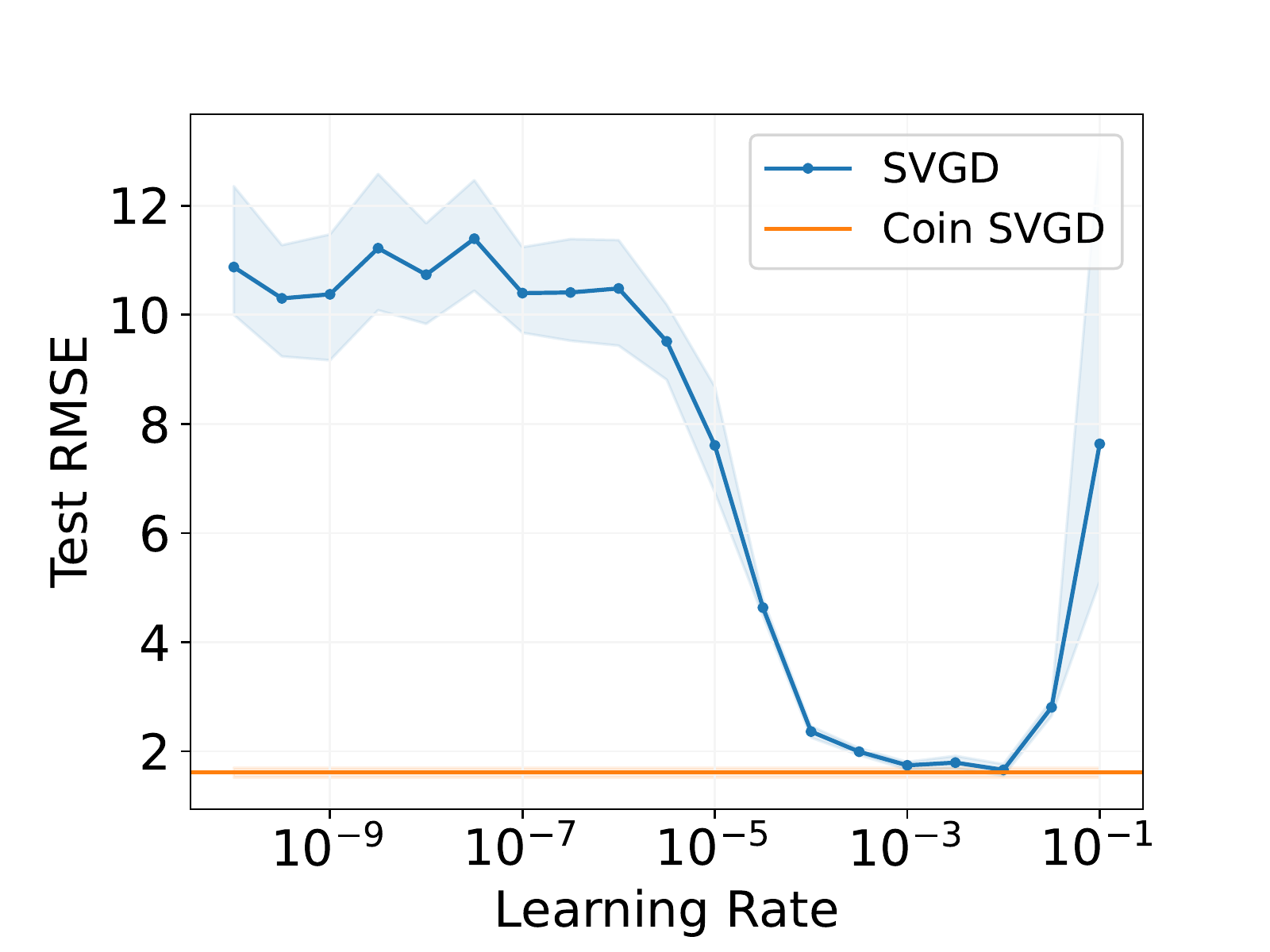}} \hfill
\subfigure[Kin8nm.]{\includegraphics[trim=0 0mm 15mm 13mm, clip, width=.49\columnwidth]{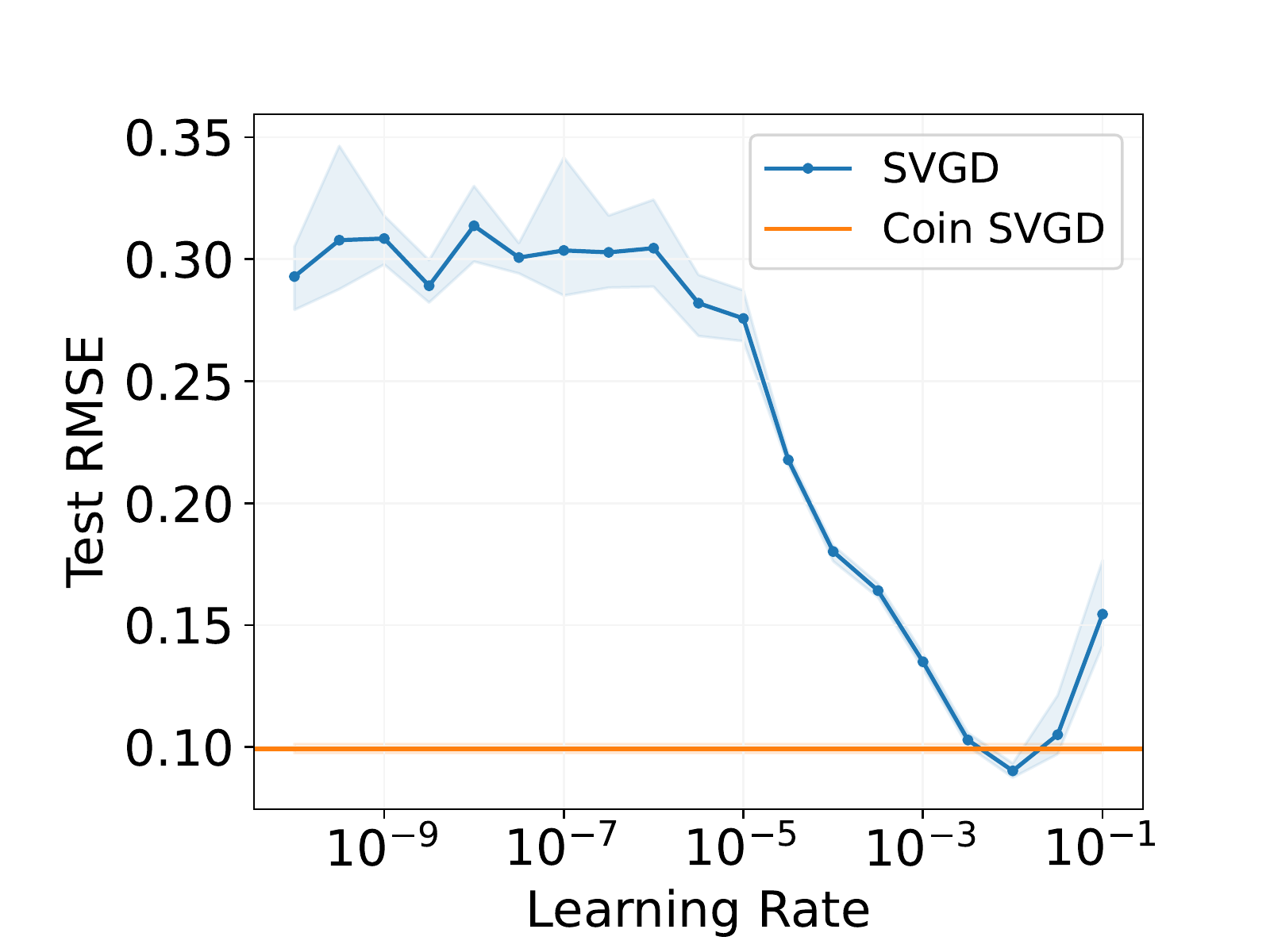}} \hfill
\vspace{-4mm}
\caption{\textbf{Results for the Bayesian neural network}. Average test RMSE for Coin SVGD and SVGD, as a function of the learning rate, after $T=2000$ iterations, for several UCI datasets.}
\label{fig:BayesNN}
\vspace{-4mm}
\end{figure}

\subsection{Bayesian Probabilistic Matrix Factorisation}
\label{sec:PCA}
Finally, we consider a Bayesian probabilistic matrix factorisation (PMF) model \citep{Salakhutdinov2008}. This model is defined as follows. Let $\mathbf{R}\in\mathbb{R}^{N\times M}$ be a matrix of ratings for $N$ users and $M$ movies, where $R_{ij}$ is the rating user $i$ gave to movie $j$. Define matrices $\mathbf{U}$ and $\mathbf{V}$ for users and movies, respectively, where $\mathbf{U}_i\in\mathbb{R}^d$ and $\mathbf{V}_j\in\mathbb{R}^d$ are $d$-dimensional latent feature vectors for user $i$ and movie $j$. The likelihood for the rating matrix is given by
\begin{equation}
    p(\mathbf{R}|\mathbf{U},\mathbf{V},\alpha) = \prod_{i=1}^N \prod_{j=1}^M \left[ \mathcal{N}(R_{ij}|\mathbf{U}_i^T \mathbf{V}_j, \alpha^{-1}) \right]^{I_{ij}},
\end{equation}
where $I_{ij}$ denotes an indicator variable which equals $1$ if users $i$ gave a rating for movie $j$. The priors for the users and movies are $p(\mathbf{U}|\mu_{\mathbf{U}},\Lambda_{\mathbf{U}}) = \prod_{i=1}^N \mathcal{N}(\mathbf{U}_i|\mu_{\mathbf{U}},\Lambda_{\mathbf{U}}^{-1})$ and $p(\mathbf{V}|\mu_{\mathbf{V}},\Lambda_{\mathbf{V}}) = \prod_{j=1}^M \mathcal{N}(\mathbf{V}_j|\mu_{\mathbf{U}},\Lambda_{\mathbf{U}}^{-1})$, with prior distributions on the hyper-parameters, for $\mathbf{W}=\mathbf{U}$ or $\mathbf{V}$, given by $\mu_{\mathbf{W}}\sim \mathcal{N}(\mu_{\mathbf{W}}|\mu_0,\Lambda_{\mathbf{W}})$ and $\Lambda_{\mathbf{W}}\sim \Gamma(a_0,b_0)$. The parameters of interest are then $\theta = (\mathbf{U},\mu_{\mathbf{U}}, \Lambda_{\mathbf{U}}, \mathbf{V}, \mu_{\mathbf{V}}, \Lambda_{\mathbf{V}})$. In our experiments, we use hyper-parameters $(\alpha, \mu_0, a_0, b_0) = (3,0,4,5)$, and set the latent dimension $d=20$.

We test our algorithm on the MovieLens dataset \cite{Harper2015}, which consists of 100,000 ratings, taking values \{1,2,3,4,5\}, for 1,682 movies from 943 users. The data are split into 80\% for training and 20\% for testing. We use $N=50$ particles; a batch size of 1000 for stochastic gradients; and average the results over $10$ random seeds. Our results are shown in Fig. \ref{fig:BayesPMF}, where we plot the RMSE for SVGD and Coin SVGD, as a function of the learning rate, after $T=1000$ and $T=2000$ iterations. We also compare against the stochastic gradient Langevin dynamics (SGLD) algorithm \cite{Welling2011}. In this case, Coin SVGD outperforms SVGD for almost all learning rates, and significantly outperforms SGLD.

\begin{figure}[t!]
\centering
\subfigure[$T=1000$.]{\includegraphics[trim=0 0mm 15mm 12mm, clip, width=.485\columnwidth]{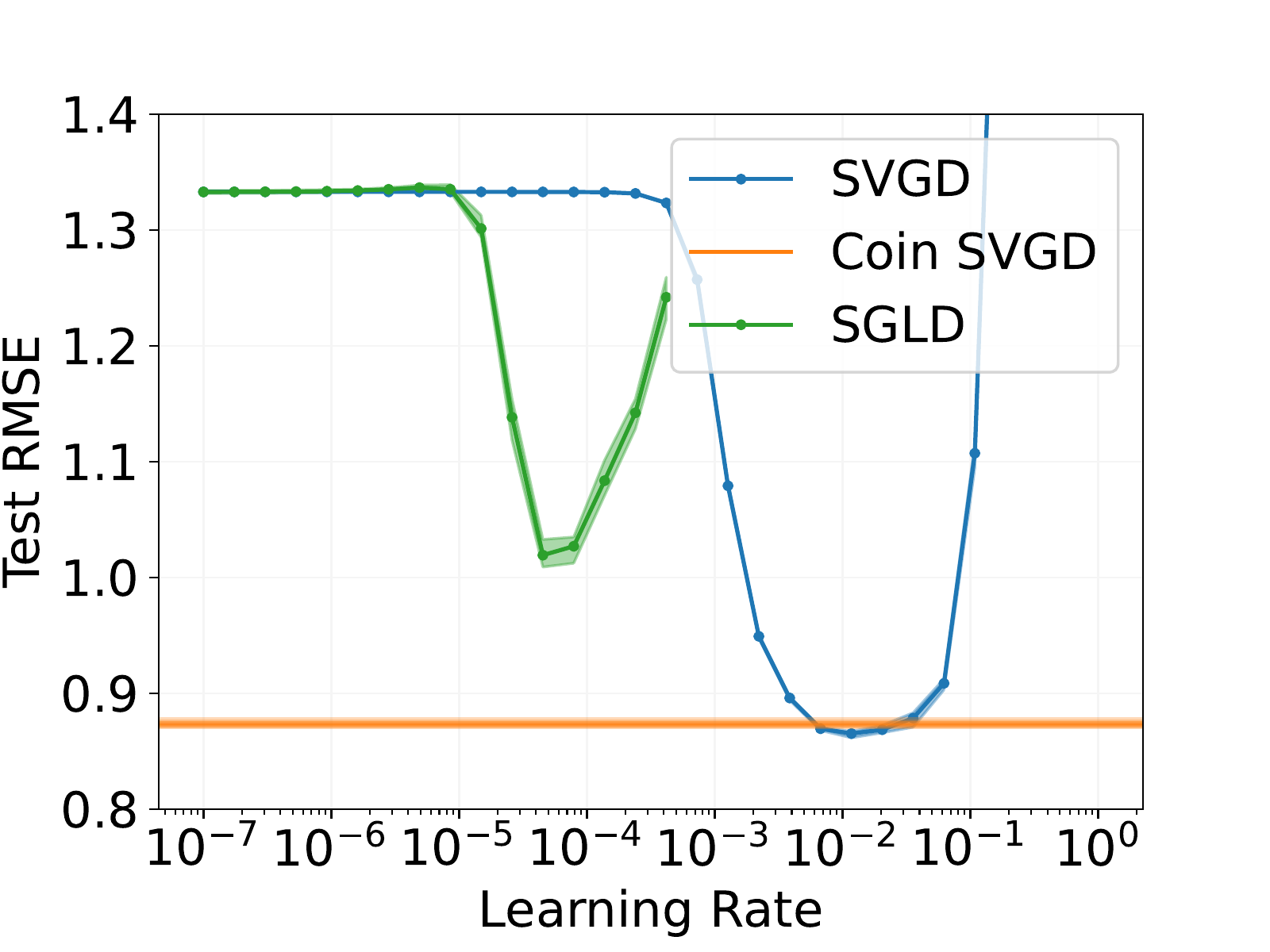}} \hfill
\subfigure[$T=2000$.]{\includegraphics[trim=0 0 15mm 12mm, clip, width=.485\columnwidth]{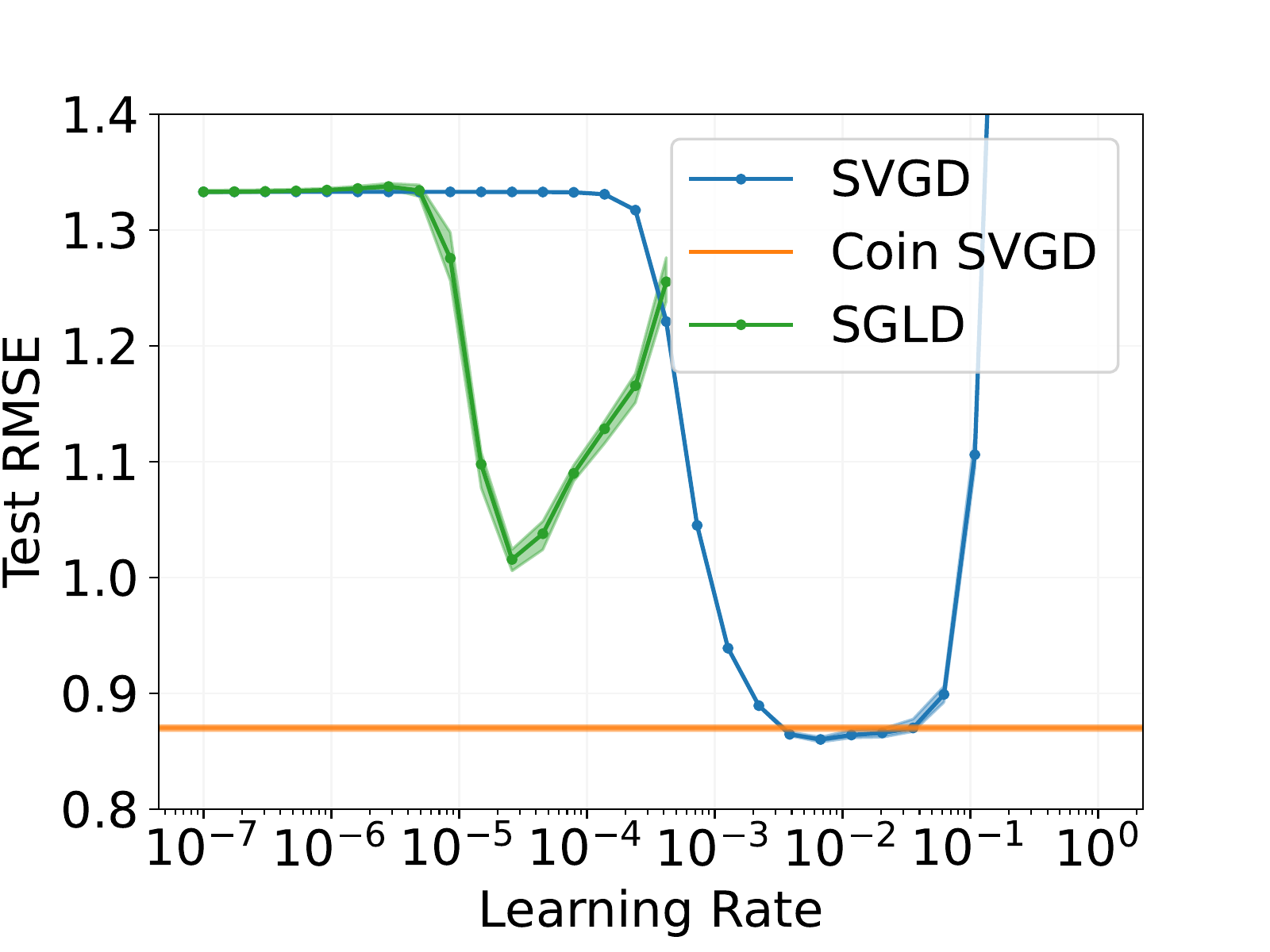}}
\vspace{-3.5mm}
\caption{\textbf{Results for the Bayesian probabilistic matrix factorisation model}. Test RMSE for Coin SVGD, SVGD, and SGLD, as a function of the learning rate, after $T\in\{1000,2000\}$ iterations.}
\label{fig:BayesPMF}
\vspace{-5.5mm}
\end{figure}

\section{Discussion}
In this paper, we introduced a suite of new algorithms for Bayesian inference which are entirely learning-rate free, inspired by coin betting techniques from convex optimisation. In empirical experiments, our coin sampling algorithms - most notably Coin SVGD - demonstrated comparable performance to their learning-rate dependent counterparts, with no need for any hyperparameter tuning.

We highlight several opportunities for future work. In terms of theory, the main open challenge is to establish the convergence of our algorithms under more easily verifiable assumptions. In this paper, we were able to obtain a rather technical condition which was sufficient for convergence. However, it remains unclear how to verify this condition in practice, even for relatively simple target distributions. In terms of methodology, a natural extension of this work is to apply a similar treatment to the many recent variants of SVGD \citep[e.g.][]{Detommaso2018,Wang2019a,Chen2020,Gong2021}.

\bibliographystyle{icml2023}
\bibliography{references}

\newpage
\appendix
\onecolumn

\section{Background}
\label{sec:wasserstein_details}

\subsection{Additional Properties of The Wasserstein Space}
One important property of the Wasserstein space $(\mathcal{P}_2(\mathbb{R}^d), W_2)$ is that, under appropriate regularity conditions, there exists a unique optimal coupling $\gamma_{*}\in\Gamma(\mu,\nu)$ which minimises the transport cost $\smash{\int_{\mathbb{R}^d\times\mathbb{R}^d}||x-y||^p \gamma_{*}(\mathrm{d}x,\mathrm{d}y)}$. This optimal coupling is of the form
\begin{equation}
\gamma = \smash{(\boldsymbol{\mathrm{id}}\times \boldsymbol{t}_{\mu}^{\nu})_{\#}\mu},
\end{equation}
where $\boldsymbol{\mathrm{id}}:\mathbb{R}^d\rightarrow \mathbb{R}^d$ is the identity map, and $\boldsymbol{t}_{\mu}^{\nu}$ is known as the optimal transport map \cite{Brenier1991,Gigli2011}. It follows that $(\boldsymbol{t}_{\mu}^{\nu})_{\#}\mu = \nu$ and \begin{equation}
    W_{2}^2(\mu,\nu) = \int_{\mathbb{R}^d}||x-y||^2\gamma_{*}(\mathrm{d}x,\mathrm{d}y)= \int_{\mathbb{R}^d} ||x-\boldsymbol{t}_{\mu}^{\nu}(x)||^2\mathrm{d}x.
\end{equation}

\subsection{Geodesic Convexity}
Let $\mu,\nu\in\mathcal{P}_2(\mathbb{R}^d)$. We define a constant speed geodesic between $\mu$ and $\nu$ as a curve $(\lambda^{\mu\rightarrow\nu}_\eta)_{\eta\in[0,1]}$ such that $\lambda_0 = \mu$, $\lambda_1 = \nu$, and $W_2(\lambda_{\iota},\lambda_{\eta}) = (\eta-\iota)W_2(\mu,\nu)$ for all $\iota,\eta\in[0,1]$. If $\boldsymbol{t}_{\mu}^{\nu}$ is the optimal transport map between $\mu$ and $\nu$, then a constant speed geodesic is given by \citep[e.g.,][Sec. 7.2]{Ambrosio2008}
\begin{equation}
\lambda_{\eta}^{\mu\rightarrow\nu} = \left((1-\eta)\boldsymbol{\mathrm{id}} + \eta \boldsymbol{t}_{\mu}^{\nu}\right)_{\#}\mu.
\end{equation}
Let $\mathcal{F}:\mathcal{P}_2(\mathbb{R}^d)\rightarrow(-\infty,\infty]$. The functional $\mathcal{F}$ is said to be lower semi-continuous if, for all $M\in\mathbb{R}$, $\{\mathcal{F}\leq M\}$ is a closed subset of $\mathcal{P}_2(\mathbb{R}^d)$. 
For $m\geq 0$, we say that $\mathcal{F}$ is $m$-geodesically convex if, for any $\mu,\nu\in\mathcal{P}_2(\mathbb{R}^d)$, there exists a constant speed geodesic $(\lambda_{\eta}^{\mu\rightarrow\nu})_{\eta\in[0,1]}$ between $\mu$ and $\nu$ such that, for all $\eta\in[0,1]$, 
\begin{equation}
\mathcal{F}(\lambda_{\eta}^{\mu\rightarrow\nu}) \leq (1-\eta)\mathcal{F}(\mu) + \eta\mathcal{F}(\nu) - \frac{m}{2}\eta(1-\eta)W_2^2(\mu,\nu).
\end{equation}
In the case that this inequality holds for $m=0$, we will simply say that $\mathcal{F}$ is geodesically convex.

\subsection{Subdifferential Calculus in the Wasserstein Space}
Let $\mu\in\mathcal{P}_2(\mathbb{R}^d)$, and let $\xi\in L^2(\mu)$. Let $\mathcal{F}$ be a proper and lower semi-continuous functional on $\mathcal{P}_2(\mathbb{R}^d)$. We say that $\xi\in L^2(\mu)$ belongs to the Fr\'{e}chet subdifferential of $\mathcal{F}$ at $\mu$, and write $\xi\in\partial\mathcal{F}(\mu)$ if, for any $\nu\in\mathcal{P}_2(\mathbb{R}^d)$, 
\begin{equation}
\liminf_{\nu\rightarrow\mu}\frac{\mathcal{F}(\nu) - \mathcal{F}(\mu) - \int_{\mathbb{R}^d}\langle \xi(x),\boldsymbol{t}_{\mu}^{\nu}(x) - x\rangle \mu(\mathrm{d}x)}{W_2(\nu,\mu)}\geq 0.
\end{equation}
Suppose, in addition, that $\mathcal{F}$ is $m$-geodesically convex. Then $\xi\in L^2(\mu)$ belongs to the Fr\'{e}chet subdifferential $\partial\mathcal{F}(\mu)$ if and only if, for all $\nu\in\mathcal{P}_2(\mathbb{R}^d)$, 
\begin{equation}
\mathcal{F}(\nu) - \mathcal{F}(\mu) \geq \int_{\mathbb{R}^d}\langle \xi(x), \boldsymbol{t}_{\mu}^{\nu}(x) -x \rangle\mu(\mathrm{d}x) + \frac{m}{2}W_2^2(\mu,\nu).
\end{equation}
For certain functionals $\mathcal{F}$, and under mild regularity conditions, \citep[see Lemma 10.4.13 in][]{Ambrosio2008}, one has that $\partial \mathcal{F}(\mu) = \{\nabla_{W_2}\mathcal{F}(\mu)\}$, where $\nabla_{W_2}\mathcal{F}(\mu)\in L^2(\mu)$ is given by
\begin{equation}
\nabla_{W_2}\mathcal{F}(\mu) = \nabla \frac{\partial \mathcal{F}(\mu)}{\partial \mu}(x)~~~\text{for $\mu$-a.e. $x\in\mathbb{R}^d$},
\end{equation}
and $\frac{\partial\mathcal{F}(\mu)}{\partial\mu}:\mathbb{R}^d\rightarrow\mathbb{R}$ denotes the first variation of $\mathcal{F}$ at $\mu$, that is, the unique function such that
\begin{equation}
\lim_{\varepsilon\rightarrow0}\frac{1}{\varepsilon}\left( \mathcal{F}(\mu + \varepsilon \zeta) - \mathcal{F}(\mu)\right) = \int_{\mathbb{R}^d} \frac{\partial \mathcal{F}(\mu)}{\partial \mu}(x)\zeta(\mathrm{d}x),
\end{equation}
where $\zeta = \nu - \mu$, and $\nu\in\mathcal{P}_2(\mathbb{R}^d)$. We will refer to $\nabla_{W_2}\mathcal{F}(\mu)$ as the Wasserstein gradient of $\mathcal{F}$ at $\mu$. 

\section{Theoretical Results}
\label{sec:proofs}

\label{sec:theoretica_results}

\subsection{Lemma \ref{orabona_lemma}}
\begin{proof}[Proof of Lemma \ref{orabona_lemma}] This result is well known; see, e.g., Lemma 1 in \citet{Orabona2016}; Theorem 9.6 in \citet{Orabona2022}; Sec. 4 in \citet{Orabona2017}; Part 2 in \citet{Orabona2020}. In particular, we have
\begin{align}
    f\left(\frac{1}{T}\sum_{t=1}^T x_t\right) - f(x^{*}) &\leq \frac{1}{T}\sum_{t=1}^T \bigg(f(x_t) - f(x^{*})\bigg) \tag{Jensen's inequality} \\
    &\leq \frac{1}{T} \left(\sum_{t=1}^T c_tx^{*} - \sum_{t=1}^T c_tx_t\right) \tag{convexity} \\
    &\leq \frac{1}{T}\left(\left(\sum_{t=1}^T c_t\right)x^{*} - h\left(\sum_{t=1}^T c_t\right) + \varepsilon\right) \tag{definition of $h(\cdot)$}  \\[2mm]
    &\leq \frac{1}{T}\left(\max_{v} \left[ vx^{*} - h\left(v\right)\right] + \varepsilon\right) \tag{maximum over $v=\sum_{t=1}^T c_t$}  \\[3mm]
    &= \frac{h^{*}(x^{*}) + \varepsilon}{T}. \tag{definition of $h^*(\cdot)$} \label{eq:opt_convergence_rate}
\end{align}
\end{proof}

\subsection{Proposition \ref{theorem:convergence_rate}}
In this section, we outline how to prove Proposition \ref{theorem:convergence_rate}. Our proof of this result will rely on a rather strong sufficient condition, which we provide below (Assumption \ref{sufficient_condition_2}). Before we state this assumption, we will first require some additional notation.

First, throughout this section, we will use $\smash{(\mu_t)_{t\in[T]}}$ to denote the sequence of betting measures defined by Alg. \ref{alg:param_free_grad_descent_v2}, and $(\varphi_t)_{t\in[T]}$ to denote the transport maps from $\mu_0$ to $(\mu_t)_{t\in[T]}$ defined by Alg. \ref{alg:param_free_grad_descent_v2}. In addition, we will write  $\smash{(t_{\mu_t}^{\pi})_{t\in[T]}}$ for the optimal transport maps from $(\mu_t)_{t\in[T]} \mapsto \pi$, and $\smash{(t^{\mu_t}_{\pi})_{t\in[T]}}$ for the optimal transport maps from $\smash{\pi \mapsto (\mu_t)_{t\in[T]}}$. Finally, we let $\smash{(\tilde{t}_{\pi,t}^{\mu_t})_{t\in[T]}}$ denote the transport maps from $\pi$ to $(\mu_t)_{t\in[T]}$ defined according to $\smash{\tilde{t}^{\mu_t}_{\pi,t} := \varphi_t \circ t_{\pi}^{\mu_0}}$, and $\smash{(\tilde{t}_{\mu_0,t}^{\pi})_{t\in[T]}}$ the transport maps from $\mu_0$ to $\pi$ defined according to $\smash{\tilde{t}_{\mu_0,t}^{\pi} = t_{\mu_t}^{\pi} \circ \varphi_t}$. With this notation at hand, we are now ready to introduce the sufficient condition required for our convergence result. 
\vspace{2mm}

\begin{assumption} \label{sufficient_condition_2}
  Define the functions $v:\mathbb{R}^d\rightarrow\mathbb{R}^d$ and $\tilde{v}:\mathbb{R}^d\rightarrow\mathbb{R}^d$ according to
\begin{align}
    v(x) &= \sum_{t=1}^T -\nabla_{W_2} \mathcal{F}(\mu_t)(t_{\pi}^{\mu_t}(x))~,\quad\tilde{v}(x)= \sum_{t=1}^T -\nabla_{W_2} \mathcal{F}(\mu_t)(\tilde{t}_{\pi,t}^{\mu_t}(x)).
\end{align}
Then there exists a constant $K>0$ such that, for all $x\in\mathbb{R}^d$,  
\begin{equation}
     \frac{1}{2L^2T} \left[||v(x)||^2 - ||\tilde{v}(x)||^2\right]  \leq \ln K .
\end{equation}
\end{assumption}

We can now proceed to the proof Proposition \ref{theorem:convergence_rate}. For convenience, we first recall the original statement of this result.
\vspace{2mm}

\begin{customprop}{3.3} \label{theorem:convergence_rate_app}
Let Assumptions \ref{assumption1} - \ref{assumption2} and Assumption \ref{sufficient_condition_2} hold. Then 
    \begin{align}
    \mathcal{F}\left(\frac{1}{T}\sum_{t=1}^T \mu_t\right) - \mathcal{F}(\pi)
    \leq \frac{L}{T} \bigg[ w_0 &+\int_{\mathbb{R}^d} \left|\left|x\right|\right| \sqrt{T \ln \left(1+ \frac{96K^2T^2\left|\left|x\right|\right|^2}{w_0^2}\right)} \pi(\mathrm{d}x) \\
    &+\int_{\mathbb{R}^d} \left|\left|x\right|\right| \sqrt{T \ln \left(1+ \frac{96T^2\left|\left|x\right|\right|^2}{w_0^2}\right)} \mu_0(\mathrm{d}x)  \bigg],  \label{eq:theorem1_bound_v2}
\end{align}
where $K>0$ is the constant defined in Assumption \ref{sufficient_condition_2}.
\end{customprop}

\begin{proof}[Proof of Proposition \ref{theorem:convergence_rate}]
Our proof begins in much the same fashion as the proof of Lemma \ref{orabona_lemma}. On this occasion, we consider
\begin{align}
\mathcal{F}\left(\frac{1}{T}\sum_{t=1}^T \mu_t\right) - \mathcal{F}(\pi) &\leq \frac{1}{T}\sum_{t=1}^T \mathcal{F}\left( \mu_t\right) - \mathcal{F}(\pi) \label{eq:38v2} \\
&\leq \frac{1}{T} \sum_{t=1}^T \int_{\mathbb{R}^d} \langle -\nabla_{W_2} \mathcal{F}(\mu_t)(x), t_{\mu_t}^{\pi}(x) - x\rangle\mu_t (\mathrm{d}x) \label{eq:line2} \\
&\leq \frac{L}{T} \sum_{t=1}^T \bigg[ \int_{\mathbb{R}^d} \big\langle -\nabla_{W_2} \hat{\mathcal{F}}(\mu_t)(t_{\pi}^{\mu_t}(x)), x \big\rangle  \pi(\mathrm{d}x) - \int_{\mathbb{R}^d} \big\langle -\nabla_{W_2} \hat{\mathcal{F}}(\mu_t)(\varphi_t(x)), \varphi_t(x) \big\rangle \mu_0(\mathrm{d}x)\bigg]  \label{eq:line3} \\
&= \frac{L}{T}\bigg[  \int_{\mathbb{R}^d} \big\langle \sum_{t=1}^T-\nabla_{W_2} \hat{\mathcal{F}}(\mu_t)(t_{\pi}^{\mu_t}(x)), x \big\rangle\pi(\mathrm{d}x) - \int_{\mathbb{R}^d} \sum_{t=1}^T \big\langle -\nabla_{W_2} \hat{\mathcal{F}}(\mu_t)(\varphi_t(x)), x \big\rangle \mu_0(\mathrm{d}x) \nonumber \\
&-  \int_{\mathbb{R}^d} \sum_{t=1}^T \big\langle -\nabla_{W_2} \hat{\mathcal{F}}(\mu_t)(\varphi_t(x)), \varphi_t(x) \mu_0(\mathrm{d}x) + \int_{\mathbb{R}^d} \sum_{t=1}^T \big\langle -\nabla_{W_2} \hat{\mathcal{F}}(\mu_t)(\varphi_t(x)), x \big\rangle \mu_0(\mathrm{d}x) \bigg] \label{eq:42v3} \\
&= \frac{L}{T}  \bigg[ \int_{\mathbb{R}^d} \big\langle \sum_{t=1}^T-\nabla_{W_2} \hat{\mathcal{F}}(\mu_t)(t_{\pi}^{\mu_t}(x)), x \big\rangle  - \big\langle \sum_{t=1}^T-\nabla_{W_2} \hat{\mathcal{F}}(\mu_t)(\tilde{t}_{\pi,t}^{\mu_t}(x)), t_{\pi}^{\mu_0}(x) \big\rangle \pi(\mathrm{d}x) \nonumber \\[-1mm]
&\hspace{5mm}-  \int_{\mathbb{R}^d} \sum_{t=1}^T \big\langle -\nabla_{W_2} \hat{\mathcal{F}}(\mu_t)(\varphi_t(x)), \varphi_t(x) - x\big\rangle \mu_0(\mathrm{d}x)\bigg],  \label{eq:42v4}
\end{align}
where in \eqref{eq:38v2} we have used Jensen's inequality, in \eqref{eq:line2} we have used the definition of geodesic convexity (see App. \ref{sec:wasserstein_details}), in \eqref{eq:line3} we have substituted $\smash{x\mapsto t_{\pi}^{\mu_t}(x)}$ and $\smash{x\mapsto \varphi_t(x)}$ in the first and second integrals, respectively, used the fact that, by definition,  $\smash{(t_{\pi}^{\mu_t})_{\#}\pi = \mu_t}$ and $\smash{(\varphi_t)_{\#}\mu_0 = \mu_t}$ (see Alg. \ref{alg:param_free_grad_descent_v2}), and introduced the notation $\smash{\hat{\mathcal{F}} = \frac{1}{L}\mathcal{F}}$; in \eqref{eq:42v3} we have added and subtracted the same integral; and in \eqref{eq:42v4} we have substituted $\smash{x\mapsto t_{\pi}^{\mu_0}(x)}$, used the fact that $\smash{\tilde{t}_{\pi,t}^{\mu_t}(x) = \varphi_t(t_{\pi}^{\mu_0}(x))}$, and the fact that $\smash{(t_{\pi}^{\mu_0})_{\#}\pi = \mu_0}$ in the second integral; and combined the third and fourth integrals.

By construction, the betting strategy in Alg. \ref{alg:param_free_grad_descent_v2} guarantees that, for any $x\in\mathbb{R}^{d}$, and any arbitrary sequence $c_1(x),\dots,c_T(x)\in\mathbb{R}^d$, such that $||c_t(x)||\leq 1$,  there exists an even, logarithmically convex function $h:\mathbb{R}\rightarrow\mathbb{R}_{+}$ such that the wealth is lower bounded as \citep[][Proof of Theorem 3, App. C]{Orabona2016}
\begin{align}
w_T(x) &= w_0 + \sum_{t=1}^T \langle c_t(x), \varphi_t(x) - x \rangle  \geq h
\left( \left|\left| \sum_{t=1}^T c_t(x) \right|\right|\right). \label{eq:wealth_inequalityv2}
\end{align} 
In particular, the betting strategy in Alg. \ref{alg:param_free_grad_descent_v2} guarantees that this inequality holds with \citep[][App. F.1, Proof of Corollary 5]{Orabona2016}
\begin{align}
h\left(u\right) &= w_0 \frac{2^T\Gamma(1)\Gamma(\frac{T+1}{2}+\frac{u}{2})\cdot \Gamma(\frac{T+1}{2}-\frac{u}{2})}{\Gamma^{2}(\frac{1}{2})\Gamma(T+1)}. \label{eq:h_funcv2}
\end{align}
Due to Lemma 16 in \citet{Orabona2016}, we also have that 
\begin{equation}
h(u) \geq i(u) := \frac{w_0}{K_1\sqrt{T}} \exp \left( \frac{ u^2}{2T}\right), \label{eq:i_funcv2}
\end{equation}
where $K_1=e\sqrt{\pi}$ is a universal constant. We will apply the inequality in \eqref{eq:wealth_inequalityv2} for the sequence $c_t(x) = -\nabla_{W_2}\hat{\mathcal{F}}(\mu_t)(\varphi_t(x))$. In particular, substituting this sequence into \eqref{eq:wealth_inequalityv2}, and using also the inequality in \eqref{eq:i_funcv2}, we have that 
 \begin{align}
     w_0 + \sum_{t=1}^T \left\langle -\nabla_{W_2}\hat{\mathcal{F}}(\mu_t)(\varphi_t(x)), \varphi_t(x) - x\right\rangle  \geq i\left(\left|\left|\sum_{t=1}^T  -\nabla_{W_2}\hat{\mathcal{F}}(\mu_t)(\varphi_t(x)) \right|\right|\right). \label{eq:45v2}
 \end{align}
Suppose now that we define the function $I:\mathbb{R}^d\rightarrow(-\infty,\infty]$ according to $I(u)=i(||u||)$. Thus, in particular, 
\begin{equation}
    I(u) = \frac{w_0}{K_1\sqrt{T}} \exp \left( \frac{ ||u||^2}{2T}\right). \label{eq:bigI_def}
\end{equation}
By using this definition in \eqref{eq:45v2}, and then substituting \eqref{eq:45v2} into \eqref{eq:38v2} - \eqref{eq:42v4}, we then have
\begin{align}
&\mathcal{F}\left(\frac{1}{T}\sum_{t=1}^T \mu_t\right) - \mathcal{F}(\pi) \leq \frac{L}{T} \left[ \sum_{t=1}^T  \int_{\mathbb{R}^d} \left \langle -\nabla_{W_2} \hat{\mathcal{F}}(\mu_t)(t_{\pi}^{\mu_t}(x)), x \right\rangle -  \left\langle - \nabla_{W_2} \hat{\mathcal{F}}(\mu_t)(\tilde{t}_{\pi,t}^{\mu_t}(x)), t_{\pi}^{\mu_0}(x) \right\rangle\pi (\mathrm{d}x)\right. \nonumber \\
&\left.\hspace{45mm}- \int_{\mathbb{R}^d} I \left(\sum_{t=1}^T -\nabla_{W_2} \hat{\mathcal{F}}(\mu_t)(\varphi_t(x))\right) \mu_0 (\mathrm{d}x) + w_0\right] \label{eq:32v2} \\
&\hphantom{\mathcal{F}\left(\frac{1}{T}\sum_{t=1}^T \mu_t\right) - \mathcal{F}(\pi)}= \frac{L}{T} \left[  \int_{\mathbb{R}^d} \left \langle \sum_{t=1}^T - \nabla_{W_2} \hat{\mathcal{F}}(\mu_t)(t_{\pi}^{\mu_t}(x)), x \right\rangle - \left\langle \sum_{t=1}^T - \nabla_{W_2} \hat{\mathcal{F}}(\mu_t)(\tilde{t}_{\pi,t}^{\mu_t}(x)), t_{\pi}^{\mu_0}(x) \right\rangle \pi (\mathrm{d}x)\right. \nonumber \\
&\left.\hspace{45mm}- \int_{\mathbb{R}^d} I \left(\sum_{t=1}^T -\nabla_{W_2} \mathcal{\hat{F}}(\mu_t)(\tilde{t}_{\pi,t}^{\mu_t}(x))\right) \pi (\mathrm{d}x) + w_0\right], \label{eq:new_line2_v0}
\end{align}
where, in \eqref{eq:new_line2_v0}, we have substituted $x\mapsto t_{\pi}^{\mu_0}(x)$ in the second integral, used the fact that $\smash{\tilde{t}_{\pi,t}^{\mu_t}(x) = \varphi_t(t_{\pi}^{\mu_0}(x))}$, and finally the fact that $(t_\pi^{\mu_0})_{\#}\pi = \mu_0$. Suppose we also now define
\begin{equation}
u(x) = \sum_{t=1}^T -\nabla_{W_2} \mathcal{\hat{F}}(\mu_t)({t}_{\pi}^{\mu_t}(x))~~~,~~~\tilde{u}(x) = \sum_{t=1}^T -\nabla_{W_2} \mathcal{\hat{F}}(\mu_t)(\tilde{t}_{\pi,t}^{\mu_t}(x))~~~,~~~A = \frac{\int_{\mathbb{R}^d} I\left(\tilde{u}(x)\right) \pi(\mathrm{d}x)}{\int_{\mathbb{R}^d} I\left(u(x))\right)\pi(\mathrm{d}x)}.  \label{eq:constantA}
\end{equation}
Using this notation, and re-ordering terms, we can now rewrite the previous inequality as 
\begin{align}
&\mathcal{F}\left(\frac{1}{T}\sum_{t=1}^T \mu_t\right) - \mathcal{F}(\pi) 
\leq  \frac{L}{T} \bigg[w_0 + \int_{\mathbb{R}^d} \left \langle u(x), x \right\rangle \pi (\mathrm{d}x) + \int_{\mathbb{R}^d} \left\langle \tilde{u}(x), - t_{\pi}^{\mu_0}(x)\right\rangle \pi(\mathrm{d}x)- \int_{\mathbb{R}^d} I\left(\tilde{u}(x)\right) \pi (\mathrm{d}x)\bigg] \\
&\hphantom{\mathcal{F}\left(\frac{1}{T}\sum_{t=1}^T \mu_t\right) - \mathcal{F}(\pi)}\leq  \frac{L}{T} \bigg[ w_0 +  \int_{\mathbb{R}^d} \left \langle u(x), x \right\rangle -\frac{1}{2} I(\tilde{u}(x)) \pi (\mathrm{d}x) + \int_{\mathbb{R}^d} \left\langle \tilde{u}(x), - t_{\pi}^{\mu_0}(x)\right\rangle - \frac{1}{2}I(\tilde{u}(x)) \pi(\mathrm{d}x)\bigg] \label{eq:46v2} \\
&\hphantom{\mathcal{F}\left(\frac{1}{T}\sum_{t=1}^T \mu_t\right) - \mathcal{F}(\pi)}=\frac{L}{T} \bigg[w_0+ \int_{\mathbb{R}^d} A \left( \langle u(x), \frac{x}{A}\rangle - \frac{1}{2} I(u(x)) \right) \pi(\mathrm{d}x) \nonumber \\
&\hspace{48mm}+ \int_{\mathbb{R}^d} \left(\langle \tilde{u}(x), - t_{\pi}^{\mu_0}(x)\rangle - \frac{1}{2} I(\tilde{u}(x)) \right) \pi(\mathrm{d}x) \bigg]. \label{eq:47v2}
\end{align}
Suppose that we now fix $x\in\mathbb{R}^d$, and write $\theta=u(x)$, $\eta = \tilde{u}(x)$, $F=\frac{1}{2}I$, $x^{*} = \frac{x}{A}$, and $x^{\dagger} = -t_{\pi}^{\mu_0}(x)$. Using this notation, for each $x\in\mathbb{R}^d$, we can now rewrite the first and second integrands in \eqref{eq:47v2} as 
\begin{align*}
\langle u(x), \frac{x}{A}\rangle - \frac{1}{2}I(u(x)) &:= \langle \theta, x^{*} \rangle - F(\theta), \\
\langle \tilde{u}(x), - t_{\pi}^{\mu_0}(x)\rangle - \frac{1}{2}I(\tilde{u}(x)) &:= \langle \eta, x^{\dagger} \rangle - F(\eta).
\end{align*}
Taking the supremum over $\theta\in\mathbb{R}^d$ and $\eta\in\mathbb{R}^d$, respectively, and using the definition of the convex conjugate, we can easily upper bound these expressions by
\begin{align}
\langle \theta, x^{*} \rangle - F(\theta) &\leq \sup_{\theta\in\mathbb{R}^d} \left(\langle \theta, x^{*} \rangle - F(\theta)\right) \leq F^{*}(x^{*}), \\
\langle \eta, x^{\dagger} \rangle - F(\eta) &\leq \sup_{\eta\in\mathbb{R}^d} \left(\langle \eta, x^{\dagger} \rangle - F(\eta)\right) \leq F^{*}(x^{\dagger}),
\end{align}
where, as elsewhere, $F^{*}$ denotes the Fenchel conjugate of $F$. Returning to our previous notation, and using the fact that $x\in\mathbb{R}^d$ was chosen arbitrarily, we thus have that
\begin{align}
    \langle u(x), \frac{x}{A}\rangle - \frac{1}{2}I(u(x)) &\leq (\frac{1}{2}I)^{*}\left(\frac{x}{A}\right) = \frac{1}{2} I^{*}\left(\frac{2x}{A}\right), \label{eq:fenchel_bound1} \\
    \langle \tilde{u}(x), -t_{\pi}^{\mu_0}(x)\rangle - \frac{1}{2} I(\tilde{u}(x)) &\leq (\frac{1}{2}I)^{*}\left(- t_{\pi}^{\mu_0}(x)\right)= \frac{1}{2}I^{*}\left(- 2t_{\pi}^{\mu_0}(x)\right), \label{eq:fenchel_bound2}
\end{align}
for all $x\in\mathbb{R}^d$, where in both lines we have used the fact that $(af)^{*}(x^{*})=af^{*}(\frac{x^{*}}{a})$, with $a=\frac{1}{2}$. Substituting \eqref{eq:fenchel_bound1} and \eqref{eq:fenchel_bound2} into \eqref{eq:47v2}, it now follows straightforwardly that
\begin{align}
\mathcal{F}\left(\frac{1}{T}\sum_{t=1}^T \mu_t\right) - \mathcal{F}(\pi) &\leq  \frac{L}{T} \left[ w_0 + \int_{\mathbb{R}^d} \frac{1}{2}AI^{*}\left(\frac{2x}{A}\right)\pi(\mathrm{d}x) + \int_{\mathbb{R}^d} \frac{1}{2}I^{*}\left(-2t_{\pi}^{\mu_0}(x)\right)\pi(\mathrm{d}x) \right] \\
&=\frac{L}{T} \left[ w_0 + \int_{\mathbb{R}^d} \frac{1}{2}AI^{*}\left(\frac{2 t_{\mu_0}^{\pi}(x)}{A}\right)\mu_0(\mathrm{d}x) + \int_{\mathbb{R}^d} \frac{1}{2}I^{*}\left(-\frac{1}{2}x\right)\mu_0(\mathrm{d}x) \right] \\
&=  \frac{L}{T} \left[w_0 +  \int_{\mathbb{R}^d}\frac{1}{2} Ai^{*}\left(\frac{2\left|\left|t_{\mu_0}^{\pi}(x)\right|\right|}{A}\right)\mu_0(\mathrm{d}x) + \int_{\mathbb{R}^d} \frac{1}{2}i^{*}\left(2\left|\left|x\right|\right|\right)\mu_0(\mathrm{d}x) \right], \label{eq:53v2}
\end{align}
where in the second line we have substituted $\smash{x \mapsto t_{\mu_0}^{\pi}(x)}$ into both integrals, and used the fact that $\smash{(t_{\mu_0}^{\pi})_{\#}\mu_0 = \pi}$; and, in the final line, we have used the fact that the Fenchel conjugate of $i(||\cdot||)$ is $i^{*}(||\cdot||)$ since $i^{*}$ is an even function \citep[][Example 13.7]{Bauschke2011}. Now, by Lemma 18 in \citet{Orabona2016}, we have the upper bound
\begin{equation}
    i^{*}(u) \leq |u| \sqrt{T\ln \left(1+ \frac{24T^2u^2}{w_0^2}\right)} - \frac{w_0}{K_1\sqrt{T}}.
\end{equation}
Substituting this into our previous bound \eqref{eq:53v2}, we finally arrive at 
\begin{align}
    \mathcal{F}\left(\frac{1}{T}\sum_{t=1}^T \mu_t\right) - \mathcal{F}(\pi)
    &\leq \frac{L}{T} \bigg[ w_0\left(1 - \frac{A}{2K_1\sqrt{T}}\right) +  \int_{\mathbb{R}^d} \frac{1}{2}A\frac{2||t_{\mu_0}^{\pi}(x)||}{A} \sqrt{T \ln \left(1+ \frac{24T^2||2t_{\mu_0}^{\pi}(x)||^2}{A^2w_0^2}\right)} \mu_0(\mathrm{d}x) \nonumber \\
    &\hspace{37mm}+ \int_{\mathbb{R}^d} \frac{1}{2} 2||x|| \sqrt{T \ln \left(1+ \frac{24T^2||2x||^2}{A^2w_0^2}\right)} \mu_0(\mathrm{d}x) \bigg]  \\[3mm]
    &\leq \frac{L}{T} \bigg[w_0 +  \int_{\mathbb{R}^d} ||x|| \sqrt{T \ln \left(1+ \frac{96T^2||x||^2}{A^2 w_0^2}\right)} \pi(\mathrm{d}x) \nonumber \\
    &\hspace{15mm}+ \int_{\mathbb{R}^d} ||x|| \sqrt{T \ln \left(1+ \frac{96T^2||x||^2}{w_0^2}\right)} \mu_0(\mathrm{d}x) \bigg], \label{eq:penultimate_bound_v2} 
\end{align}
where, in the second line, we have substituted $x\mapsto t_{\pi}^{\mu_0}(x)$ in the first integral, and used the fact that $(t_{\pi}^{\mu_0})_{\#}\pi = \mu_0$.
It remains to bound the constant ${A}$ from below, or, equivalently, the constant $A^{-1}$ from above. The required bound will follow directly from our sufficient condition (Assumption \ref{sufficient_condition_2}). Indeed, from Assumption \ref{sufficient_condition_2}, we have that
\begin{equation}
    \frac{1}{2L^2T} \left[||v(x)||^2 - ||\tilde{v}(x)||^2\right]  \leq \ln K \implies \exp\left[ \frac{||v(x)||^2}{2L^2T} \right] \leq K \exp\left[ \frac{||\tilde{v}(x)||^2}{2L^2T}\right].
\end{equation}
Using this bound, and the definition of $A$ in \eqref{eq:constantA}, it then follows straightforwardly that
\begin{align}
A^{-1}
= \frac{\displaystyle{ \int_{\mathbb{R}^d}\frac{w_0}{K_1\sqrt{T}}\exp \left[\frac{||{u}(x)||^2}{2T}\right]}\pi(\mathrm{d}x)}{\displaystyle{ \int_{\mathbb{R}^d}\frac{w_0}{K_1\sqrt{T}}\exp\left[\frac{ ||\tilde{u}(x)||^2}{2T}\right]\pi(\mathrm{d}x)}}  &= \frac{\displaystyle{ \int_{\mathbb{R}^d}\exp \left[\frac{||{v}(x)||^2}{2L^2T}\right]}\pi(\mathrm{d}x)}{\displaystyle{ \int_{\mathbb{R}^d}\exp\left[\frac{ ||\tilde{v}(x)||^2}{2L^2T}\right]\pi(\mathrm{d}x)}}
\leq \frac{\displaystyle{ \int_{\mathbb{R}^d}K\exp \left[\frac{||\tilde{v}(x)||^2}{2L^2T}\right]}\pi(\mathrm{d}x)}{\displaystyle{ \int_{\mathbb{R}^d}\exp\left[\frac{ ||\tilde{v}(x)||^2}{2L^2T}\right]\pi(\mathrm{d}x)}} = K.
\label{eq:constantA_bound}
\end{align}
Finally, substituting the bound in \eqref{eq:constantA_bound} into \eqref{eq:penultimate_bound_v2}, we arrive at
\begin{align}
    \mathcal{F}\left(\frac{1}{T}\sum_{t=1}^T \mu_t\right) - \mathcal{F}(\pi)
    \leq \frac{L}{T} \bigg[w_0 &+  \int_{\mathbb{R}^d}  \left|\left|x\right|\right| \sqrt{T \ln \left(1+ \frac{96K^2T^2\left|\left|x\right|\right|^2}{w_0^2}\right)} \pi(\mathrm{d}x) \nonumber \\
    &+\int_{\mathbb{R}^d}  \left|\left|x\right|\right| \sqrt{T \ln \left(1+ \frac{96T^2\left|\left|x\right|\right|^2}{w_0^2}\right)} \mu_0(\mathrm{d}x) \bigg]. \label{eq:final_bound_v2}
\end{align}
\end{proof}

\subsection{An Alternative to Proposition \ref{theorem:convergence_rate}}

In this section, we outline how to obtain an alternative bound to the one given in Proposition \ref{theorem:convergence_rate}, under a slightly different sufficient condition. 

\begin{customassumption}{B.1'} \label{sufficient_condition_1}
There exists a transport map $T_{\mu_0}^{\pi}:\mathbb{R}^d\rightarrow\mathbb{R}^d$ from $\mu_0$ to $\pi$, which does not depend on $t\in[T]$, such that, for some $K>0$,
\begin{align}
\sum_{t=1}^T \left|\left|t_{\mu_t}^{\pi}\circ \varphi_t - T_{\mu_0}^{\pi}\right|\right|_{L^2(\mu_0)} \leq K\sqrt{T}\ln\left[\mathrm{poly}(T)\right]. \label{K1}
\end{align}
\end{customassumption}

\begin{remark}
It is, at present, unclear how to establish the existence of a transport map $T_{\mu_0}^{\pi}$ which satisfies \eqref{K1}, aside from in one very simple case (see Sec. \ref{sec:gaussian}). In the general case, one possible candidate for $T_{\mu_0}^{\pi}$ is the optimal transport map $t_{\mu_0}^{\pi}$ from $\mu_0$ to $\pi$, so that $T_{\mu_0}^{\pi}(x) = t_{\mu_0}^{\pi}(x)$. In this case, \eqref{K1} reads 
\begin{equation}
     \sum_{t=1}^T || {t}_{\mu_t}^{\pi} \circ \varphi_t - t_{\mu_0}^{\pi}||_{L^2(\mu_0)} \leq K\sqrt{T}\ln\left[\mathrm{poly}(T)\right], 
\end{equation}
which can be interpreted as a bound on the sum of the distances between the optimal transport map $t_{\mu_0}^{\pi}$ from $\mu_0$ to $\pi$, and the maps $(t_{\mu_t}^{\pi}\circ\varphi_t)_{t\in[T]}$ which first transport $\mu_0$ to $(\mu_t)_{t\in[T]}$ according to the transport maps $(\varphi_t)_{t\in[T]}$ defined by Alg. \ref{alg:param_free_grad_descent_v2}, and then map $(\mu_t)_{t\in[T]}$ to $\pi$ via the optimal transport maps $(t_{\mu_t}^{\pi})_{t\in[T]}$. 

Another possible candidate for $T_{\mu_0}^{\pi}$ is the average, over all iterations, of the composition of the optimal transport maps $(t_{\mu_t}^{\pi})_{t\in[T]}$ from $(\mu_t)_{t\in[T]}$ to $\pi$, and the maps $(\varphi_t)_{t\in[T]}$ from $\mu_0$ to $(\mu_t)_{t\in[T]}$ defined by Alg. \ref{alg:param_free_grad_descent_v2}, viz $\smash{T_{\mu_0}^{\pi}(x) = \frac{1}{T}\sum_{t=1}^T t_{\mu_t}^{\pi}(\varphi_t(x))}$. In this case, \eqref{K1} becomes
\begin{equation}
      \sum_{t=1}^T ||t_{\mu_t}^{\pi} \circ \varphi_t - \frac{1}{T}\sum_{s=1}^T t_{\mu_s}^{\pi}\circ \varphi_s||_{L^2(\mu_0)}\leq K\sqrt{T}\ln\left[\mathrm{poly}(T)\right].
\end{equation}
\end{remark}
\vspace{1mm}

\begin{customprop}{3.3'} \label{theorem2}
Let Assumptions \ref{assumption1} - \ref{assumption2} and Assumption \ref{sufficient_condition_1} hold. Then
\begin{align}
    &\mathcal{F}\left(\frac{1}{T}\sum_{t=1}^T \mu_t\right) - \mathcal{F}(\pi) \\
    &\leq \frac{L}{T} \bigg[ w_0 + \int_{\mathbb{R}^d} \left|\left|T_{\mu_0}^{\pi}(x) - x\right|\right| \sqrt{T \ln \left(1+ \frac{24T^2\left|\left|T_{\mu_0}^{\pi}(x) - x\right|\right|^2}{w_0^2}\right)} \mu_0(\mathrm{d}x) + K\sqrt{T}\ln\left[\mathrm{poly}(T)\right] \bigg].  \nonumber
\end{align}
where $K>0$ and $T_{\mu_0}^{\pi}:\mathbb{R}^d\rightarrow\mathbb{R}^d$ are defined in Assumption \ref{sufficient_condition_1}.
\end{customprop}

\begin{proof}[Proof of Proposition \ref{theorem2}] 
Once again, our proof begins in a similar fashion to the proof of Lemma \ref{orabona_lemma}. In this case, we now have
\begin{align}
\mathcal{F}\left(\frac{1}{T}\sum_{t=1}^T \mu_t\right) - \mathcal{F}(\pi) 
&\leq \frac{1}{T}\sum_{t=1}^T \mathcal{F}\left( \mu_t\right) - \mathcal{F}(\pi) \label{eq:38} \\
&\leq \frac{1}{T} \sum_{t=1}^T \int_{\mathbb{R}^d} \langle -\nabla_{W_2} \mathcal{F}(\mu_t)(x), t_{\mu_t}^{\pi}(x) - x\rangle\mu_t (\mathrm{d}x) \label{lin2} \\
&= \frac{L}{T} \sum_{t=1}^T \int_{\mathbb{R}^d} \langle -\nabla_{W_2} \mathcal{\hat{F}}(\mu_t)(\varphi_t(x)), t_{\mu_t}^{\pi}(\varphi_t(x)) - \varphi_t(x)\rangle\mu_0 (\mathrm{d}x) \label{lin3} \\
&= \frac{L}{T} \bigg[ \sum_{t=1}^T \int_{\mathbb{R}^d} \left\langle -\nabla_{W_2} \mathcal{\hat{F}}(\mu_t)(\varphi_t(x)), t_{\mu_t}^{\pi}(\varphi_t(x)) - x\right\rangle \mu_0(\mathrm{d}x)  \nonumber \\
&\hspace{20mm}- \int_{\mathbb{R}^d}  \sum_{t=1}^T \left\langle  -\nabla_{W_2} \mathcal{\hat{F}}(\mu_t)(\varphi_t(x)), \varphi_t(x) -  x\right \rangle \mu_0(\mathrm{d}x)\bigg], \label{eq:42}
\end{align}
where, in \eqref{eq:38} we have used Jensen's inequality, in \eqref{lin2} we have used the definition of geodesic convexity (see App. \ref{sec:wasserstein_details}), in \eqref{lin3} we have substituted $x\mapsto \varphi_{t}(x)$, and used the fact that, by definition, $(\varphi_t)_{\#}\mu_0 = \mu_t$, and in \eqref{eq:42} we have added and subtracted the same integral. Following the same argument as in \eqref{eq:wealth_inequalityv2} - \eqref{eq:32v2}, it then follows that
\begin{align}
&\mathcal{F}\left(\frac{1}{T}\sum_{t=1}^T \mu_t\right) - \mathcal{F}(\pi) \leq \frac{L}{T} \left[ \sum_{t=1}^T  \int_{\mathbb{R}^d} \left \langle -\nabla_{W_2} \mathcal{\hat{F}}(\mu_t)(\varphi_t(x)), t_{\mu_t}^{\pi}(\varphi_t(x)) - x \right\rangle \mu_0 (\mathrm{d}x)\right. \nonumber \\
&\left.\hspace{45mm}- \int_{\mathbb{R}^d} I\left(\sum_{t=1}^T -\nabla_{W_2} \mathcal{\hat{F}}(\mu_t)(\varphi_t(x))\right) \mu_0 (\mathrm{d}x) + w_0\right], \label{eq:32}
\end{align}
where $I:\mathbb{R}^d\rightarrow(-\infty,\infty]$ is the function defined in the proof of Proposition \ref{theorem:convergence_rate}, c.f. \eqref{eq:bigI_def}. To proceed, we will now write
    \begin{align}
        t_{\mu_t}^{\pi}(\varphi_t(x)) &= T_{\mu_0}^{\pi}(x) + \underbrace{t_{\mu_t}^{\pi}(\varphi_t(x)) - T_{\mu_0}^{\pi}(x)}_{S_{t}(x)}. \label{t1}
    \end{align}
Based on this decomposition, we can rewrite the first term in \eqref{eq:32} as
\begin{align}
    &\frac{L}{T}\sum_{t=1}^T  \int_{\mathbb{R}^d} \left\langle -\nabla_{W_2} \mathcal{\hat{F}}(\mu_t)(\varphi_t(x)), t_{\mu_t}^{\pi}(\varphi_t(x)) - x\right\rangle \mu_0 (\mathrm{d}x) \nonumber \\
    &= \frac{L}{T} \bigg[ \int_{\mathbb{R}^d} \bigg\langle \sum_{t=1}^T -\nabla_{W_2} \mathcal{\hat{F}}(\mu_t)(\varphi_t(x)), T_{\mu_0}^{\pi}(x) - x\bigg\rangle \mu_0(\mathrm{d}x) 
    + \sum_{t=1}^T  \int_{\mathbb{R}^d} \underbrace{\left\langle -\nabla_{W_2} \mathcal{\hat{F}}(\mu_t)(\varphi_t(x)), S_t(x)\right\rangle}_{R_t(x)} \mu_0(\mathrm{d}x)\bigg]. \label{eq:80}
\end{align}
By substituting \eqref{eq:80}, the previous inequality \eqref{eq:32} can now be written as 
\begin{align}
&\mathcal{F}\left(\frac{1}{T}\sum_{t=1}^T \mu_t\right) - \mathcal{F}(\pi) \leq \frac{L}{T} \bigg[ \int_{\mathbb{R}^d} \bigg\langle \sum_{t=1}^T -\nabla_{W_2} \mathcal{\hat{F}}(\mu_t)(\varphi_t(x)), T_{\mu_0}^{\pi}(x) - x \bigg\rangle \mu_0 (\mathrm{d}x) + \sum_{t=1}^T  \int_{\mathbb{R}^d} R_t(x)\mu_0(\mathrm{d}x)  \nonumber \\
&\hspace{45mm}- \int_{\mathbb{R}^d} I\bigg( \sum_{t=1}^T -\nabla_{W_2} \mathcal{\hat{F}}(\mu_t)(\varphi_t(x)) \bigg) \mu_0 (\mathrm{d}x) + w_0 \bigg]  \label{eq:46} \\[2mm]
&\hphantom{\mathcal{F}\left(\frac{1}{T}\sum_{t=1}^T \mu_t\right) - \mathcal{F}(\pi)}= \frac{L}{T} \bigg[ w_0 + \int_{\mathbb{R}^d} \left[ \langle z(x), T_{\mu_0}^{\pi}(x) - x\rangle - I(z(x)) \right] \mu_0(\mathrm{d}x) + \sum_{t=1}^T  \int_{\mathbb{R}^d} R_t(x)\mu_0(\mathrm{d}x) \bigg], \label{eq:47}
\end{align}
where in the final line we have introduced the notation $z(x) = \sum_{t=1}^T -\nabla_{W_2} \mathcal{\hat{F}}(\mu_t)(\varphi_t(x))$. Suppose we now fix $x\in\mathbb{R}^d$, and write $\theta=z(x)$ and $x^{*} = T_{\mu_0}^{\pi}(x) - x$. Using this notation, we can now rewrite the first integrand in the previous expression as 
\begin{equation}
\langle z(x), T_{\mu_0}^{\pi}(x) - x\rangle - I(z(x)) := \langle \theta, x^{*} \rangle - I(\theta),
\end{equation}
for each fixed $x\in\mathbb{R}^d$. Taking the supremum over $\theta\in\mathbb{R}^d$ and using the definition of the convex conjugate, we can easily upper bound this expression by
\begin{align}
\langle \theta, x^{*} \rangle - I(\theta) \leq \sup_{\theta\in\mathbb{R}^d} \left(\langle \theta, x^{*} \rangle - I(\theta)\right) \leq I^{*}(x^{*}). 
\end{align}
Returning to our previous notation, and using the fact $x\in\mathbb{R}^d$ was chosen arbitrarily, we thus have
\begin{equation}
    \langle z(x), T_{\mu_0}^{\pi}(x) - x\rangle - I(z(x)) \leq I^{*}(T_{\mu_0}^{\pi}(x) - x)
\end{equation}
for all $x\in\mathbb{R}^d$. Substituting this upper bound into \eqref{eq:46} - \eqref{eq:47}, it then follows straightforwardly that
\begin{align}
\mathcal{F}\left(\frac{1}{T}\sum_{t=1}^T \mu_t\right) - \mathcal{F}(\pi) &\leq  \frac{L}{T} \left[ w_0 + \int_{\mathbb{R}^d} I^{*}(T_{\mu_0}^{\pi}(x) - x)\mu_0(\mathrm{d}x) + \sum_{t=1}^T  \int_{\mathbb{R}^d} R_t(x)\mu_0(\mathrm{d}x)  \right] \\
&=  \frac{L}{T} \left[ w_0 + \int_{\mathbb{R}^d} i^{*}(\left|\left|T_{\mu_0}^{\pi}(x) - x\right|\right|)\mu_0(\mathrm{d}x) + \sum_{t=1}^T  \int_{\mathbb{R}^d} R_t(x)\mu_0(\mathrm{d}x) \right], \label{eq:53}
\end{align}
where, in the second line, we have used the fact that the Fenchel conjugate of $i(||\cdot||)$ is $i^{*}(||\cdot||)$ since $i^{*}$ is an even function \citep[][Example 13.7]{Bauschke2011}. Similar to before, Lemma 18 of \citet{Orabona2016} allows to bound this Fenchel conjugate as
\begin{equation}
    i^{*}(u) \leq |u| \sqrt{T\ln \left(1+ \frac{24T^2u^2}{w_0^2}\right)} - \frac{w_0}{K_1\sqrt{T}}.
\end{equation}
Substituting this into the previous bound in \eqref{eq:53}, we then have that
\begin{align}
    \mathcal{F}\left(\frac{1}{T}\sum_{t=1}^T \mu_t\right) - \mathcal{F}(\pi)
    &\leq \frac{L}{T} \bigg[ w_0\left(1-\frac{1}{K_1\sqrt{T}}\right) + \int_{\mathbb{R}^d} \left|\left|T_{\mu_0}^{\pi}(x) - x\right|\right| \sqrt{T \ln \left(1+ \frac{24T^2\left|\left|T_{\mu_0}^{\pi}(x) - x\right|\right|^2}{w_0^2}\right)} \mu_0(\mathrm{d}x) \nonumber \\
    &\hspace{36mm}+ \sum_{t=1}^T  \int_{\mathbb{R}^d} R_t(x)\mu_0(\mathrm{d}x)  \bigg] \nonumber \\[3mm]
    &\leq \frac{L}{T} \bigg[ w_0 + \int_{\mathbb{R}^d} \left|\left|T_{\mu_0}^{\pi}(x) - x\right|\right| \sqrt{T \ln \left(1+ \frac{24T^2\left|\left|T_{\mu_0}^{\pi}(x) - x\right|\right|^2}{w_0^2}\right)} \mu_0(\mathrm{d}x) \label{eq:penultimate_bound} \\
    &\hspace{15mm}+ \sum_{t=1}^T  \int_{\mathbb{R}^d} R_t(x)\mu_0(\mathrm{d}x)  \bigg]. \nonumber
\end{align}
It remains to deal with the final term. Unsurprisingly, the bound on this term will follow directly from our alternative sufficient condition (Assumption \ref{sufficient_condition_1}). In particular, recalling the definition of $R_t$ from \eqref{eq:80}, we have 
\begin{align}
 \sum_{t=1}^T  \int_{\mathbb{R}^d} R_t(x)\mu_0(\mathrm{d}x) &= \sum_{t=1}^T  \int_{\mathbb{R}^d} \left\langle -\nabla_{W_2} \hat{\mathcal{F}}(\mu_t)(\varphi_t(x)), S_t(x)\right\rangle \mu_0(\mathrm{d}x) \\
 &\leq \sum_{t=1}^T \left[\int_{\mathbb{R}^d} ||\nabla_{W_2}\hat{\mathcal{F}}(\mu_t)(\varphi_t(x)) ||^2 \mu_0(\mathrm{d}x)\right]^{\frac{1}{2}} \left[\int_{\mathbb{R}^d}||S_t(x)||^2 \mu_0(\mathrm{d}x)\right]^{\frac{1}{2}} \label{eq:new_line2} \\
 &\leq\sum_{t=1}^T  \left[\int_{\mathbb{R}^d}||S_t(x)||^2 \mu_0(\mathrm{d}x)\right]^{\frac{1}{2}}  \label{eq:new_line3} \\
 &= \sum_{t=1}^T  \left[\int_{\mathbb{R}^d}||t_{\mu_t}^{\pi}(\varphi_t(x)) - T_{\mu_0}^{\pi}(x)||^2 \mu_0(\mathrm{d}x)\right]^{\frac{1}{2}} \leq K\sqrt{T}\ln\left[\mathrm{poly}(T)\right], \label{eq:o_bound}
\end{align}
where in \eqref{eq:new_line2} we have used the Cauchy-Schwarz inequality, in \eqref{eq:new_line3} we have used the assumed bound on $||\nabla_{W_2}\mathcal{F}(\mu_t)(\varphi_t(x))||$ (Assumption \ref{assumption2}), and in \eqref{eq:o_bound} we have substituted the definition of $S_t$ from \eqref{t1}, and used Assumption \ref{sufficient_condition_1}.
Finally, substituting \eqref{eq:o_bound} into \eqref{eq:penultimate_bound}, we arrive at
\begin{align}
    &\mathcal{F}\left(\frac{1}{T}\sum_{t=1}^T \mu_t\right) - \mathcal{F}(\pi) \\
    &\leq \frac{L}{T} \bigg[ w_0 + \int_{\mathbb{R}^d} \left|\left|T_{\mu_0}^{\pi}(x) - x\right|\right| \sqrt{T \ln \left(1+ \frac{24T^2\left|\left|T_{\mu_0}^{\pi}(x) - x\right|\right|^2}{w_0^2}\right)} \mu_0(\mathrm{d}x) + K\sqrt{T}\ln\left[\mathrm{poly}(T)\right] \bigg]. \nonumber
\end{align}
\end{proof}

\subsection{A Simple Gaussian Case}
\label{sec:gaussian}
In this section, we establish a stronger version of Proposition \ref{theorem2} in a very simple Gaussian setting. In particular, we consider the case in which the initial distribution is Gaussian, the target distribution is Gaussian, and these two distributions have the same covariance. In this case, not only can we tighten the bound in Proposition \ref{theorem2}, but we no longer require any additional technical assumptions (Assumption \ref{sufficient_condition_2} or \ref{sufficient_condition_1}). 

\begin{customprop}{3.4} \label{theorem:gaussian}
Let $\mathcal{F}(\mu) = \mathrm{KL}(\mu|\pi)$. Let $\mu_0\sim \mathcal{N}(m_0,\Sigma)$ and $\pi\sim \mathcal{N}(m_{\pi},\Sigma)$. Suppose that Assumption \ref{assumption2} holds. Then 
    \begin{align}
    &\mathcal{F}\left(\frac{1}{T}\sum_{t=1}^T \mu_t\right) - \mathcal{F}(\pi) \leq \frac{L}{T} \bigg[ w_0 + \int_{\mathbb{R}^d} \left|\left|t_{\mu_0}^{\pi}(x) - x\right|\right| \sqrt{T \ln \left(1+ \frac{24T^2\left|\left|t_{\mu_0}^{\pi}(x) - x\right|\right|^2}{w_0^2}\right)} \mu_0(\mathrm{d}x) \bigg]. \label{eq:gaussian_bound}
\end{align}
\end{customprop}

\begin{proof}[Proof of Proposition \ref{theorem:gaussian}]
We will establish this result as a special case of Proposition \ref{theorem2}. We must therefore begin by verifying the additional assumptions required by Proposition \ref{theorem2}, namely, Assumption \ref{assumption1} and Assumption \ref{sufficient_condition_1}. We begin with Assumption \ref{assumption1}. Given $\pi\sim \mathcal{N}(m_{\pi},\sigma^2)$, it follows from standard results that the functional $\mathcal{F}(\mu) = \mathrm{KL}(\mu|\pi)$ is proper, lower semi-continuous, and geodesically convex \citep[e.g.,][Sec. 9.4]{Ambrosio2008}. Thus, this assumption is indeed satisfied. 

We now turn our attention to Assumption \ref{sufficient_condition_1}. We will show that Assumption \ref{sufficient_condition_1} is satisfied with $T_{\mu_0}^{\pi} = t_{\mu_0}^{\pi}$, and $K = 0$. To prove this, it is clearly sufficient to show that for all $t\in[T]$, 
\begin{equation}
    t_{\mu_t}^{\pi}(\varphi_t(x)) = t_{\mu_0}^{\pi}(x), \label{eq:102}
\end{equation}
since in this case the LHS of the bound in Assumption \ref{sufficient_condition_1} is identically zero. Our proof of \eqref{eq:102} will consist of two steps. We will first establish that $\smash{ \varphi_t(x) = t_{\mu_0}^{\mu_t}(x)}$, i.e., the transport map defined by Alg. \ref{alg:param_free_grad_descent_v2} coincides with the optimal transport map. We will then show that $\smash{t_{\mu_t}^{\pi}(t_{\mu_0}^{\mu_t}(x)) = t_{\mu_0}^{\pi}(x)}$, i.e., the composition of the optimal transport maps $t_{\mu_0}^{\mu_t}$ and $t_{\mu_t}^{\pi}$ coincides with the optimal transport map $t_{\mu_0}^{\pi}$. By substituting the first of these identities into the second, we obtain the required result in \eqref{eq:102}. 

We begin by showing that $\varphi_t(x)=t_{\mu_0}^{\mu_t}(x)$. To prove this, we will first establish that $\mu_t \sim \mathcal{N}(m_t, \Sigma)$ and $\varphi_t(x) = m_t + (x - m_0)$. We proceed by induction. In the base case ($t=1$), it follows from the update equation \eqref{eq:x_transform} in Alg. \ref{alg:param_free_grad_descent_v2} that $\varphi_1(x) = \mathrm{id}(x) = x$. 
We thus have, by definition, that $\mu_1 = (\varphi_1)_{\#}\mu_0 = \mu_0$. Thus, in particular, $\mu_1 \sim \mathcal{N}(m_1,\Sigma)$ and $\varphi_1(x) = m_1 + (x - m_0)$, in this case with $m_1 = m_0$. This proves the base case. 

We now proceed to the inductive step. Assume that, for $s=1,\dots,t-1$, it is indeed the case that $\mu_s \sim \mathcal{N}(m_s, \Sigma)$ and $\varphi_s(x) = m_s + (x - m_0)$. We then have
\begin{align}
    \varphi_t(x) &= x - \frac{\sum_{s=1}^{t-1}\nabla_{W_2}\mathcal{F}(\mu_s)(\varphi_s(x))}{Lt}\left(w_0 - \sum_{s=1}^{t-1} \langle \frac{1}{L}\nabla_{W_2}\mathcal{F}(\mu_s)(\varphi_s(x)), \varphi_s(x) - x) \rangle \right) \\
    &= x +\underbrace{- \frac{\sum_{s=1}^{t-1}\Sigma^{-1}(m_s - m_{\pi})}{Lt}\left(w_0 - \sum_{s=1}^{t-1} \langle \frac{1}{L}\Sigma^{-1}(m_s - m_{\pi}), m_s - m_0 \rangle \right)}_{:=m_t - m_0} = m_t + (x - m_0), 
\end{align}
where in the second line we have used the fact that $\smash{\nabla_{W_2}\mathcal{F}(\mu) = \nabla_{W_2} \mathrm{KL}(\mu|\pi) = \nabla \log\frac{\mu}{\pi} = \nabla \log \mu - \nabla \log \pi}$, which implies in particular that 
\begin{align}
    \nabla_{W_2}\mathcal{F}(\mu_s)(\varphi_s(x)) &= \nabla \log \mu_s(x_s) - \nabla \log \pi(x_s) \\
    &= - \Sigma^{-1}(\varphi_s(x) - m_s) + \Sigma^{-1}(\varphi_s(x) - m_{\pi})= \Sigma^{-1}(m_s - m_\pi).
\end{align}
Thus, we do indeed have $\varphi_t(x) = m_t + (x - m_0)$. Moreover, using standard properties regarding affine transformations of normal random variables, it follows that $\mu_t = (\varphi_t)_{\#}\mu_0 \stackrel{d}{=} \mathcal{N}(m_t,\Sigma)$. This completes the inductive step. We have thus shown that $\mu_t \sim \mathcal{N}(m_t, \Sigma)$ and $\varphi_t(x) = m_t + (x - m_0)$ for all $t\in \mathbb{N}$. The required identity, namely $t_{\mu_0}^{\mu_t}(x) = \varphi_t(x)$, now follows using the fact that the optimal transport map between $\mu_0\sim \mathcal{N}(m_0,\Sigma)$ and $\mu_t\sim\mathcal{N}(m_t,\Sigma)$ is precisely given by $\smash{t_{\mu_0}^{\mu_t}(x) = m_t + (x - m_0)}$ \citep[e.g.,][]{Chen2019}.

It remains to show that $\smash{t_{\mu_t}^{\pi}(t_{\mu_0}^{\mu_t}(x)) = t_{\mu_0}^{\pi}(x)}$. Based on the previous result, and standard results on the optimal transport map between Gaussians \citep[e.g.,][]{Chen2019}, we have $\smash{t_{\mu_t}^{\pi}(x) = m_{\pi} + (x - m_t)}$ and $\smash{t_{\mu_0}^{\mu_t}(x) = m_{t} + (x - m_0)}$.
It follows straightforwardly that 
\begin{equation}
    t_{\mu_t}^{\pi}(t_{\mu_0}^{\mu_t}(x)) = m_{\pi} + \left( [ m_t + (x - m_0) ] - m_t \right) = m_{\pi} + (x - m_0) = t_{\mu_0}^{\pi}(x).
\end{equation}
We have thus proved both of the identities required to establish \eqref{eq:102}. This verifies Assumption \ref{sufficient_condition_1} with $T_{\mu_0}^{\pi}=t_{\mu_0}^{\pi}$, and $K = 0$. Finally, substituting  $T_{\mu_0}^{\pi} = t_{\mu_0}^{\pi}$ and $K = 0$ into the bound in Proposition \ref{theorem2}, we arrive at the required bound in Proposition \ref{theorem:gaussian}. 
\end{proof}

\section{Other Coin ParVI Algorithms}
\label{sec:coin-parvi}

In Sec. \ref{sec:coin-svgd}, we presented Coin SVGD, an algorithm which can be viewed as the coin sampling analogue of SVGD \cite{Liu2016a}. In this section, we provide details of two other algorithms - Coin LAWGD and Coin KSDD - which represent coin sampling analogues of two other ParVI algorithms.

\subsection{Coin Laplacian Adjusted Wasserstein Gradient Descent}
\label{sec:coin-lawgd}
Let $\mathcal{F}(\mu) = \mathrm{KL}(\mu|\pi)$, with $\nabla_{W_2}\mathcal{F}(\mu) = \nabla \ln \frac{\mathrm{d}\mu}{\mathrm{d}\pi}$. Following \citet{Chewi2020}, suppose that we replace $\nabla_{W_2}\mathcal{F}(\mu)$ in Alg. \ref{alg:param_free_grad_descent_v2} by $\nabla P_{\pi,k_{\mathcal{L}}} \frac{\mathrm{d}\mu}{\mathrm{d}\pi}$, the gradient of the image of $\smash{\frac{\mathrm{d}\mu}{\mathrm{d}\pi}}$ under the integral operator $P_{\mu,k_{\mathcal{L}}}$, where $k_{\mathcal{L}}$ is the kernel such that $P_{\pi,k_{\mathcal{L}}} = -\mathcal{L}_{\pi}^{-1}$. Here, $\mathcal{L}_{\pi}$ denotes the infinitesimal generator of the overdamped Langevin diffusion with stationary distribution $\pi$. In this case, one can show that $\smash{\nabla P_{\pi,k_{\mathcal{L}}} \frac{\mathrm{d}\mu}{\mathrm{d}\pi} = \mathbb{E}_{x\sim\mu}[\nabla_{1}k_{\mathcal{L}}(\cdot,x)]}$ \citep[][Sec.  4]{Chewi2020}, and thus 
\begin{equation}
    \nabla P_{\pi,k_{\mathcal{L}}} \frac{\mathrm{d}\mu^N}{\mathrm{d}\pi}(x_t^{i}) = \frac{1}{N}\sum_{j=1}^N \nabla_{1} k_{\mathcal{L}}(x_t^{i},x_t^{j}).
\end{equation}
By using these gradients in Alg. \ref{alg:param_free_grad_descent_v2}, we obtain a learning-rate free analogue of the LAWGD algorithm \citep[][Alg. 1]{Chewi2020}. This algorithm is summarised in Alg. \ref{alg:param_free_lawgd}.

\begin{algorithm}[ht]
   \caption{Coin Laplacian Adjusted Wasserstein Gradient Descent (Coin LAWGD)}
   \label{alg:param_free_lawgd}
\begin{algorithmic}
   \STATE {\bfseries Input:} initial measure $\mu_0\in\mathcal{P}_2(\mathbb{R}_{d})$, initial particles $(x_0^{i})_{i=1}^N \stackrel{\mathrm{i.i.d.}}{\sim}\mu_0$, initial wealth of particles $(w_0^{i})_{i=1}^N\in\mathbb{R}_{+}$, gradient upper bound $L$.  
   \FOR{$t=1$ {\bfseries to} $T$}
   \FOR{$i=1$ {\bfseries to} $N$} 
   \STATE Compute \vspace{-6mm}
\begin{align}
x_{t}^{i} &= x_0^{i} -\frac{{\sum_{s=1}^{t-1} \sum_{j=1}^N \nabla_{1}k_{\mathcal{L}}(x_s^{i},x_s^{j})}}{LNt} \bigg( w_0^{i} -  \sum_{s=1}^{t-1} \bigg\langle \frac{1}{LN}\sum_{j=1}^N \nabla_{1}k_{\mathcal{L}}(x_s^{i},x_s^{j}), x_s^{i} - x_0^{i} \bigg\rangle \bigg).
\vspace{-2mm}
\end{align}
   \STATE Define $\smash{\mu_{t}^N = \frac{1}{N}\sum_{i=1}^N \delta_{x_t^{i}}}$. \vspace{2mm}
   \ENDFOR
   \ENDFOR
   \STATE {\bfseries Output:} $\mu_T^{N}$ or $\frac{1}{T}\sum_{t=1}^T \mu_t^{N}$.
\end{algorithmic}
\end{algorithm}

\subsection{Coin Kernel Stein Discrepancy Descent}
\label{sec:coin-ksd}
Let $\mathcal{F}(\mu) = \frac{1}{2}\mathrm{KSD}^2(\mu|\pi)$, where $\mathrm{KSD}(\mu|\pi)$ is the kernel Stein discrepancy, defined according to \cite{Liu2016b,Chwialkowski2016,Gorham2017}
\begin{equation}
    \mathrm{KSD}(\mu|\pi) = \sqrt{\int_{\mathbb{R}^d}\int_{\mathbb{R}^d} k_{\pi}(x,y) \mu(\mathrm{d}x)\mu(\mathrm{d}y)}, 
\end{equation}
and where $k_{\pi}$ is the Stein kernel, defined in terms of the score $s = \nabla \log \pi$, and a positive semi-definite kernel $k$, as
\begin{align}
    k_{\pi}(x,y) &= s^T(x)s(y)k(x,y) + s^T(x) \nabla_{2}k(x,y)+\nabla_{1}k^T(x,y)s(y) + \nabla_{\cdot 1} \nabla_{2}k(x,y).
\end{align}
In this case, given a discrete measure $\smash{{\mu}^N = \frac{1}{N}\sum_{j=1}^N \delta_{x^{j}}}$, the loss function and its gradient are given by
\begin{equation}
    \mathcal{F}(\mu^N) = \frac{1}{N^2}\sum_{i,j=1}^N k_{\pi}(x^{i},x^{j})~~~,~~~\nabla_{x_i}\mathcal{F}(\mu_t^N) = \frac{1}{N^2}\sum_{j=1}^N \nabla_{2}k_{\pi}(x_t^{j},x_t^{i}).
\end{equation}
By substituting these gradients into Alg. \ref{alg:param_free_grad_descent_v2}, we obtain a learning-rate free analogue of KSDD \cite{Korba2021}.\footnote{In fact, \citet{Korba2021} also propose a learning-rate free version of KSDD based on the quasi-Newton L-BFGS algorithm \cite{Liu1989}. Our method provides an alternative approach based on the `coin-betting' paradigm.} This algorithm is summarised in Alg. \ref{alg:param_free_ksd}. 

\begin{algorithm}[ht]
   \caption{Coin Kernel Stein Discrepancy Descent (Coin KSDD)}
   \label{alg:param_free_ksd}
\begin{algorithmic}
   \STATE {\bfseries Input:} initial measure $\mu_0\in\mathcal{P}_2(\mathbb{R}_{d})$, initial particles $(x_0^{i})_{i=1}^N \stackrel{\mathrm{i.i.d.}}{\sim}\mu_0$, initial wealth of particles $(w_0^{i})_{i=1}^N\in\mathbb{R}_{+}$, kernel $k$, gradient upper bound $L$. 
   \FOR{$t=1$ {\bfseries to} $T$}
   \FOR{$i=1$ {\bfseries to} $N$} 
   \STATE Compute \vspace{-7mm}
\begin{align}
x_{t}^{i} &= x_0^{i} -\frac{{\sum_{s=1}^{t-1} \sum_{j=1}^N \nabla_{2}k_{\pi}(x_s^{j},x_s^{i})}}{LN^2t} \bigg( w_0^{i} -  \sum_{s=1}^{t-1} \bigg\langle \frac{1}{LN^2}\sum_{j=1}^N \nabla_{2}k_{\pi}(x_s^{j},x_s^{i}), x_s^{i} - x_0^{i} \bigg\rangle \bigg).
\vspace{-0.5mm}
\end{align}
   \STATE Define $\smash{\mu_{t}^{N} = \frac{1}{N}\sum_{i=1}^{x,N} \delta_{x_t^{i}}}$. \vspace{2mm}
   \ENDFOR
   \ENDFOR
   \STATE {\bfseries Output:} $\mu_T^{N}$ or $\frac{1}{T}\sum_{t=1}^T \mu_t^{N}$.
\end{algorithmic}
\end{algorithm}

\newpage
\section{Coin Sampling with Adaptive Gradient Bounds}
\label{sec:coinWGD_adaptive}

\subsection{Adaptive Coin Wasserstein Gradient Descent}

In principle, Alg. \ref{alg:param_free_grad_descent_v2} (and Alg. \ref{alg:param_free_svgd}) depend on knowledge of a constant $L>0$ such that, for all $t\in[T]$, $||\nabla_{W_2}\mathcal{F}(\mu_t)(x_t)||\leq L$. In practice, however, such a constant may not be not known in advance. In this case, following \citet{Orabona2017}, we can use a modified version of our algorithm in which the gradient bounds are adaptively estimated. This algorithm is summarised in Alg. \ref{alg:adaptive_param_free_grad_descent}.

\begin{algorithm}[ht]
   \caption{Adaptive Coin Wasserstein Gradient Descent}
   \label{alg:adaptive_param_free_grad_descent}
\begin{algorithmic}
   \STATE {\bfseries Input:} initial measure $\mu_0\in\mathcal{P}_2(\mathbb{R}^d)$, initial parameter $x_0\sim\mu_0$, dissimilarity functional $\mathcal{F}:\mathcal{P}_2(\mathbb{R}^d)\rightarrow(-\infty,\infty]$.
   \STATE{\bfseries Initialise:} for $j=1,\dots,d$, $L_{0,j}=0$, $G_{0,j}=0$, $R_{0,j}=0$. 
   \FOR{$t=1$ {\bfseries to} $T$}
   \STATE Compute the negative Wasserstein gradient: $c_{t-1} = -\nabla_{W_2}\mathcal{F}(\mu_{t-1})(x_{t-1})$.
   \FOR{$j=1$ {\bfseries to} $d$}
    \STATE Update the maximum observed scale $L_{t,j} = \mathrm{max}(L_{t-1,j}, |c_{t-1,j}|)$.
    \STATE Update the sum of the absolute value of the gradients: $G_{t,j} = G_{t-1,j} + |c_{t-1,j}|$.
    \STATE Update the reward: $R_{t,j} = \max(R_{t-1,j} + c_{t-1,j} (x_{t-1,j} - x_{0,j}), 0)$.
    \STATE Update the parameter
\begin{equation}
x_{t,j} = x_{0,j} + \frac{\sum_{s=1}^{t-1} c_{s,j}}{G_{t,j} + L_{t,j}} (1 + \frac{R_{t,j}}{L_{t,j}}).
\end{equation} 
\ENDFOR
    \STATE Define $\mu_{t} = (\varphi_{t})_{\#} \mu_0$, where $\varphi_t:x_0 \mapsto x_t$.
   \ENDFOR
   \STATE {\bfseries Output:} $\mu_T$.
\end{algorithmic}
\end{algorithm}

\subsection{Adaptive Coin Stein Variational Gradient Descent}
\label{sec:adaptive-coin-svgd}
In the same way, one can also obtain an adaptive version of Coin SVGD (Alg. \ref{alg:param_free_svgd}) and, indeed, Coin LAWGD (Alg. \ref{alg:param_free_lawgd}) and Coin KSDD (Alg. \ref{alg:param_free_ksd}).\footnote{In the interest of brevity, we do not provide the adaptive versions of Coin LAWGD and Coin KSDD in full. However, these are easily obtained by substituting the relevant gradients into the adaptive version of Coin SVGD.} The adaptive Coin SVGD algorithm is summarised in Alg. \ref{alg:adaptive_param_free_svgd}. Following \citet{Orabona2017}, we make one further alteration when we use the adaptive version of Coin SVGD to perform inference in Bayesian neural networks (see Sec. \ref{sec:bnn}). In particular, we now modify the denominator of the betting fraction in Alg. \ref{alg:adaptive_param_free_svgd} such that it is at least $\alpha L_{t,j}^{i}$, for some positive constant $\alpha>0$, which as a default we set equal to 100. Thus, the update in \eqref{eq:adaptive_update} now becomes
\begin{equation}
x_{t,j}^{i} = x_{0,j}^{i} + \frac{\sum_{s=1}^{t-1} c_{s,j}^{i}}{\max(G_{t,j}^{i} + L_{t,j}^{i},\alpha L_{t,j}^{i})} (1 + \frac{R_{t,j}^{i}}{L_{t,j}^{i}}).
\end{equation}
In practice, is is these adaptive algorithms that we use in all of our numerical experiments (Sec. \ref{sec:numerics}), and that we recommend for use in future work. 

\begin{algorithm}[ht]
   \caption{Adaptive Coin Stein Variational Gradient Descent}
   \label{alg:adaptive_param_free_svgd}
\begin{algorithmic}
   \STATE {\bfseries Input:} initial measure $\mu_0\in\mathcal{P}_2(\mathbb{R}^d)$; initial particles $(x_0^{i})_{i=1}^N\stackrel{\text{i.i.d.}}{\sim} \mu_0$.
   \STATE{\bfseries Initialise:} for $i=1,\dots,N$, $j=1,\dots,d$, $L^{i}_{0,j}=0$, $G_{0,j}^{i}=0$, $R_{0,j}^{i}=0$. 
   \FOR{$t=1$ {\bfseries to} $T$}
   \FOR{$i=1$ {\bfseries to} $N$}
   \STATE Compute the negative gradient $c_{t-1}^{i} =  -\tfrac{1}{N}\textstyle\sum_{j=1}^N[k(x_{t-1}^{j},x_{t-1}^{i}) \nabla U(x_{t-1}^{j}) -  \nabla_{1}k(x_{t-1}^{j},x_{t-1}^{i})]$.
   \FOR{$j=1$ {\bfseries to} $d$}
        \STATE Update the maximum observed scale: $\smash{L_{t,j}^{i} = \mathrm{max}(L_{t-1,j}^{i}, |c_{t-1,j}^{i}|)}$. 
    \STATE Update the sum of the absolute value of the gradients: $\smash{G_{t,j}^{i} = G_{t-1,j}^{i} + |c_{t-1,j}^{i}|}$. 
        \STATE Update the reward $R_{t,j}^{i} = \max(R_{t-1,j}^{i} + \langle c_{t-1,j}^{i}, x_{t-1,j}^{i} - x_{0,j}^{i} \rangle, 0)$.
    \STATE Update the parameter
\begin{equation}
x_{t,j}^{i} = x_{0,j}^{i} + \frac{\sum_{s=1}^{t-1} c_{s,j}^{i}}{G_{t,j}^{i} + L_{t,j}^{i}} (1 + \frac{R_{t,j}^{i}}{L_{t,j}^{i}}). \label{eq:adaptive_update}
\end{equation}
    \ENDFOR
   \ENDFOR
   \STATE Define $\mu_{t}^N = \frac{1}{N} \sum_{i=1}^N \delta_{x_{t}^{i}}$.
   \ENDFOR
   \STATE {\bfseries Output:} $\mu_T^{N}$.
\end{algorithmic}
\end{algorithm}

\textbf{Computational Complexity}.  In terms of computational cost and memory requirements, the adaptive variant of Coin SVGD is similar to SVGD when the latter is paired, as is common, with a method such as Adagrad \cite{Duchi2011}, RMSProp \cite{Tieleman2012}, or Adam \cite{Kingma2015}. The computational cost per iteration is $O(N^2)$ in the number of particles $N$, due to the kernelised gradient. In terms of memory requirements, it is necessary to keep track of the sum of the previous gradients, the sum of the absolute value of the previous gradients, the maximum observed absolute value of the previous gradients, and the reward, for each of the particles. Thus, in particular, the memory requirement is $O(Nd)$ in the number of particles $N$ and the dimension $d$. This is identical to, e.g., Adagrad, which must keep track of the sum of the squares of the previous gradients, for each of the particles \cite{Duchi2011}.

\section{Additional Experimental Details and Numerical Results}
\label{sec:experimental-details}
We implement our methods using Python 3, PyTorch, Theano, and Jax. For our comparisons, we use existing implementations of  \href{https://github.com/dilinwang820/Stein-Variational-Gradient-Descent}{\texttt{SVGD}} by \citet{Liu2016a},  \href{https://github.com/dilinwang820/Stein-Variational-Gradient-Descent}{\texttt{LAWGD}} by \citet{Chewi2020}, and \href{https://github.com/pierreablin/ksddescent}{\texttt{KSDD}} by \citet{Korba2021}. We perform all experiments using a MacBook Pro 16" (2021) laptop with Apple M1 Pro chip and 16GB of RAM.

\subsection{Toy Examples}
\label{sec:toy-details}

\subsubsection{Coin SVGD}
\textbf{Experimental Details}. In Sec. \ref{sec:intro}, we compare the performance of SVGD \cite{Liu2016a} and Coin SVGD (Alg. \ref{alg:param_free_svgd}) on the following two-dimensional distributions.

\emph{Two-Dimensional Gaussian}. We first consider an anisotropic bivariate Gaussian distribution, $p(x) = \mathcal{N}(x|\mu,\Sigma)$, where we set $\mu = (-1,1)^{\top}$ and $\Sigma^{-1} = \big( \begin{smallmatrix} 3 & -0.5 \\ -0.5 & 1 \end{smallmatrix} \big) $.

\emph{Mixture of Two Two-Dimensional Gaussians}. For the second example, we consider a mixture of two bivariate Gaussian distributions, $p(x) = \alpha_1 \mathcal{N}(x; \mu_1,\Sigma_1) + \alpha_2 \mathcal{N}(x; \mu_2,\Sigma_2)$, with $\alpha_1 = 0.5$, $\mu_1 = (-2,2)^{\top}$, and $\Sigma_1 = \frac{1}{2}\mathbb{1}$; $\alpha_2 = 0.5$, $\mu_2 = (2,-2)^{\top}$, and $\Sigma_2 = \frac{1}{2}\mathbb{1}$.

\emph{Donut Distribution}. We next consider an annulus or `donut' distribution, with density $p(x) \propto \exp(-\frac{(|x|-r_0)^2}{2\sigma^2})$, where we set $r_0=2.5$ and $ \sigma^2 = 0.5$.

\emph{Rosenbrock Distribution}. The next example is a variant on the so-called Rosenbrock or `banana' distribution \cite{Pagani2022}. The target density is given by $p(x) \propto \exp [ -\frac{1}{2} ({x}' - {\mu})^{\top} \Sigma^{-1} ({x'} - \mu) ]$, where $x'_1 = x_1/a$, and $x'_2 = ax_1 + ab(x_1^2 + a^2)$.  In our experiments, we set $a=-1$, $b=1$, $\mu=(0,1)^{\top}$, and $\Sigma = \big( \begin{smallmatrix} 1 & 0.5 \\ 0.5 & 1 \end{smallmatrix} \big)$. 

\emph{Squiggle Distribution}. Our penultimate example is a two-dimensional `squiggle' distribution; see, e.g., App. E in \citet{Hartmann2022}. In this case, the target density again takes the form $p(x) \propto \exp \left[ -\frac{1}{2}({x}' - {\mu})^{\top} \Sigma^{-1} ({x'} - \mu) \right]$, where now $x'_1 = x_1$ and $x'_2 = x_2 + \sin(\omega x_1)$. In our experiments, we set $\mu = (1,1)^{\top} $, $\Sigma = \big(\begin{smallmatrix} 2 & 0.25 \\ 0.25 & 0.5 \end{smallmatrix} \big)$, and the frequency $\omega =2$.

\emph{Funnel Distribution}. Our final example is a two-dimensional `funnel' distribution, with density $p(x_1,x_2)\propto \mathcal{N}(x_1; \mu_1, \exp (x_2)) \mathcal{N}(x_2; \mu_2, \sigma_2^2)$. In our experiments, we set $\mu_1=1$, $\mu_2 =4$, and $\sigma_2 = 3$. This example, in ten-dimensions, was first introduced in \citet{Neal2003} to illustrate the difficulty of sampling from some hierarchical models.

In all cases, we run both algorithms using $N=20$ particles, and for $T=1000$ iterations. We initialise the particles according to $\smash{(\theta_0^{i})_{i=1}^N\stackrel{\mathrm{i.i.d.}}{\sim} \mathcal{N}(0,0.1^2)}$. Finally, we use Adagrad \cite{Duchi2011} to adapt the learning rate for SVGD.

\textbf{Numerical Results}. In Fig. \ref{fig:figure1_KSD} and Fig. \ref{fig:figure1_KSD_vs_t}, we provide a more detailed comparison of SVGD and Coin SVGD. In particular, in Fig. \ref{fig:figure1_KSD}, we plot the the KSD for both algorithms after $T=1000$ iterations as a function of the learning rate. Meanwhile, in Fig. \ref{fig:figure1_KSD_vs_t}, we plot the KSD as a function of the iterations, using the optimal learning rate as determined by the results in Fig. \ref{fig:figure1_KSD}, and two other learning rates. In both cases, following \citet{Gorham2017}, we use the inverse multi-quadratic (IMQ) kernel $k(x,x') = (c^2 + ||x-x'||_2^2)^{\beta}$ to compute the KSD, where $c>0$ and $\beta\in(-1,0)$. In the interest of a fair comparison, we also use Adagrad \cite{Duchi2011} to adapt the learning rate in SVGD. 

In these examples, the performance of Coin SVGD is competitive with the best performance of SVGD, using the optimal but a prior unknown learning rate. Moreover, Coin SVGD clearly outperforms SVGD for sub-optimal choices of the learning rate, attaining significantly lower values of the KSD. In particular, using a step size which is too small is insufficient to guarantee convergence within 1000 iterations (Fig. \ref{fig:figure1_KSD_vs_t}, green lines), while using a step size which is too large leads to non-convergence (Fig. \ref{fig:figure1_KSD_vs_t}, red lines) and, ultimately, numerical instability. It is worth emphasising that it is difficult to determine a good step size, or to implement a line-search method, since SVGD does not minimise a simple function. On the other hand, Coin SVGD achieves performance close to, or even better than, the performance of optimally-tuned SVGD, without any need to tune a step size.

\begin{figure*}[t!]
\centering
\subfigure[Gaussian.]{\includegraphics[trim=0 0 0 0, clip, width=.29\textwidth]{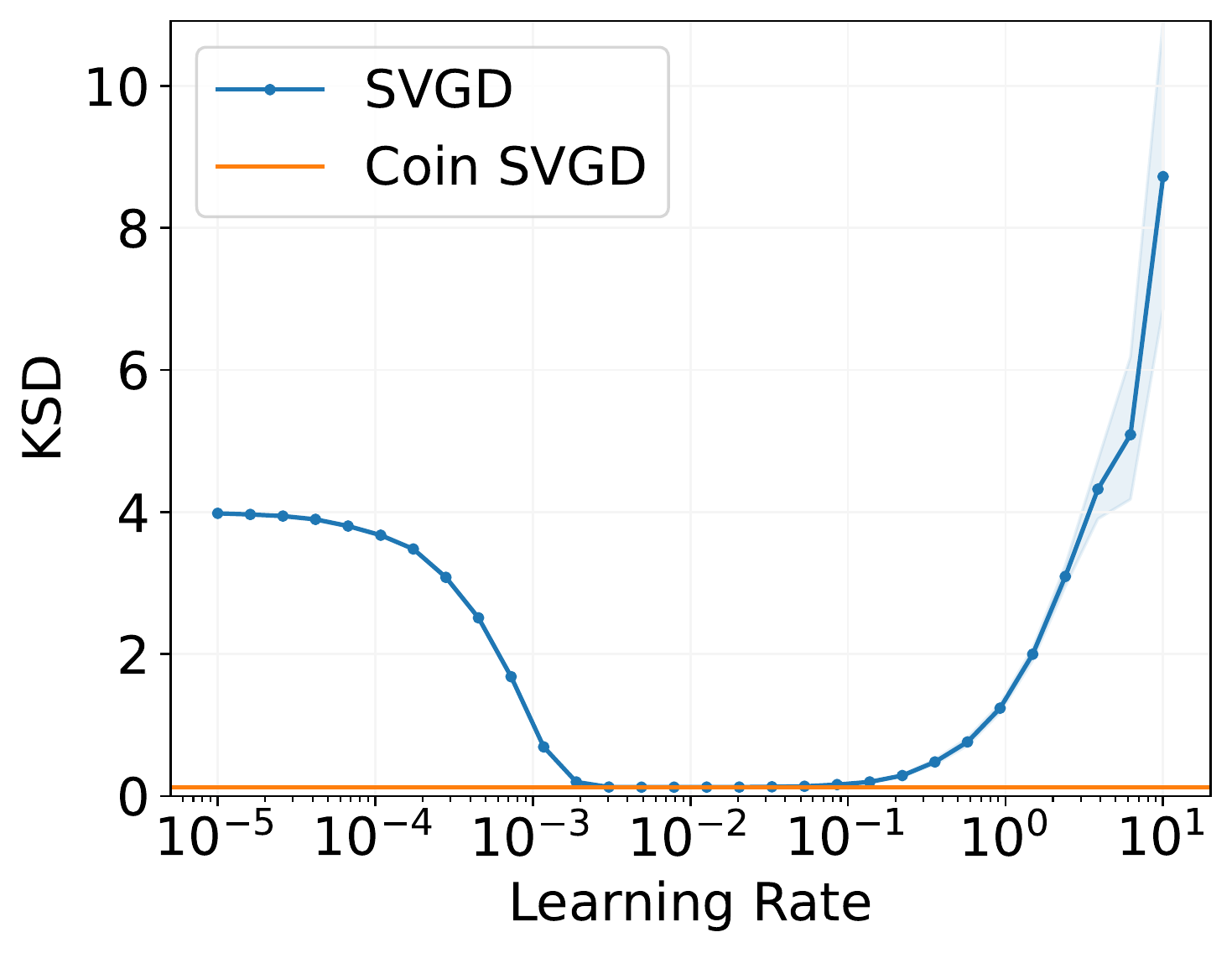}}
\hspace{2mm}
\subfigure[Mixture of Gaussians.]{\includegraphics[trim=0 0 0 0, clip, width=.28\textwidth]{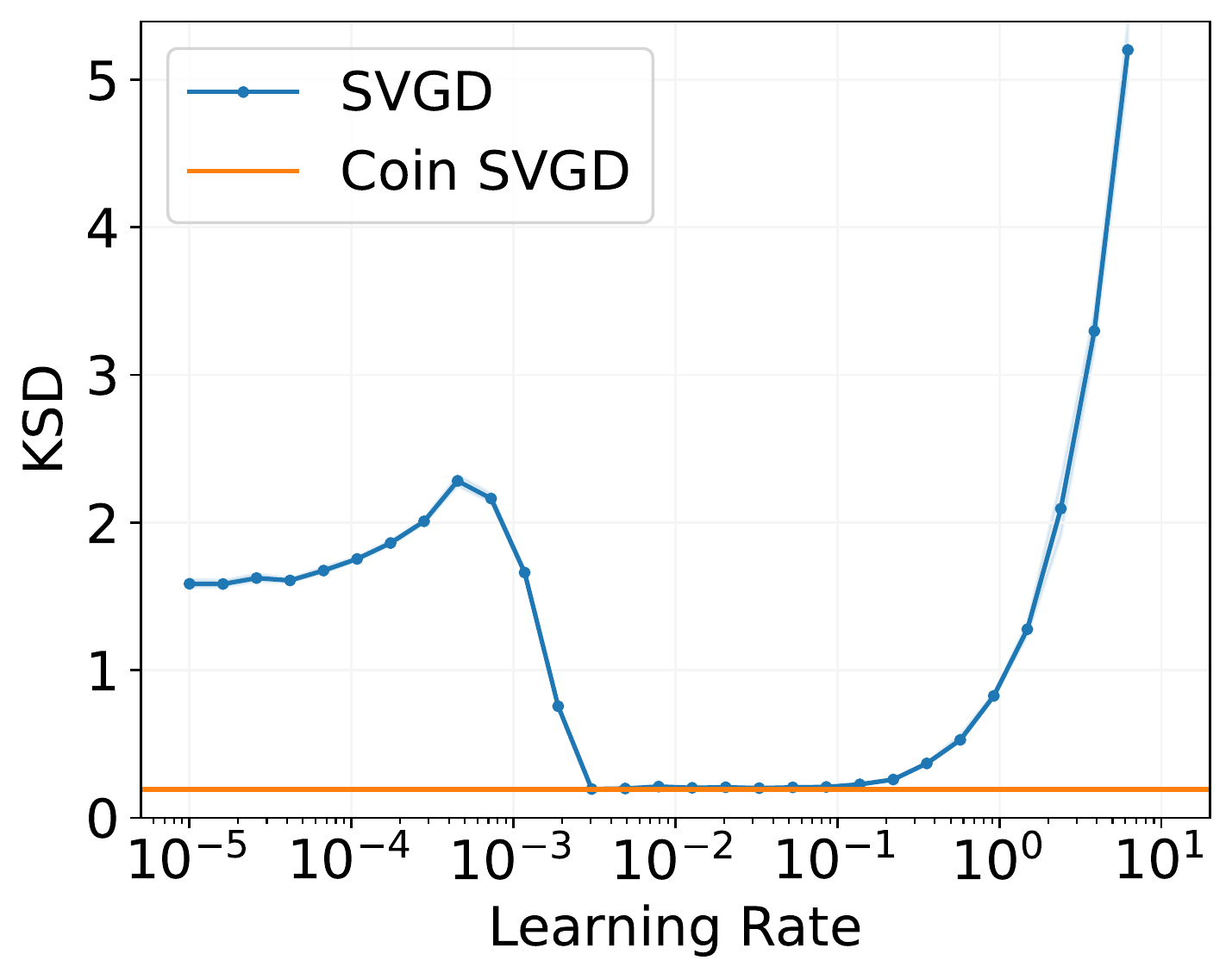}}
\hspace{2mm}
\subfigure[Donut.]{\includegraphics[trim=0 0 0 0, clip, width=.28\textwidth]{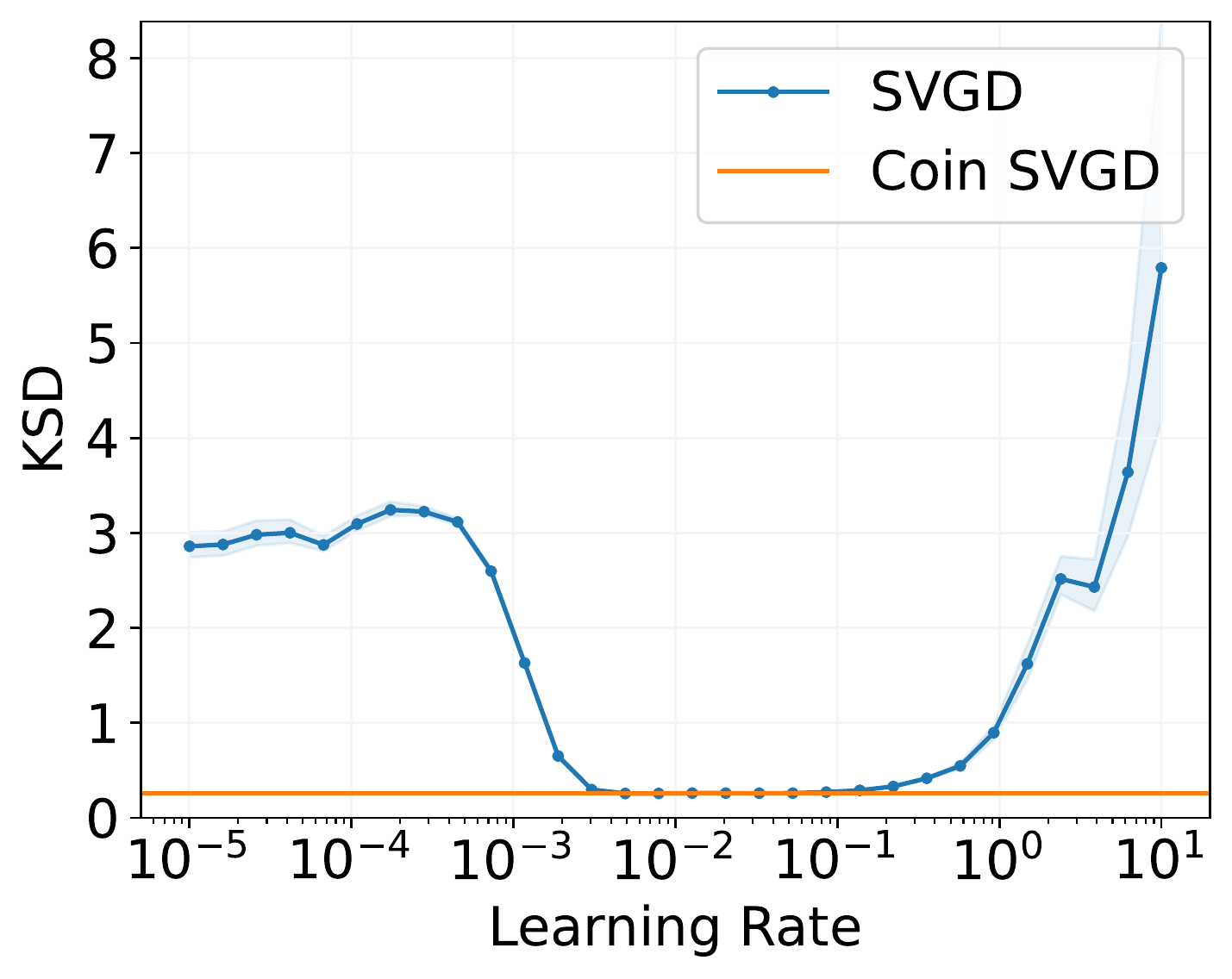}}
\subfigure[Rosenbrock Banana.]{\includegraphics[trim=0 0 0 0, clip, width=.3\textwidth]{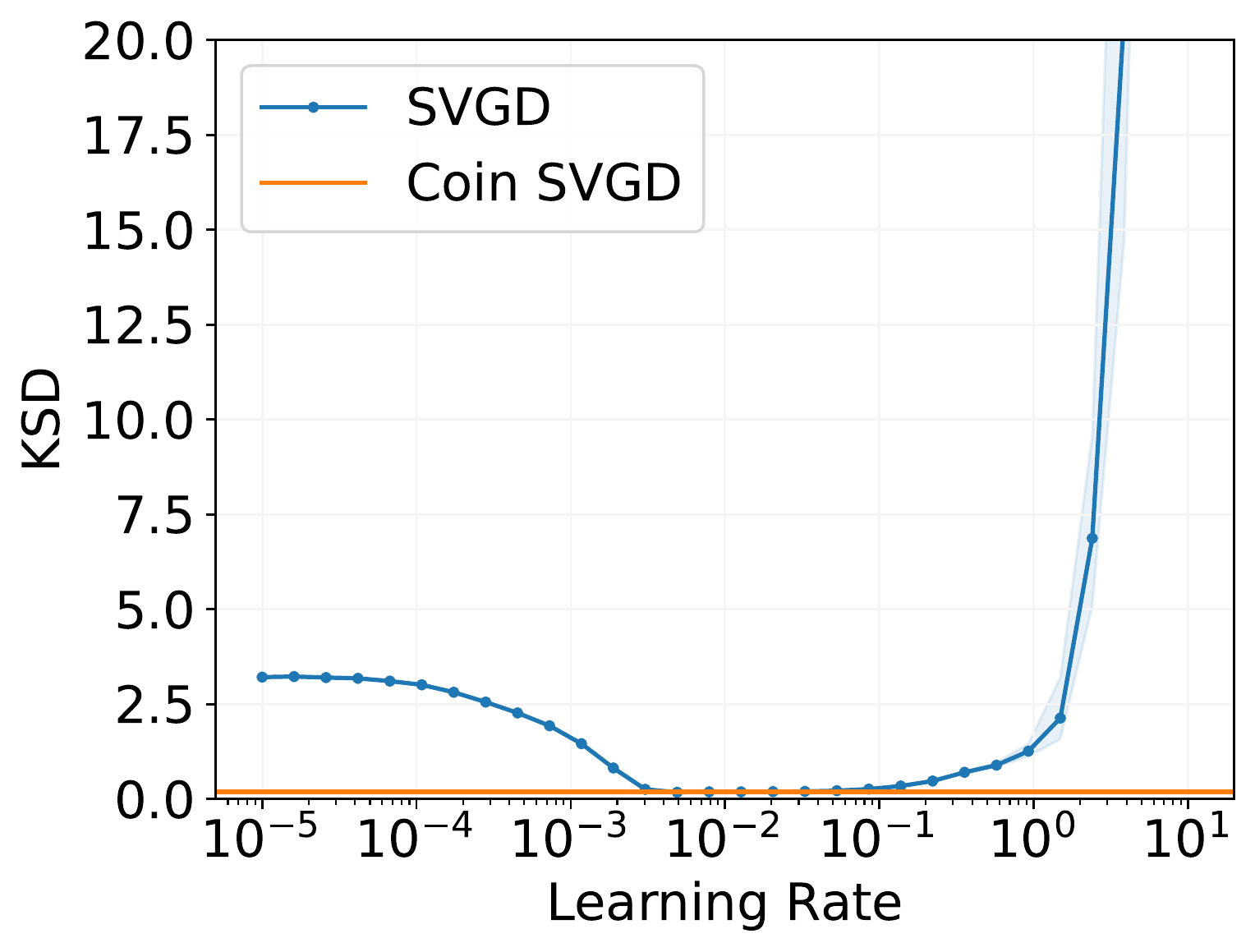}}
\hspace{2mm}
\subfigure[Squiggle.]{\includegraphics[trim=0 0 0 0, clip, width=.28\textwidth]{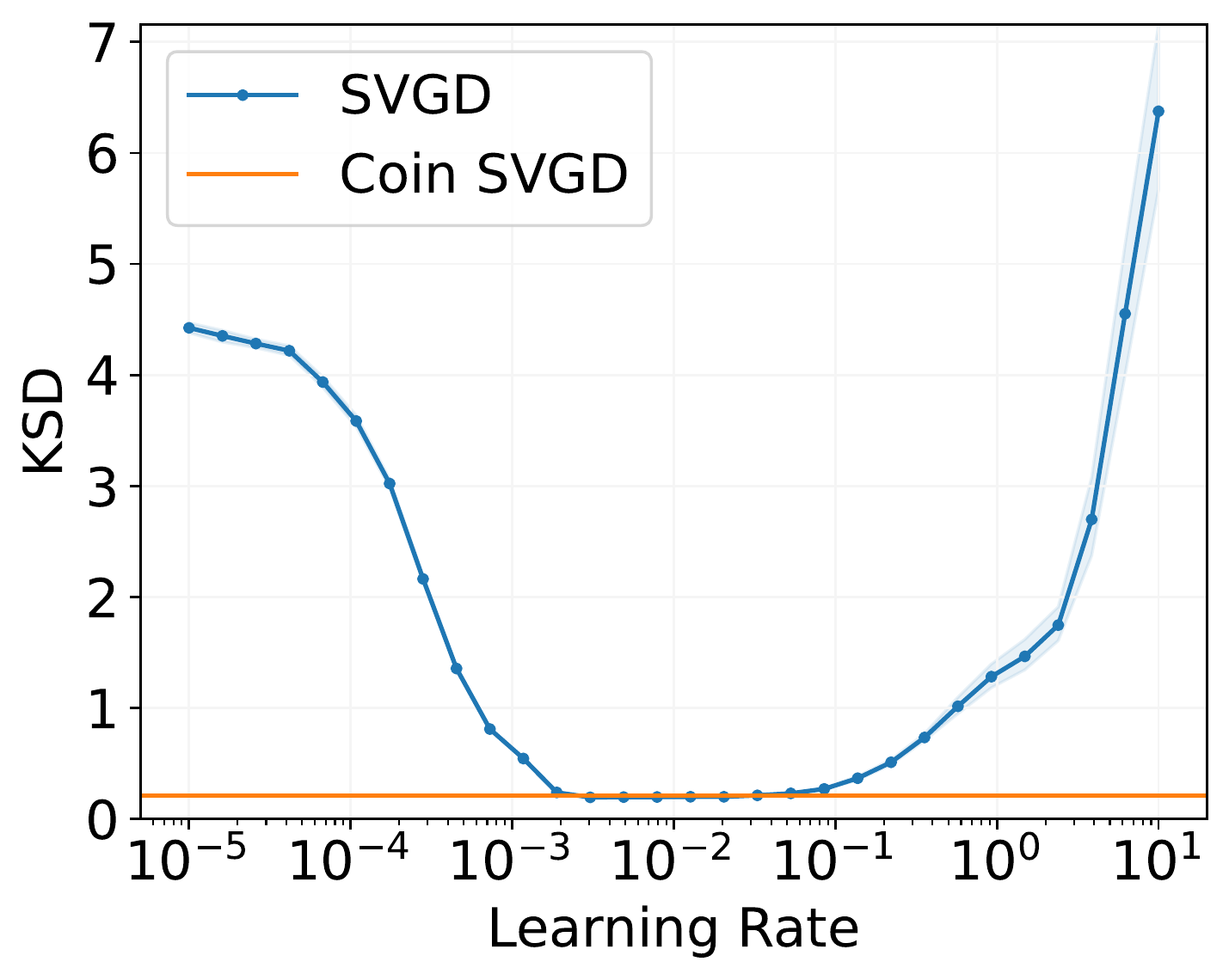}}
\hspace{2mm}
\subfigure[Funnel.]{\includegraphics[trim=0 0 0 0, clip, width=.28\textwidth]{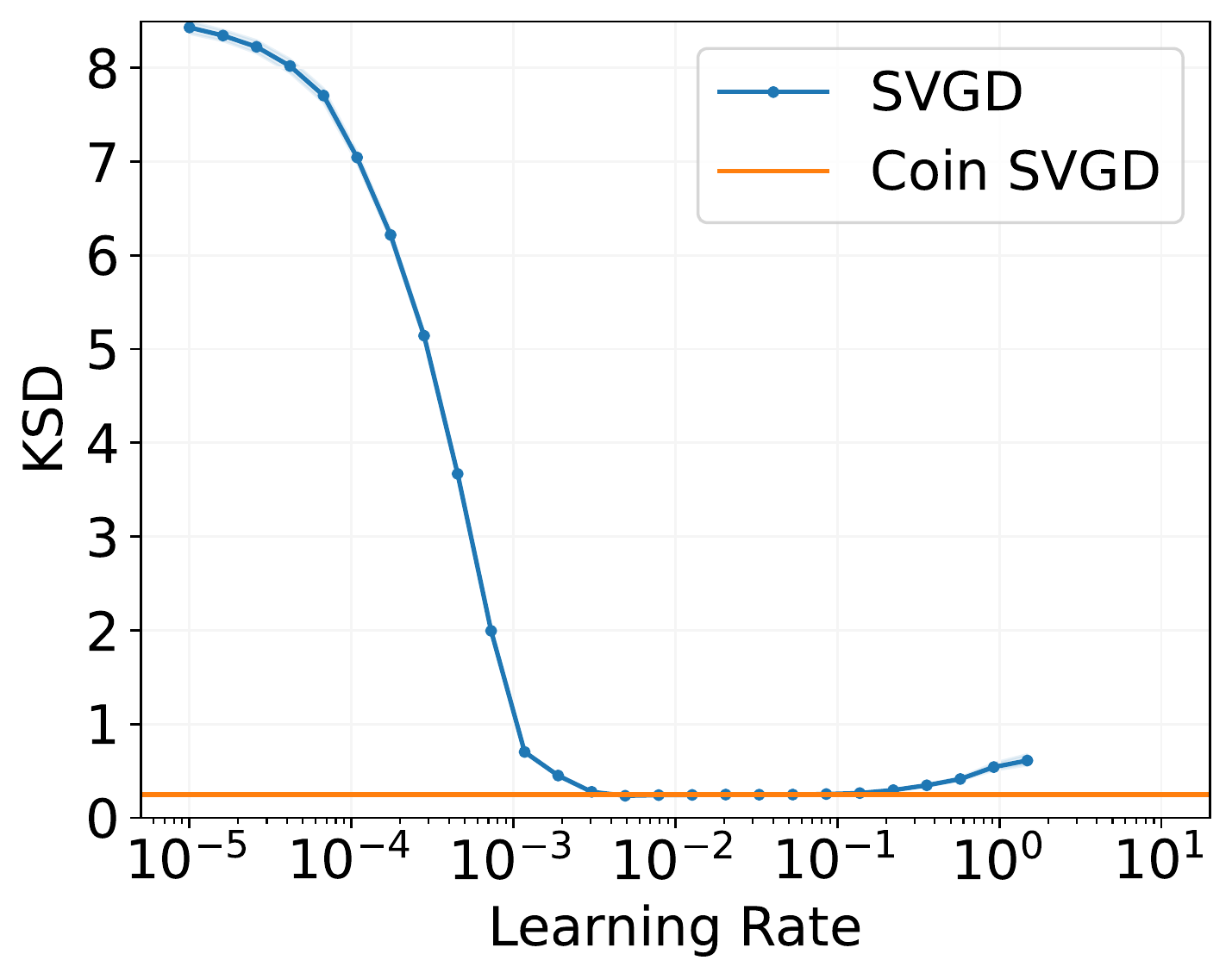}}
\vspace{-1mm}
\caption{\textbf{Additional results for the toy examples in Sec. \ref{sec:toy_examples}.} Plots of the kernel Stein discrepancy (KSD) between each of the target distributions in Fig. \ref{fig:figure1}, and the corresponding approximations generated by Coin SVGD and SVGD. We run SVGD over a grid of 30 logarithmically spaced learning rates $\gamma\in[1\times 10^{-5}, 1\times 10^{1}]$. We run both algorithms for $T=1000$ iterations, and using $N=20$ particles. The results are averaged over 50 random trials.
}
\label{fig:figure1_KSD} 
\vspace{-4mm}
\end{figure*}

\begin{figure*}[t!]
\centering
\subfigure[Gaussian.]{\includegraphics[trim=0 0 0 0, clip, width=.29\textwidth]{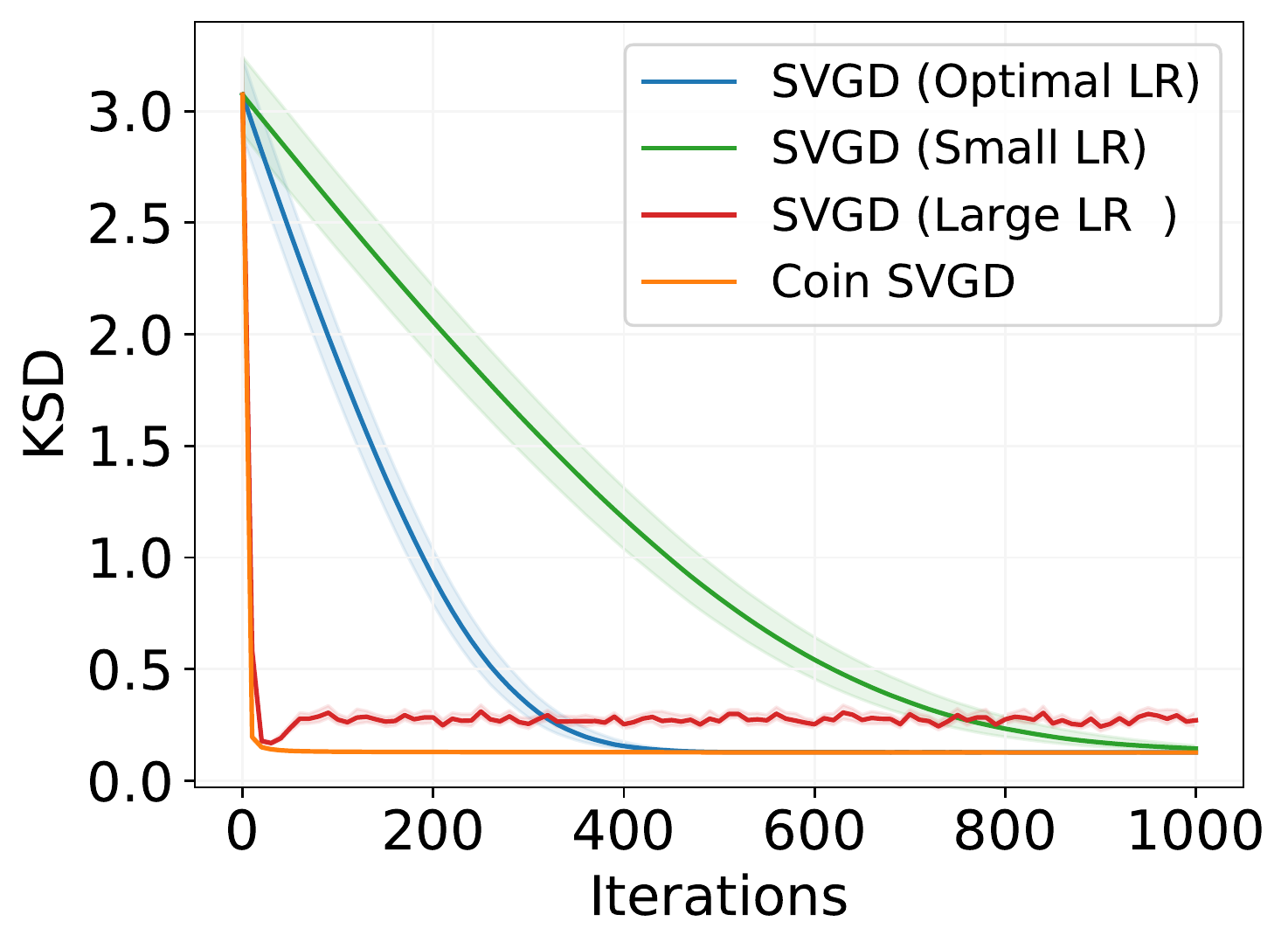}}
\hspace{2mm}
\subfigure[Mixture of Gaussians.]{\includegraphics[trim=0 0 0 0, clip, width=.29\textwidth]{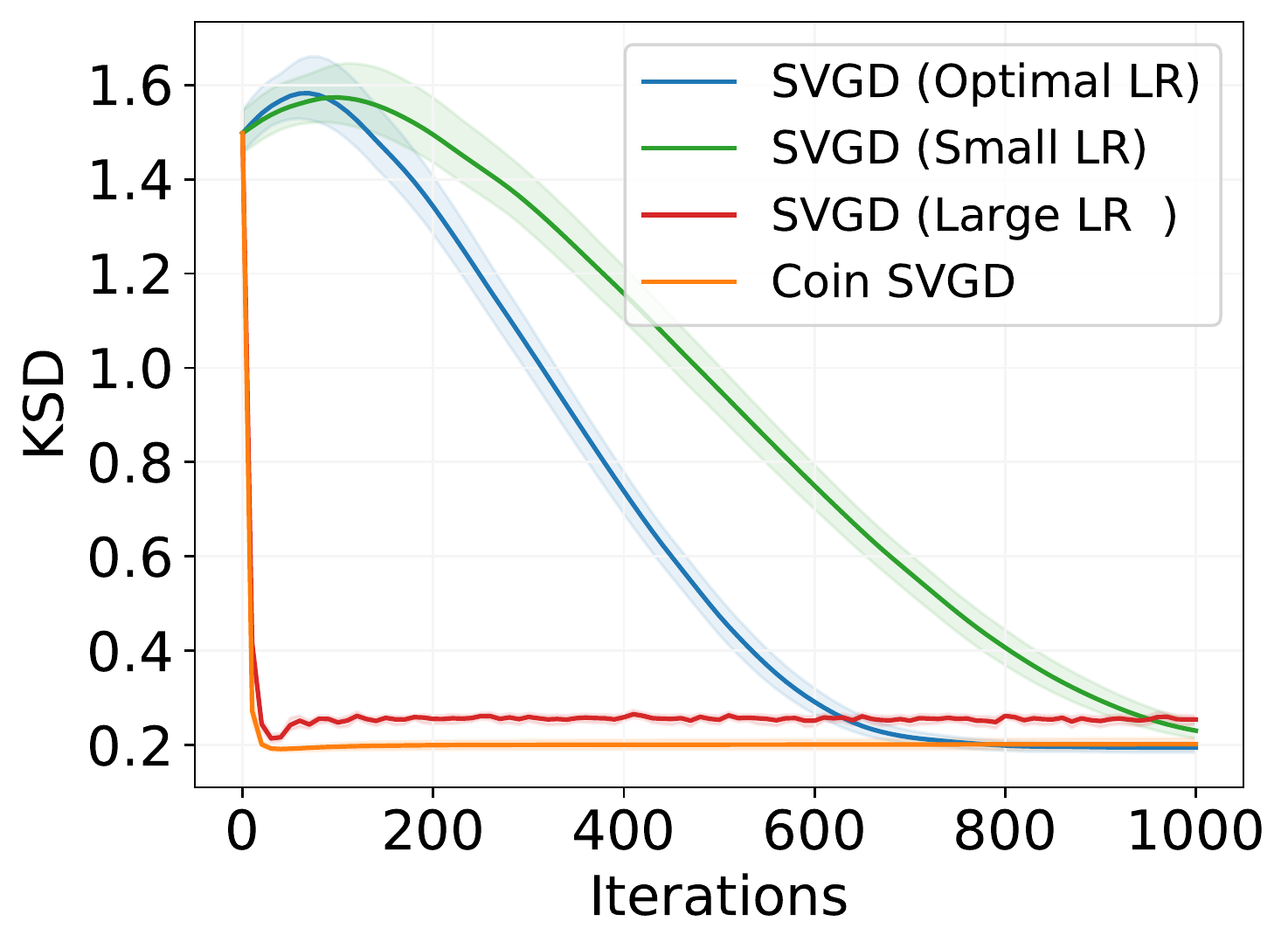}}
\hspace{2mm}
\subfigure[Donut.]{\includegraphics[trim=0 0 0 0, clip, width=.297\textwidth]{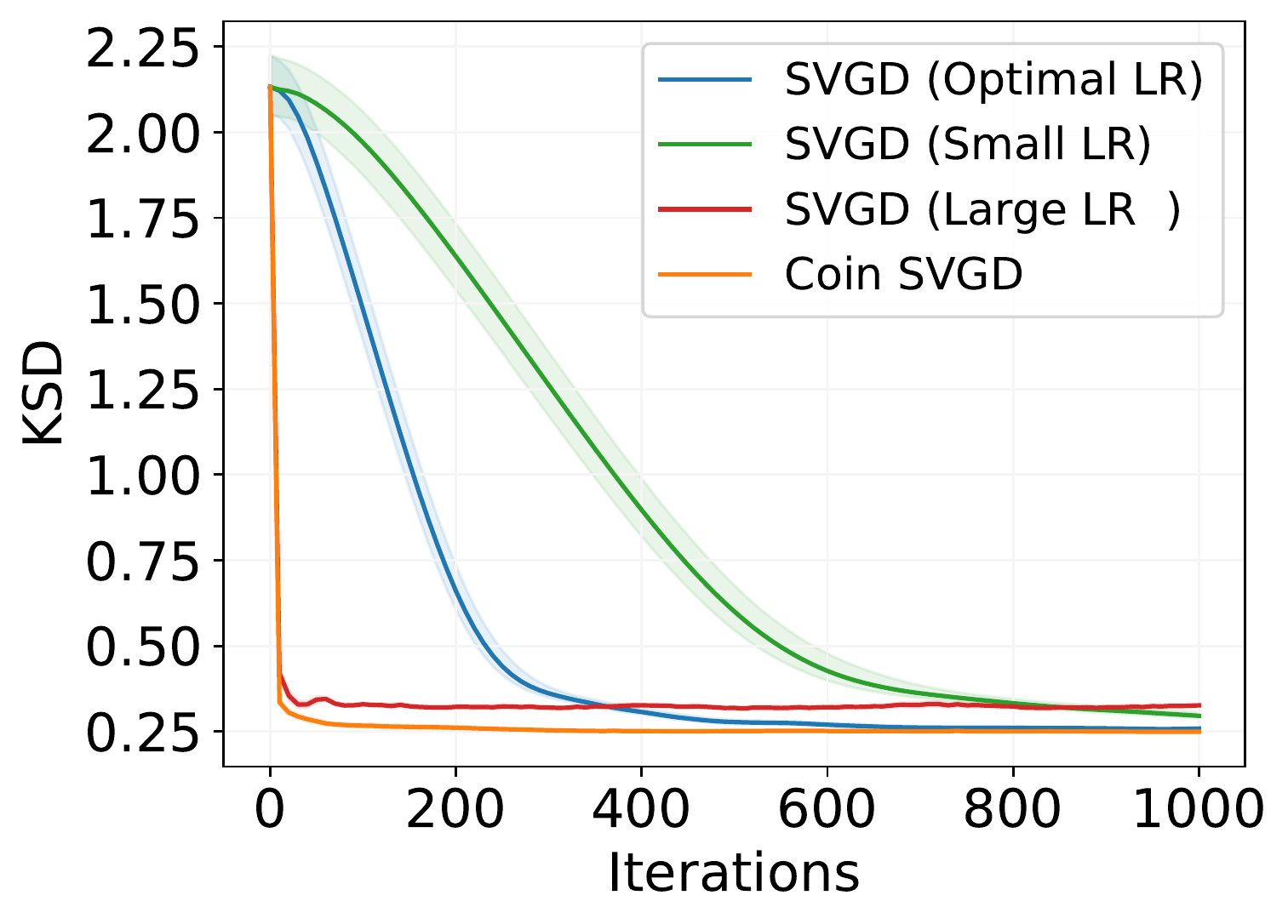}}
\subfigure[Rosenbrock Banana.]{\includegraphics[trim=0 0 0 0, clip, width=.29\textwidth]{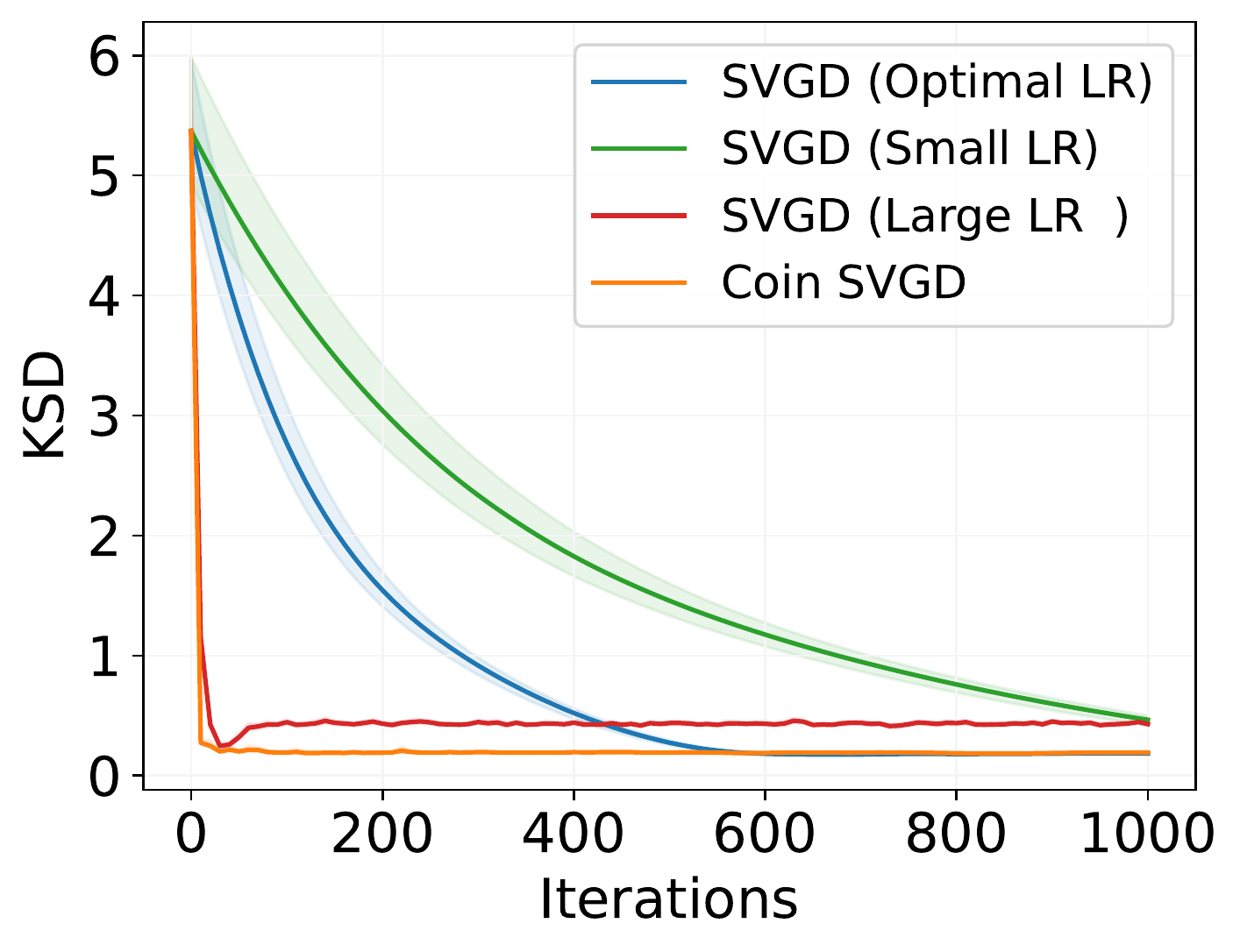}}
\hspace{2mm}
\subfigure[Squiggle.]{\includegraphics[trim=0 0 0 0, clip, width=.302\textwidth]{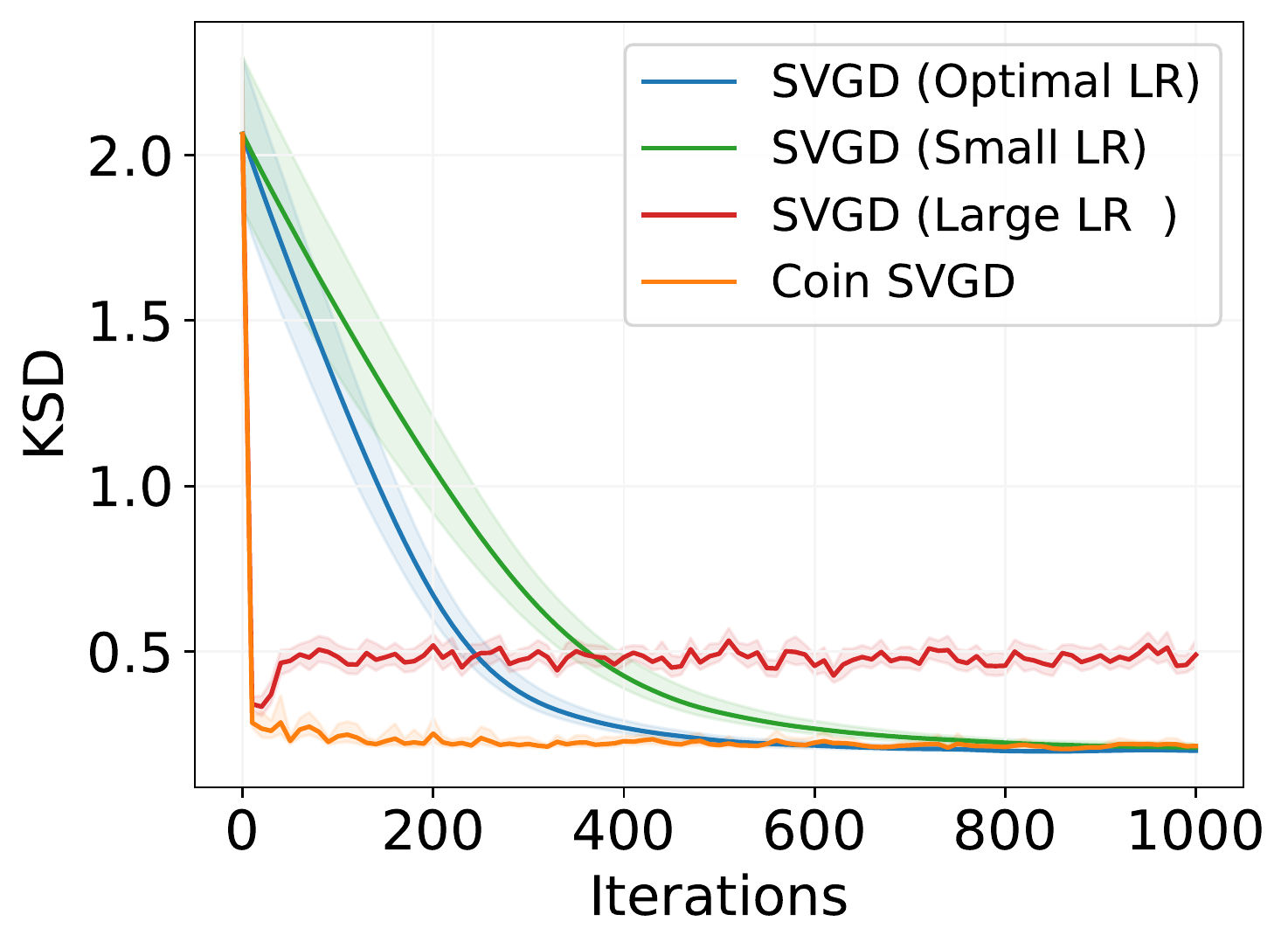}}
\hspace{2mm}
\subfigure[Funnel.]{\includegraphics[trim=0 0 0 0, clip, width=.302\textwidth]{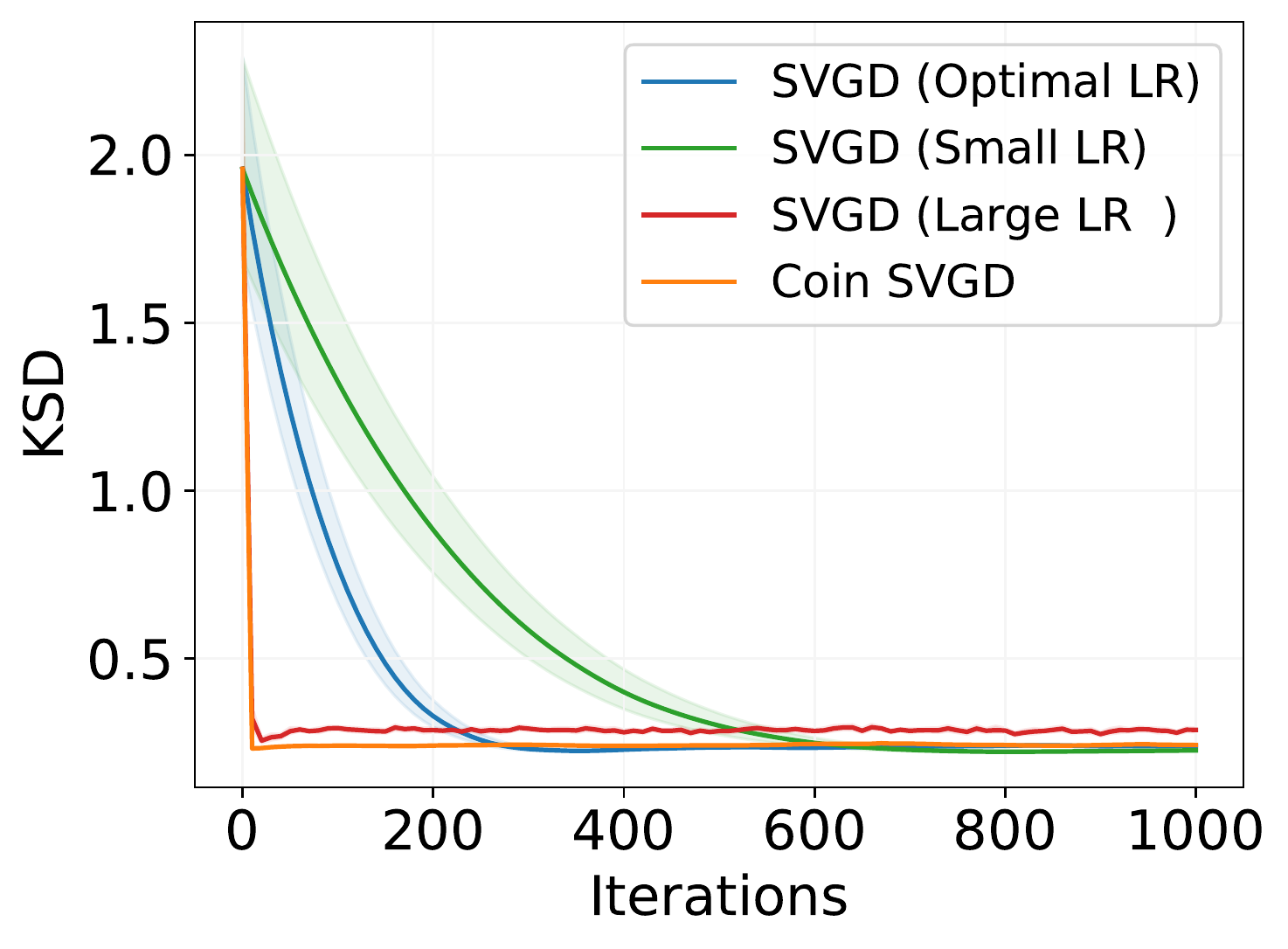}}
\caption{\textbf{Additional results for the toy examples in Sec. \ref{sec:toy_examples}.} Plots of the KSD between each of the target distributions in Fig. \ref{fig:figure1}, and the corresponding approximations generated by Coin SVGD and SVGD, as a function of the number of iterations. We run SVGD use three learning rates: the optimal learning rate as determined by Fig. \ref{fig:figure1_KSD} (blue), a smaller learning rate of $\gamma = 2\times 10^{-3}$ (green), and a larger learning rate of $\gamma = 2\times 10^{-1}$ (red). We run both algorithms for $T=1000$ iterations, and using $N=20$ particles. The results are averaged over 50 random trials.
}
\label{fig:figure1_KSD_vs_t} 
\vspace{-4mm}
\end{figure*}

\begin{figure*}[b!]
\vspace{-4mm}
\centering
\subfigure[Gaussian.]{\includegraphics[trim=0 0 0 0, clip, width=.33\textwidth]{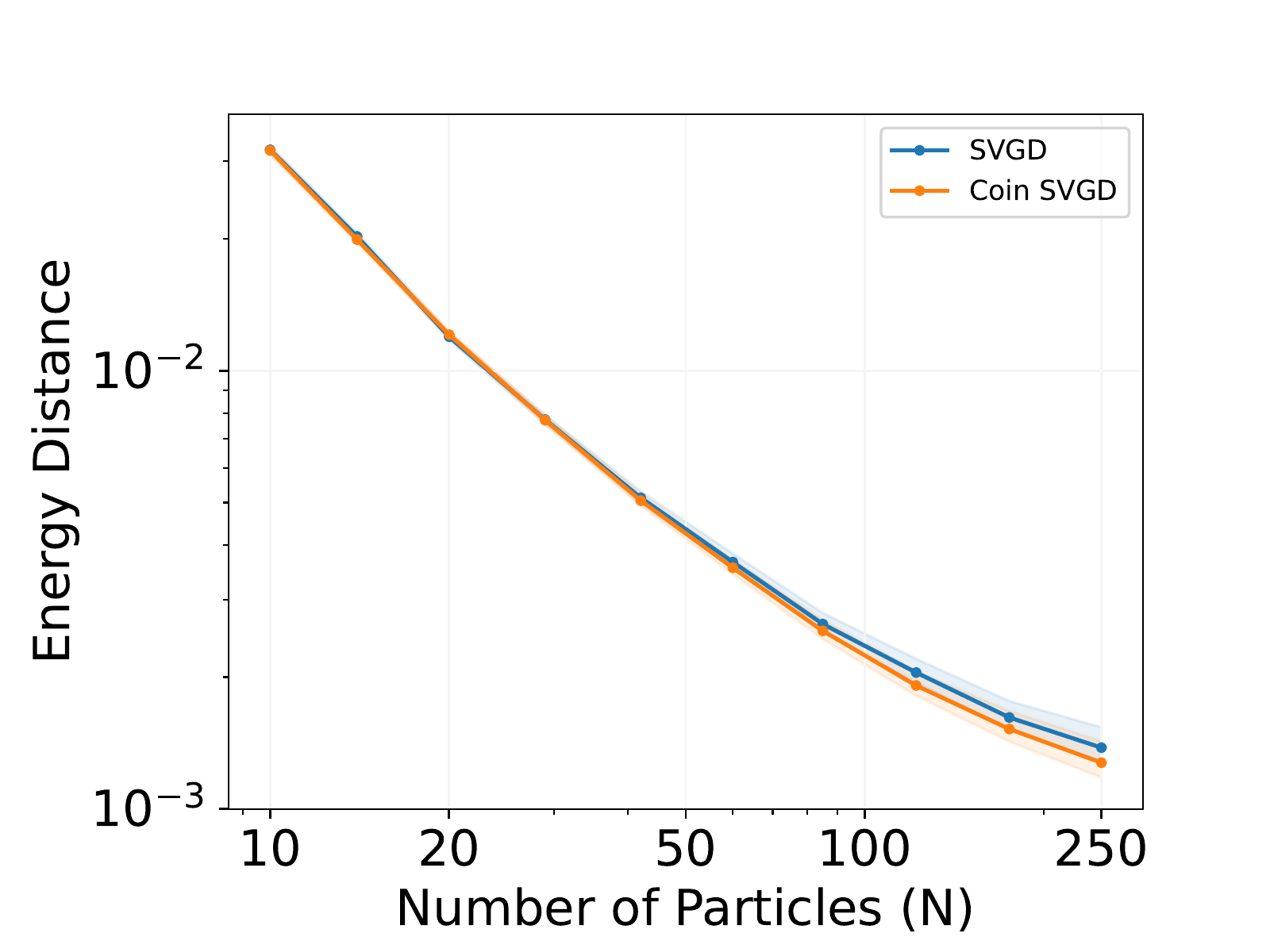}}
\subfigure[Mixture of Gaussians.]{\includegraphics[trim=0 0 0 0, clip, width=.33\textwidth]{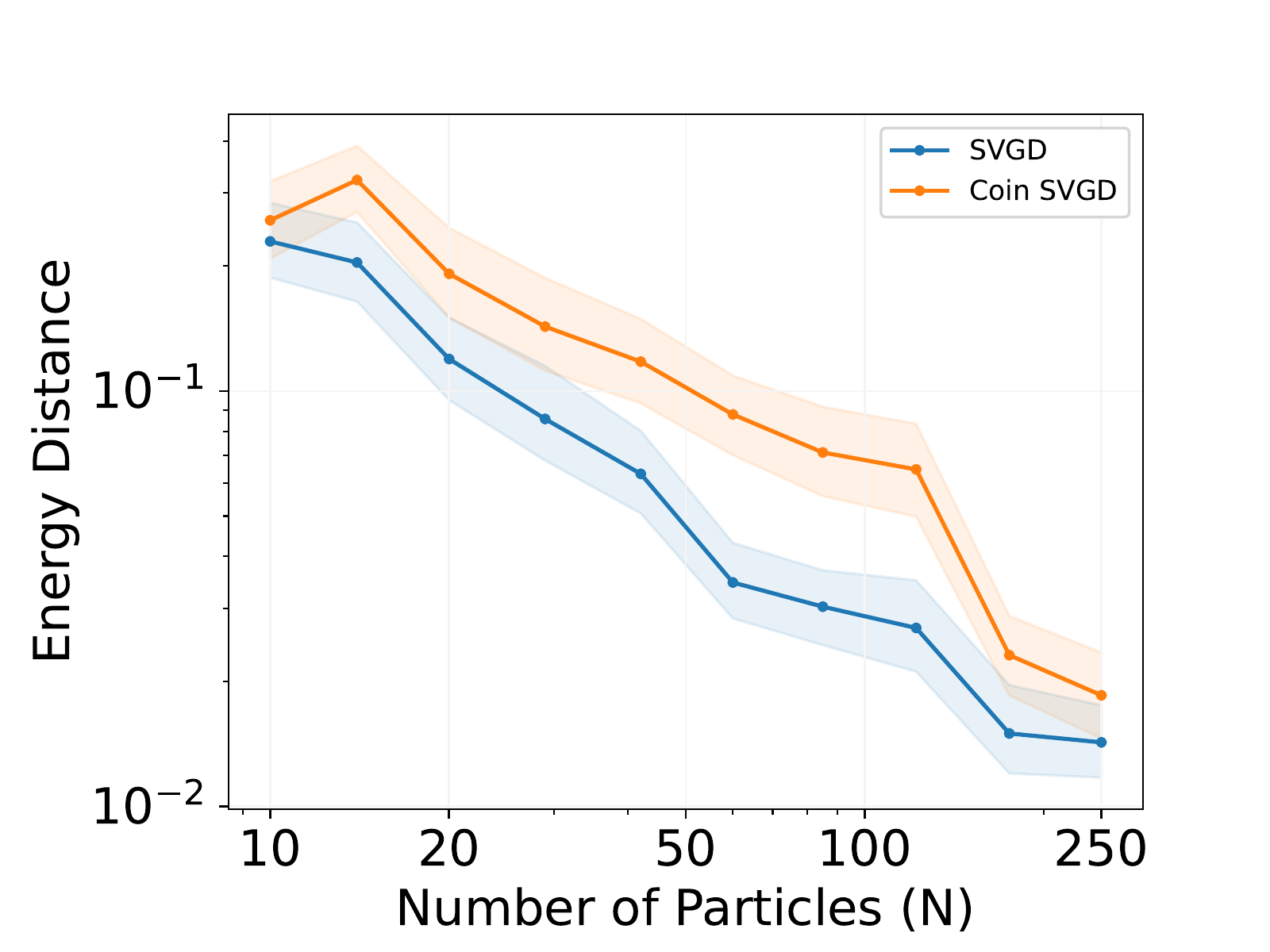}}
\subfigure[Squiggle.]{\includegraphics[trim=0 0 0 0, clip, width=.33\textwidth]{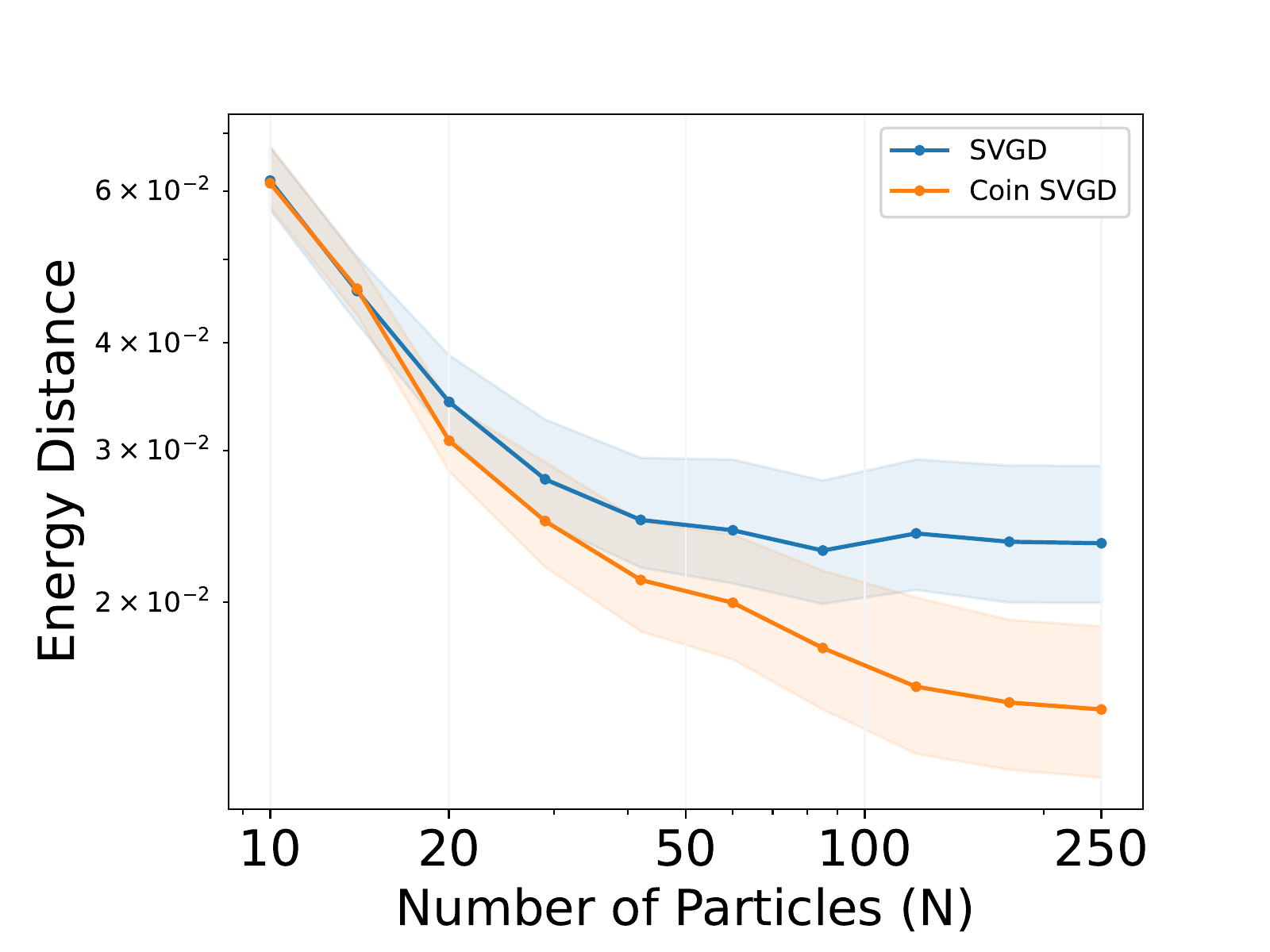}}
\vspace{-2mm}
\caption{\textbf{Additional results for the toy examples in Sec. \ref{sec:toy_examples}.} Plots of the energy distance between a subset of the target distributions in Fig. \ref{fig:figure1}, and the corresponding approximations generated by Coin SVGD and SVGD, as a function of the number of particles. In each case, we run SVGD using the best learning rate as determined by the results in Fig. \ref{fig:figure1_KSD}. We run both algorithms for $T=1000$ iterations, and using between $N=10$ and $N=250$ particles. The results are averaged over 50 random trials.
}
\label{fig:figure1_energy} 
\end{figure*}

In Fig \ref{fig:figure1_energy} and \ref{fig:KSD_v_N}, we further investigate the performance of our algorithm as a function of the number of particles $N$. In particular, we now compute the energy distance (Fig. \ref{fig:figure1_energy}) and the KSD (Fig. \ref{fig:KSD_v_N}) for SVGD and Coin SVGD as a function of $N$. In terms of tuning the learning rate used by SVGD, we consider two possibilities. The first is that the learning rate is only tuned once, using a fixed and pre-determined number of particles. This is the case in Fig. \ref{fig:figure1_energy}. More generally, this may be a realistic scenario in settings where one would like to use a large number of particles (e.g., for complex targets), but it is prohibitively expensive to tune the learning rate using this number (e.g., for high-dimensional targets, or big data settings). In the second case, the learning rate is re-tuned every time we change the number of particles.  In both cases, we determined the optimal learning rate by running SVGD over a grid of 30 learning rates $\gamma\in[1\times 10^{-5}, 1\times 10^{1}]$, and selecting the learning rate which achieves the lowest KSD after 1000 iterations (as in Fig. \ref{fig:KSD}).

For both algorithms, the energy distance and the KSD converge towards zero as the number of particles increases, suggesting that the approximate posterior samples generated provide increasingly accurate approximations of the true posterior \citep[e.g.][Theorem 8]{Gorham2017}. Interestingly, our results also suggest that the best choice of learning rate in SVGD can be somewhat sensitive to the number of particles. In particular, if one tunes the SVGD learning rate using a small number of particles (in Fig. \ref{fig:figure1_KSD}, we use $N=20$), and then runs the full algorithm using a large value of $N$ with the same learning rate, this can lead to sub-optimal performance (e.g., Fig. \ref{fig:KSD_v_Nb}, Fig. \ref{fig:KSD_v_Nd}, or Fig. \ref{fig:KSD_v_Ne}). Coin SVGD suffers no such problems, and provides us with an approach which is robust to the specification of the number of particles.

\begin{figure*}[t!]
\centering
\subfigure[Gaussian.]{\includegraphics[trim=0 0 0 0, clip, width=.3\textwidth]{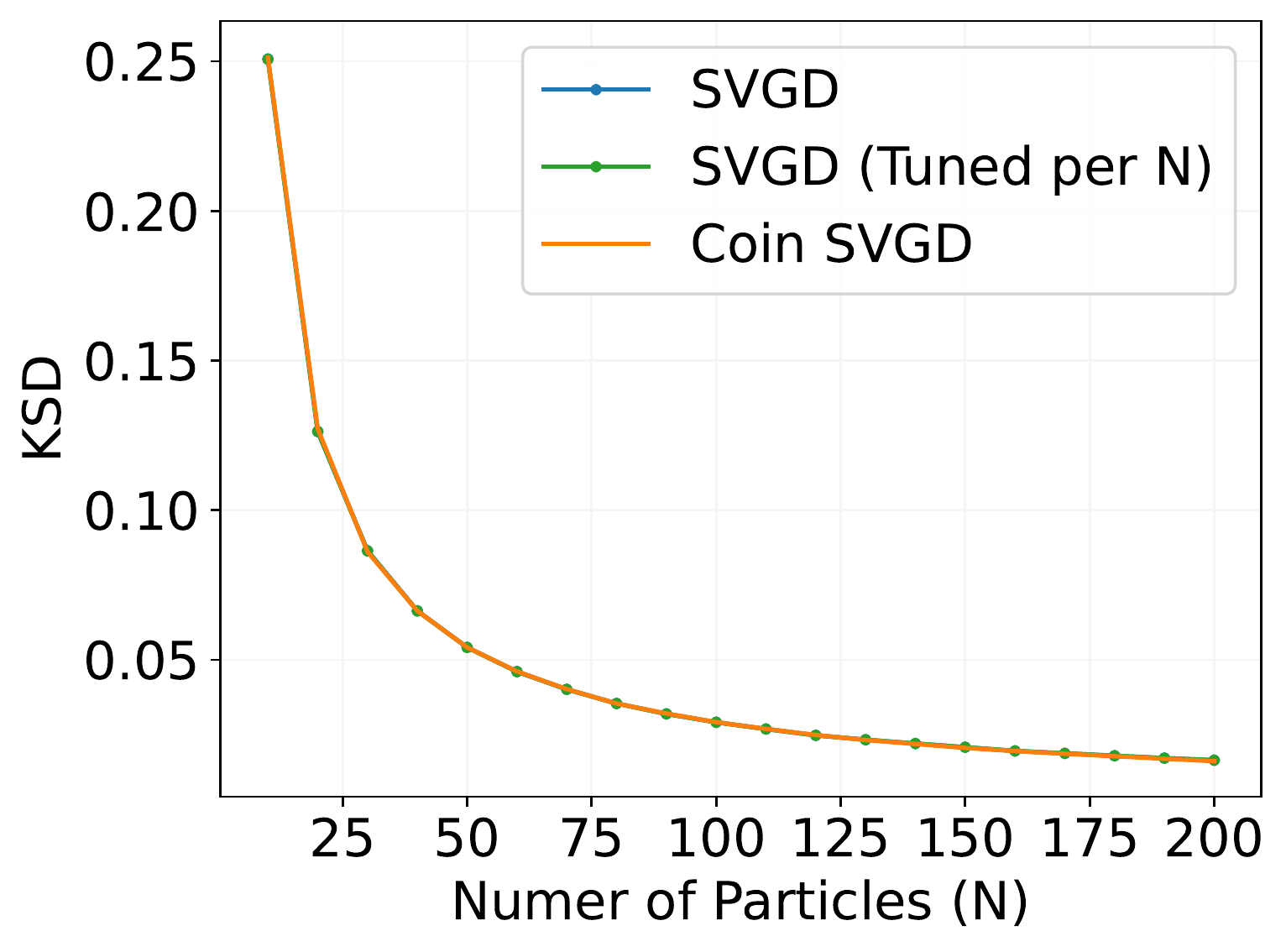}}
\subfigure[Mixture of Gaussians. \label{fig:KSD_v_Nb}]{\includegraphics[trim=0 0 0 0, clip, width=.3\textwidth]{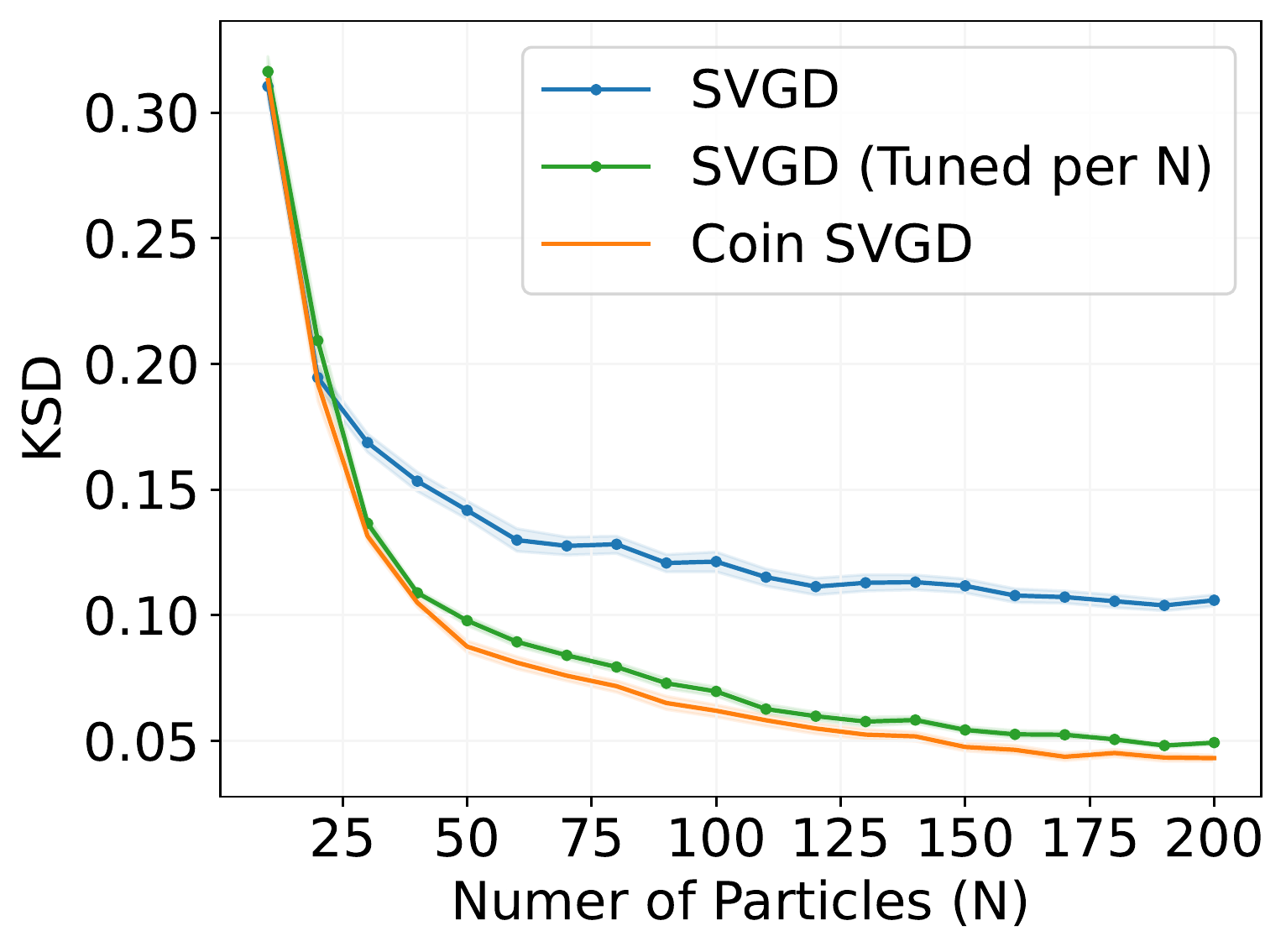}}
\subfigure[Donut.]{\includegraphics[trim=0 0 0 0, clip, width=.3\textwidth]{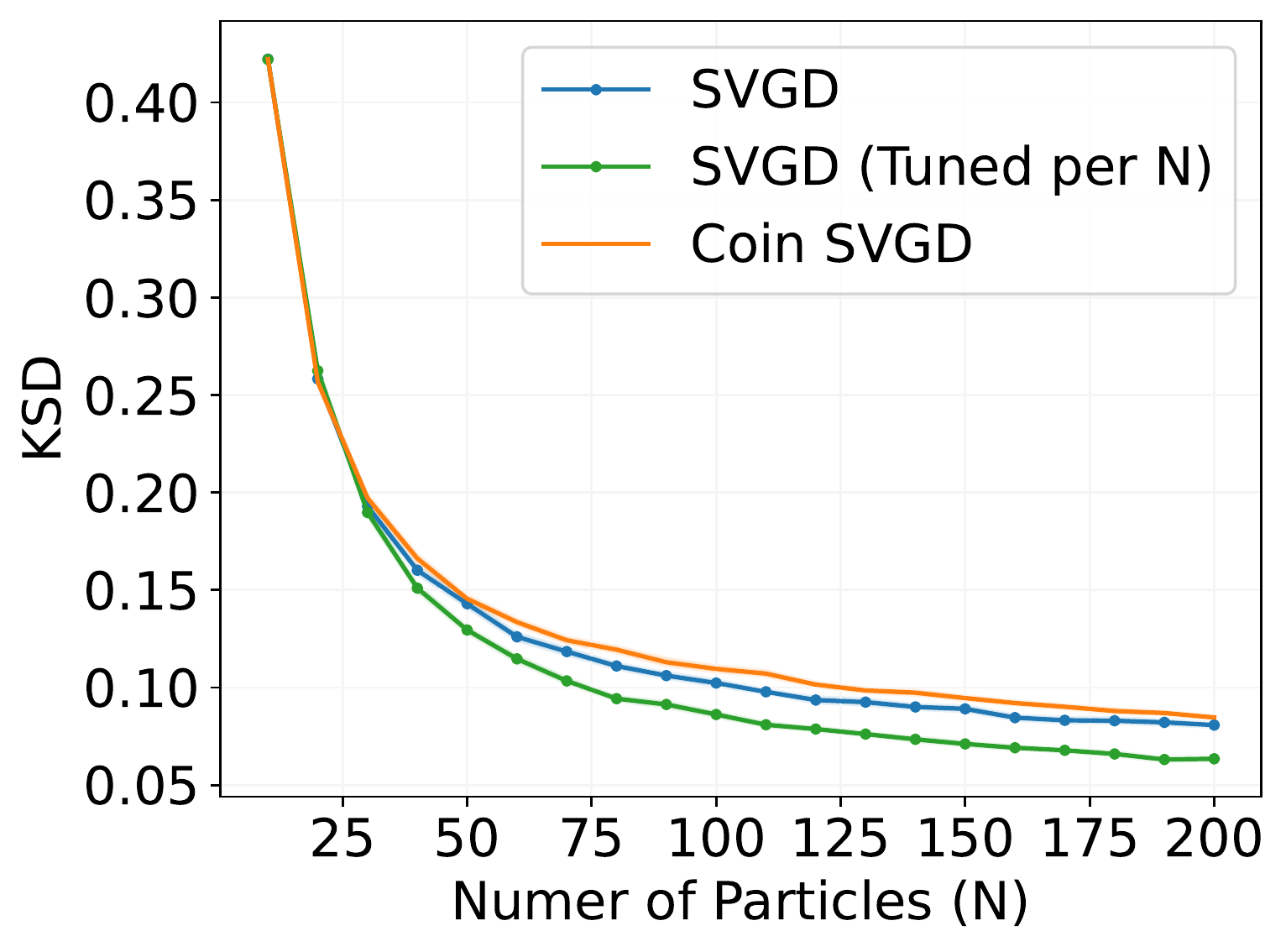}}
\subfigure[Rosenbrock Banana. \label{fig:KSD_v_Nd}]{\includegraphics[trim=0 0 0 0, clip, width=.3\textwidth]{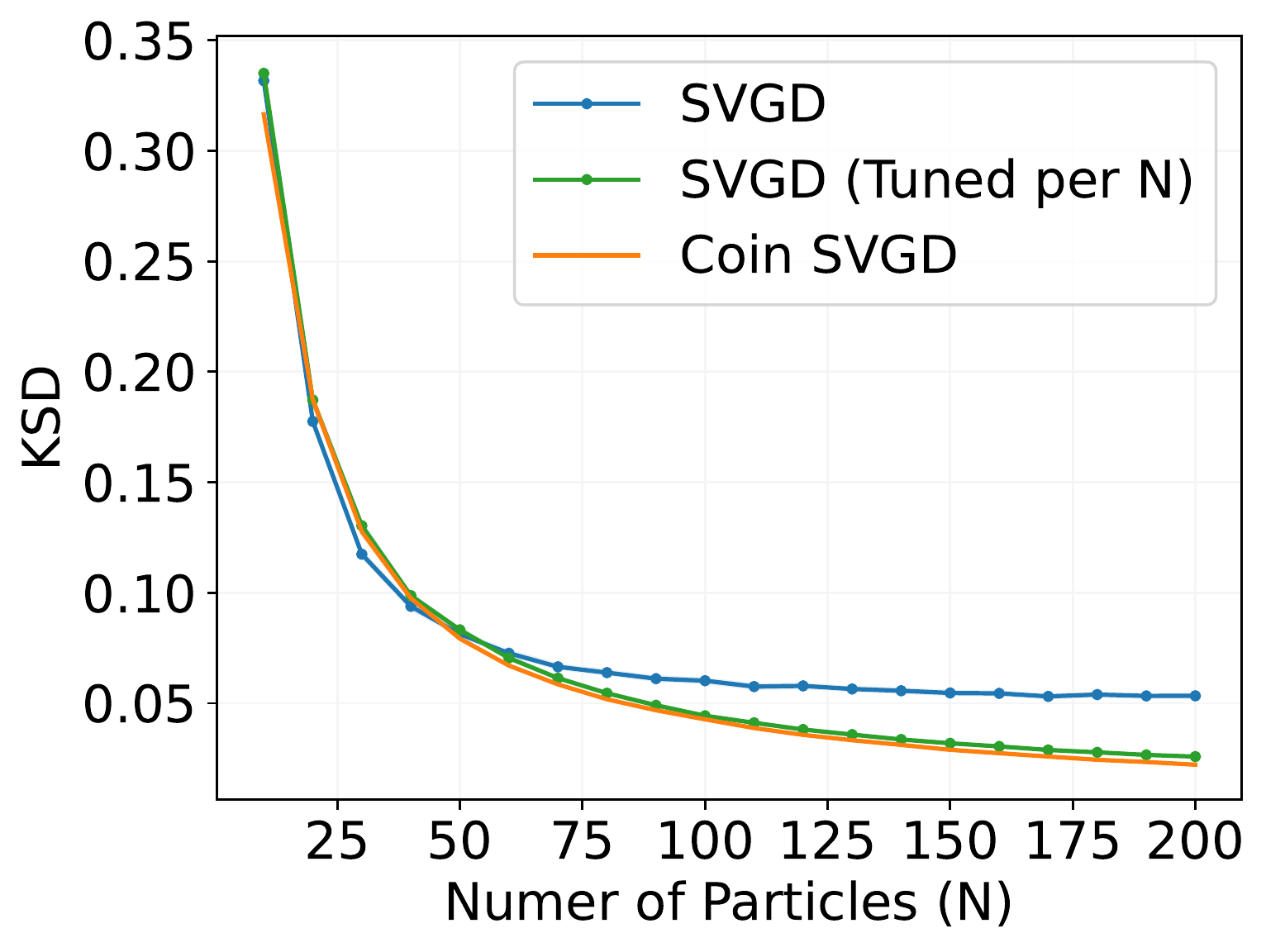}}
\subfigure[Squiggle. \label{fig:KSD_v_Ne}]{\includegraphics[trim=0 0 0 0, clip, width=.3\textwidth]{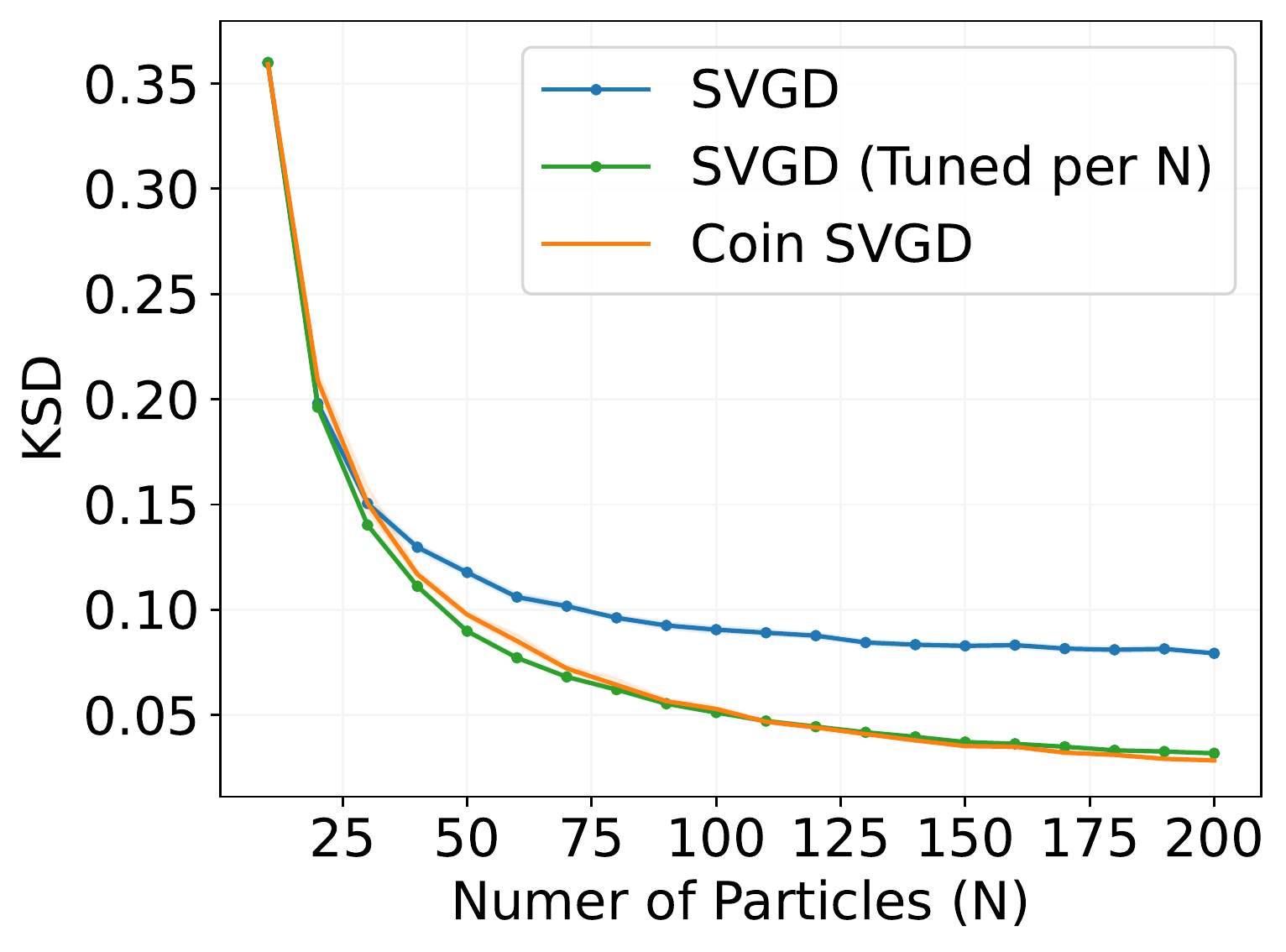}}
\subfigure[Funnel.]{\includegraphics[trim=0 0 0 0, clip, width=.3\textwidth]{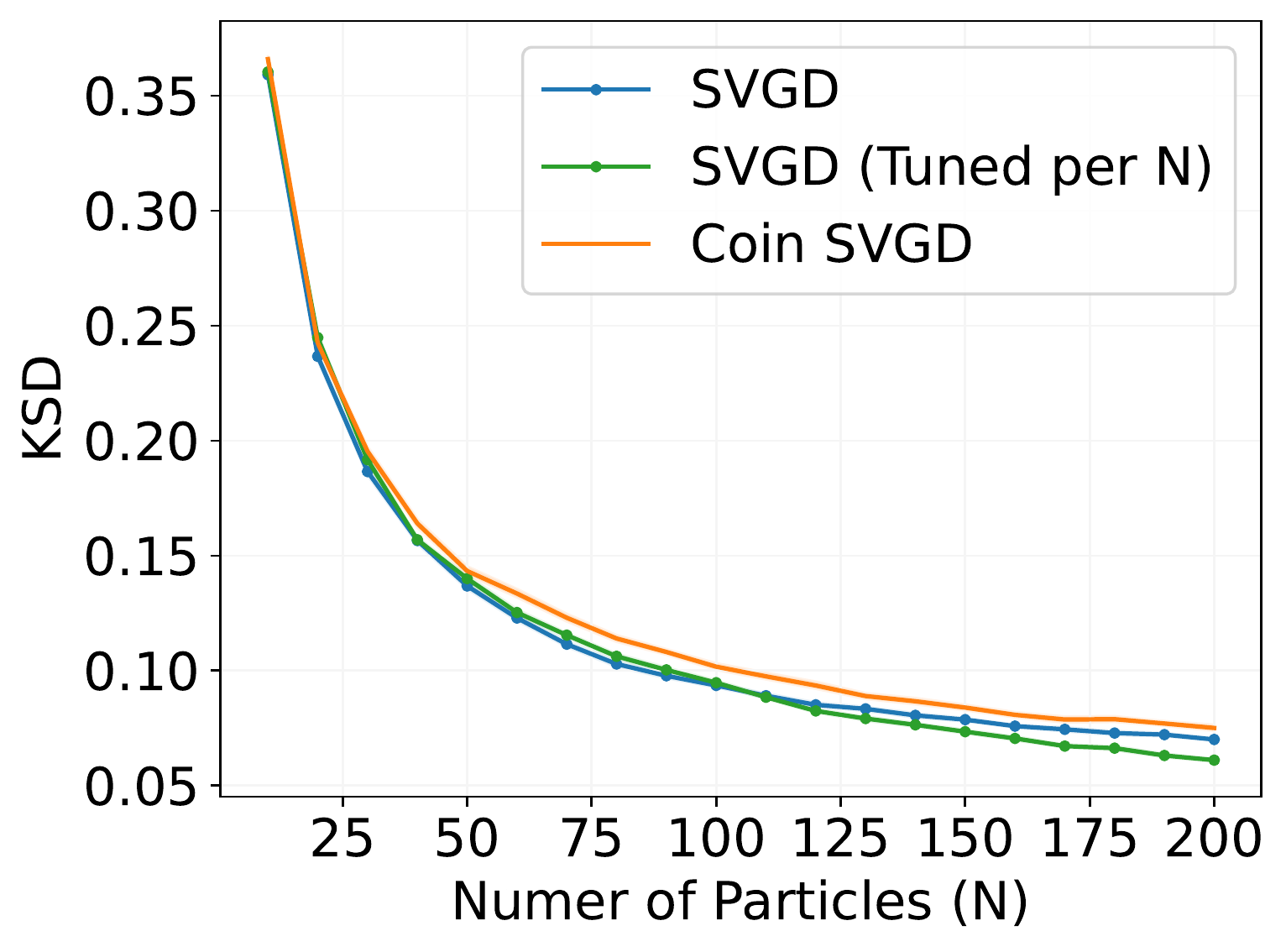}}
\caption{\textbf{Additional results for the toy examples in Sec. \ref{sec:toy_examples}.} Plots of the KSD between each of the target distributions in Fig. \ref{fig:figure1}, and the corresponding approximations generated by Coin SVGD and SVGD, as a function of the number of particles $N$. We run both algorithms for $T=1000$ iterations. For SVGD, we either tune the learning rate once, using a fixed and pre-determined $N=20$ particles (blue); or re-tune the learning rate every time we change the number of particles (green). The results are averaged over 50 random trials.
}
\label{fig:KSD_v_N} 
\vspace{-8mm}
\end{figure*}

\subsubsection{Coin LAWGD}
\label{sec:numerics_LAWGD}
\textbf{Experimental Details}. We next compare the performance of LAWGD \cite{Chewi2020} and Coin LAWGD (Alg. \ref{alg:param_free_lawgd}) on the following examples.

\emph{One-Dimensional Gaussian}. We begin by considering a simple one-dimensional Gaussian, with density $p(x) = \mathcal{N}(x;\mu,\sigma^2)$, where $\mu=3$ and $\sigma^2=1.5$. 

\emph{Mixture of Three One-Dimensional Gaussians}. We also consider a mixture of three one-dimensional Gaussians, with $p(x) = \sum_{i=1}^3 \alpha_i \mathcal{N}(x|\mu_i, \sigma_i^2)$, where we set $\alpha_1 =\frac{1}{3}$, $\mu_1=6$, $\sigma_1^2=2$; $\alpha_2 =\frac{1}{2}$, $\mu_2=-3$, and $\sigma_2^2 =1$; and $\alpha_3 =\frac{1}{6}$, $\mu_3=2$, and $\sigma_3^2 =1$. 

We use either $N=100$ particles (Fig. \ref{fig:LAWGDa}) or $N=25$ particles (Fig. \ref{fig:LAWGDb}). The initial particles are i.i.d. $\mathcal{U}(-1,1)$, or i.i.d. $\mathcal{U}(-2,2)$.  In both cases, we run the algorithms for $T=2500$ iterations.  

To compute the kernel $k_{\mathcal{L}}$ required to implement LAWGD and Coin LAWGD, it is necessary to approximate the eigenfunctions and eigenvalues of the operator $\mathcal{L}$ \citep[][Sec. 5]{Chewi2020}. Following \citet{Chewi2020}, we approximate these quantities using a basic finite difference scheme. For the target in Fig. \ref{fig:LAWGDa}, we use a finite difference scheme based on 1000 equally spaced grid points between -8 and 8, and use the first 150 eigenvalues and eigenfunctions. For the target in Fig. \ref{fig:LAWGDb}, we use 500 equally spaced grid points between -16 and 16, and again use the first 150 eigenvalues and eigenfunctions.

\textbf{Numerical Results}. In Fig. \ref{fig:LAWGD}, we plot an illustrative set of samples obtained using Coin LAWGD and LAWGD for these two examples. Similar to our other coin sampling algorithms, we see that Coin LAWGD converges to the target distribution for both of our test cases, and enjoys a similar performance to the standard LAWGD algorithm.

\begin{figure}[t!]
\centering
\subfigure[Gaussian. \label{fig:LAWGDa}]{\includegraphics[trim=0 16mm 0 16mm, clip, width=.485\textwidth]{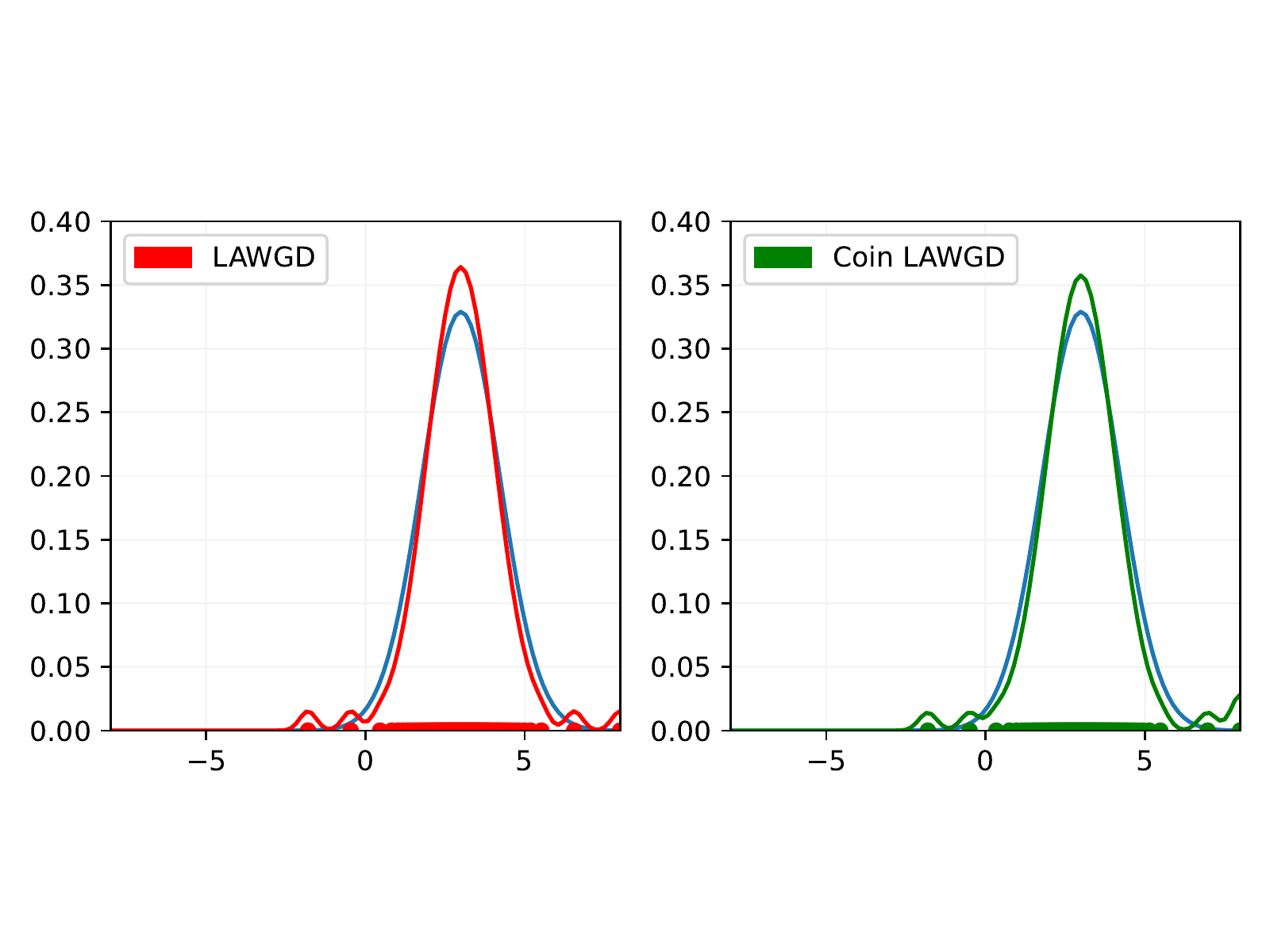}} \hfill
\subfigure[Mixture of Three Gaussians. \label{fig:LAWGDb}]{\includegraphics[trim=0 16mm 0 16mm, clip, width=.485\textwidth]{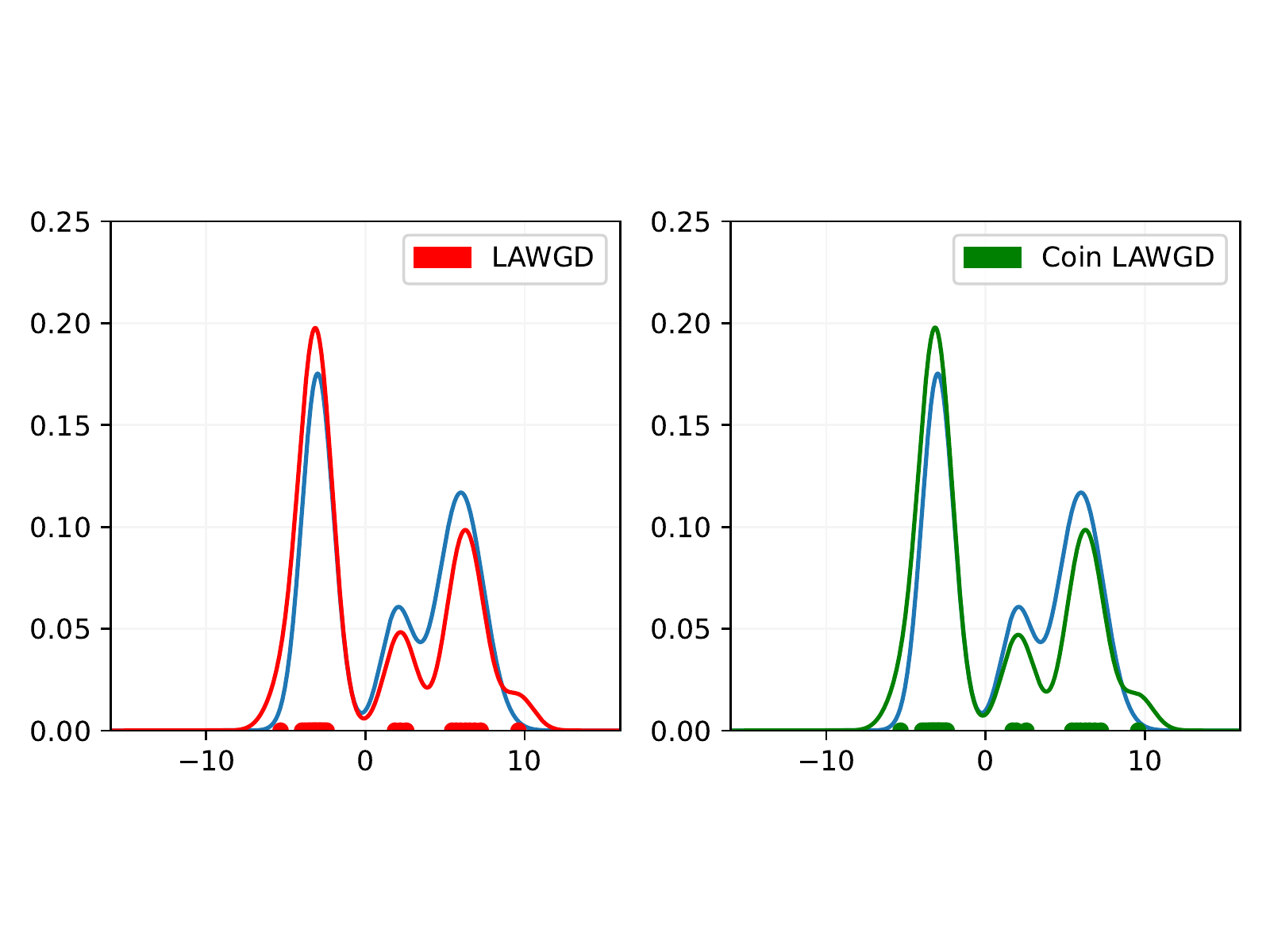}}
\caption{\textbf{A comparison between LAWGD \citep{Chewi2020} and its learning-rate free analogue, Coin LAWGD (Alg. \ref{alg:param_free_lawgd})}. We plot the samples generated by both methods for the two target distributions detailed in App. \ref{sec:numerics_LAWGD}.}
\label{fig:LAWGD}
\end{figure}

\subsubsection{Coin KSDD} 
\label{sec:numerics_KSD}
\textbf{Experimental Details} We next compare the performance of KSDD \cite{Korba2021} and Coin KSDD (Alg. \ref{alg:param_free_ksd}). We consider the following examples.

\emph{Anistropic Two-Dimensional Gaussian}. We first consider a single bivariate Gaussian, $p(x) = \mathcal{N}(x; \mu, \Sigma)$, where $\mu=(-3,3)^{\top}$ and $\Sigma^{-1} = \big( \begin{smallmatrix} 0.2 & -0.05 \\ -0.05 & 0.1 \end{smallmatrix} \big)$.

\emph{Symmetric Mixture of Two Two-Dimensional Gaussians}. For our second and third examples, we consider a symmetric mixture of two, two-dimensional, isotropic Gaussians with different covariances. In particular, $p(x) = \frac{1}{2}\mathcal{N}(x; \mu, \sigma_1^2\mathbb{1}) + \frac{1}{2}\mathcal{N}(x; -\mu,\sigma_2^2\mathbb{1})$, where $\mu=(6,0)^{\top}$, $\sigma_1^2 =2$, $\sigma_2^2 = 1$ in Fig. \ref{fig:KSDb}; and $\mu = (5,5)^{\top}$, $\sigma_1^2=2$, $\sigma_2^2=2$ in Fig. \ref{fig:KSD_anneal}. 

We use $N=20$ particles, and run both methods for $T=5000$ iterations. We initialise the particles according from $\mathcal{N}(0,0.5^2)$ in Fig. \ref{fig:KSDa}, or $\mathcal{N}(0,2^2)$ in Fig. \ref{fig:KSDb}.

\textbf{Numerical Results}. In Fig. \ref{fig:KSD}, we plot the samples obtained using KSDD and Coin KSDD after 5000 iterations. Similar to before, the samples generated by our coin sampling method are very similar to those generated by the original algorithm. In fact, even the dynamics of the two algorithms share many of the same properties. For example, the Coin KSDD particles seem initially to be guided by the final repulsive term in the update, which determine their global arrangement. They are then transported towards the mode(s), driven by the remaining score-based terms. This is in contrast to the Coin SVGD particles, which are first driven by the score term, before being dispersed around the mode by the repulsive term. These dynamics were first observed in \citet{Korba2021} for the standard SVGD and KSDD algorithms, and also to be present for their step-size free analogues. 

\begin{figure}[b!]
\centering
\subfigure[Anisotropic Gaussian. \label{fig:KSDa}]{\includegraphics[trim=0 16mm 0 16mm, clip, width=.46\textwidth]{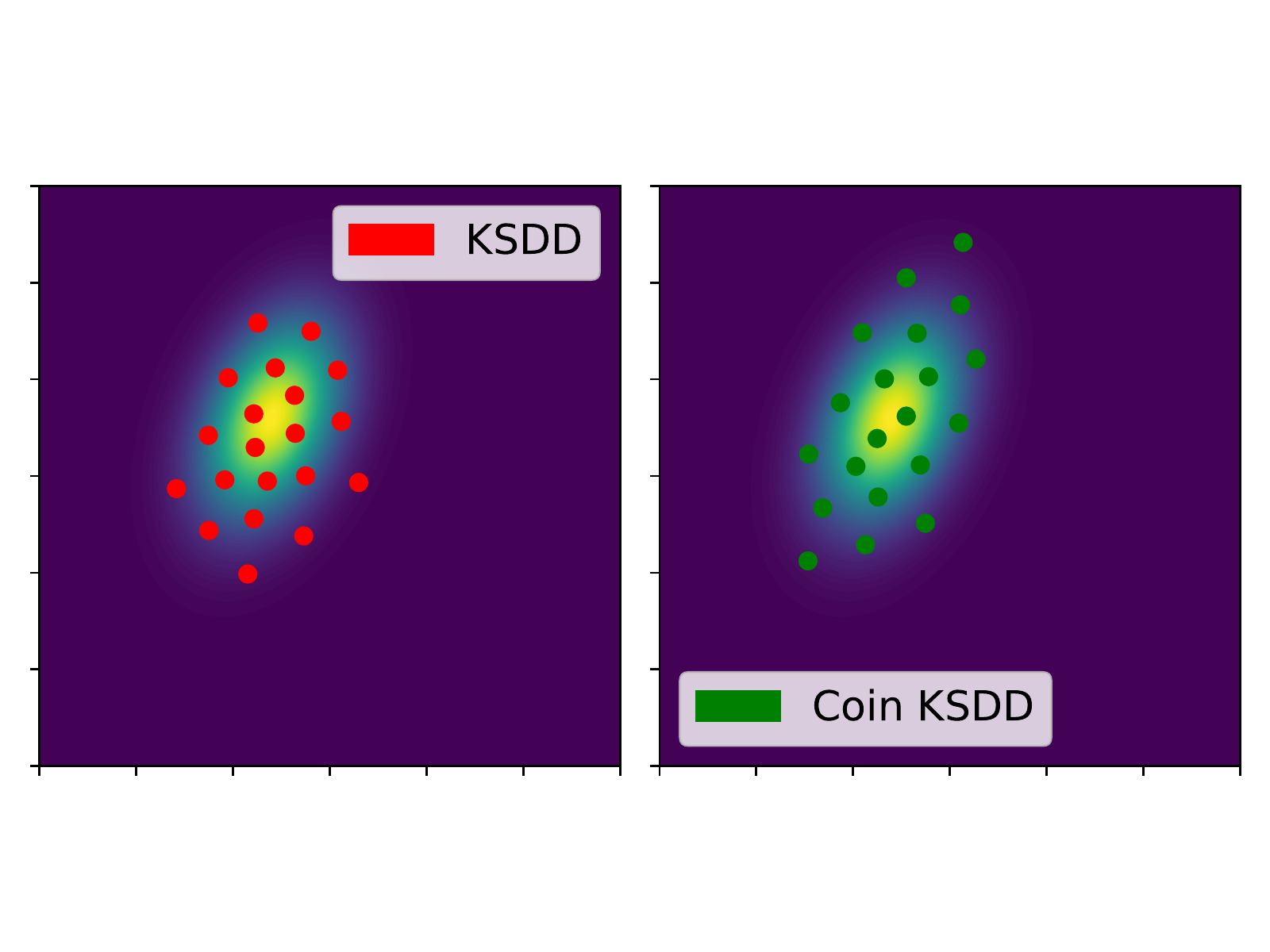}} \hspace{4mm}
\subfigure[Symmetric Mixture of Gaussians. \label{fig:KSDb}]{\includegraphics[trim=0 16mm 0 16mm, clip, width=.46\textwidth]{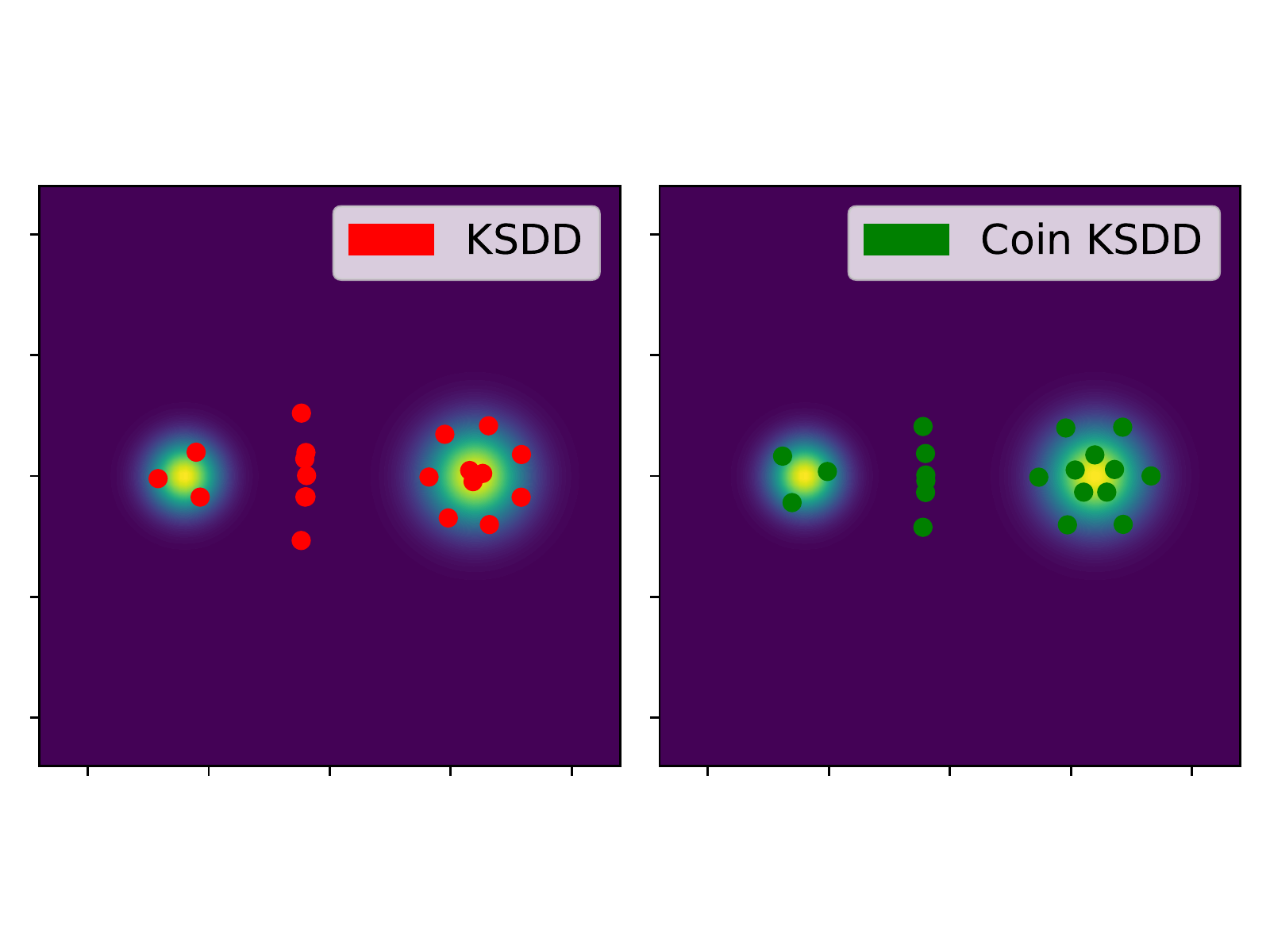}}
\caption{\textbf{A comparison between KSDD \cite{Korba2021} and its learning-rate free analogue, Coin KSDD (Alg. \ref{alg:param_free_ksd})}. We plots the samples generated by both methods for the two target distributions detailed in App. \ref{sec:numerics_KSD}.}
\label{fig:KSD}
\end{figure}

Unsurprisingly, Coin KSDD also inherits some of the shortcomings of KSDD. Given a symmetric target, and a radial kernel, it is known that any plane of symmetry is invariant under the KSD gradient flow \citep[Lemma 11]{Korba2021}. Thus, if KSDD is initialised close to a plane of symmetry, it can become stuck there indefinitely. In practice, this also appears to holds true for Coin KSDD (see Fig. \ref{fig:KSD}). \citet{Korba2021} propose an annealing strategy can be used to resolve this behaviour; see also \citet{Wenliang2021}. One first runs KSDD to obtain samples from the target $\pi_{\beta}(x)\propto \exp(-\beta U(x))$, where the inverse temperature $\beta\sim 0$. One then runs the algorithm a second time, initialised at these samples, on the true target $\pi(x)\propto \exp(- U(x))$. A similar strategy can also be used for Coin KSDD (see Fig. \ref{fig:KSD_anneal}).

\begin{figure}[t]
\centering
\subfigure[$\beta=1$]{\includegraphics[trim=0 16mm 0 16mm, clip, width=.33\textwidth]{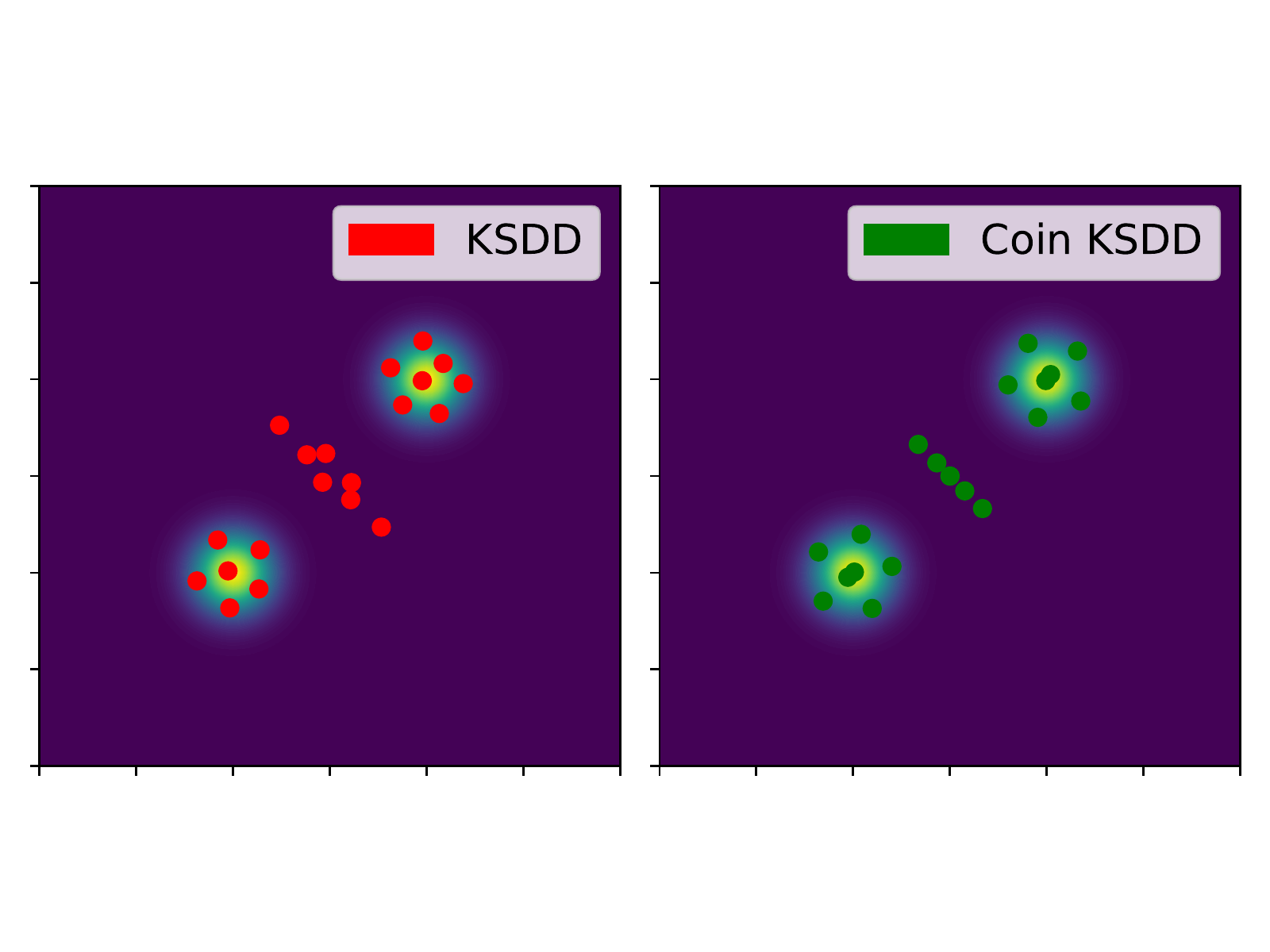}}
\subfigure[$\beta=0.02$]{\includegraphics[trim=0 16mm 0 16mm, clip, width=.33\textwidth]{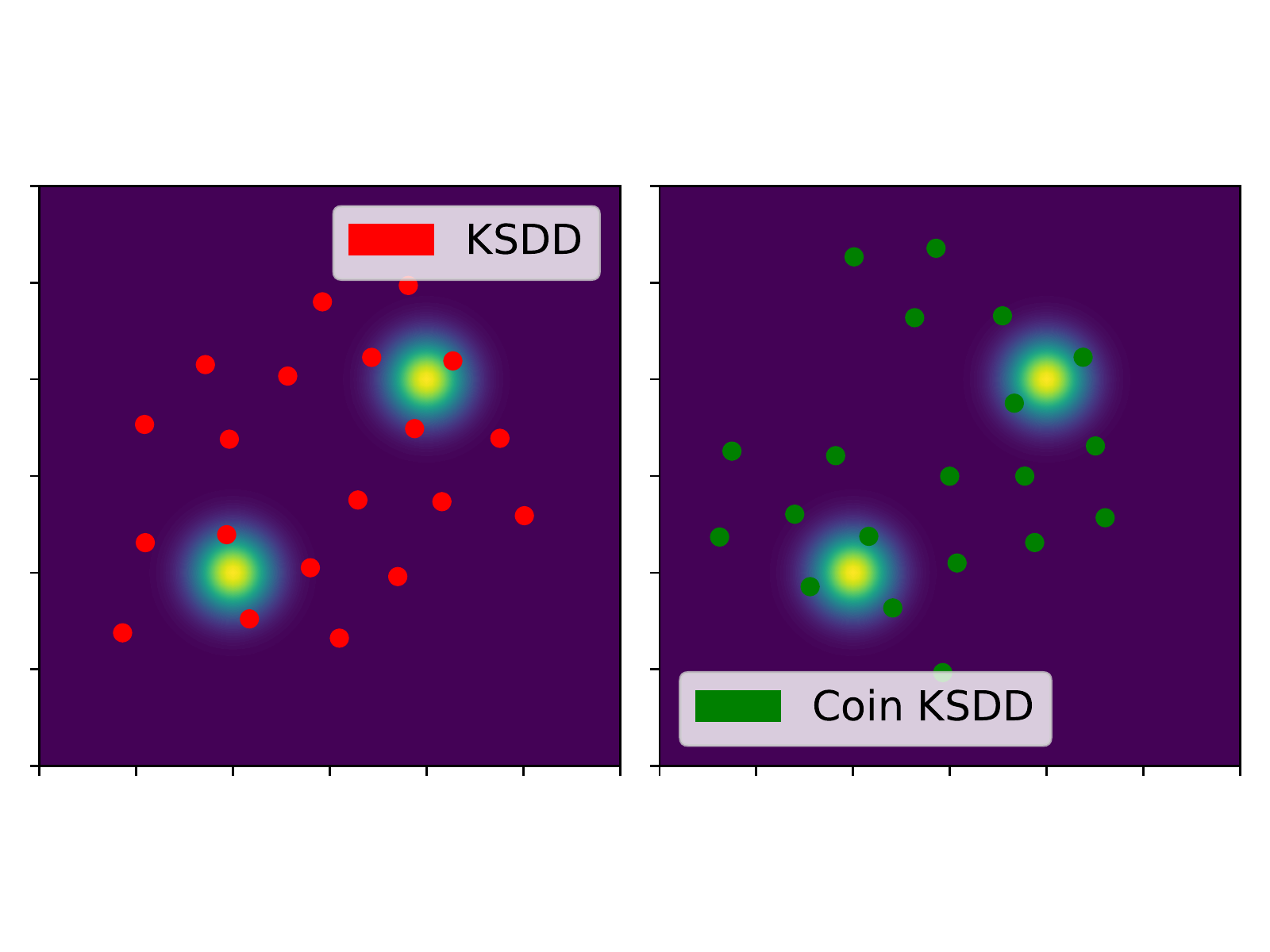}}
\subfigure[$\beta=0.02 \rightarrow 1$.]{\includegraphics[trim=0 16mm 0 16mm, clip, width=.33\textwidth]{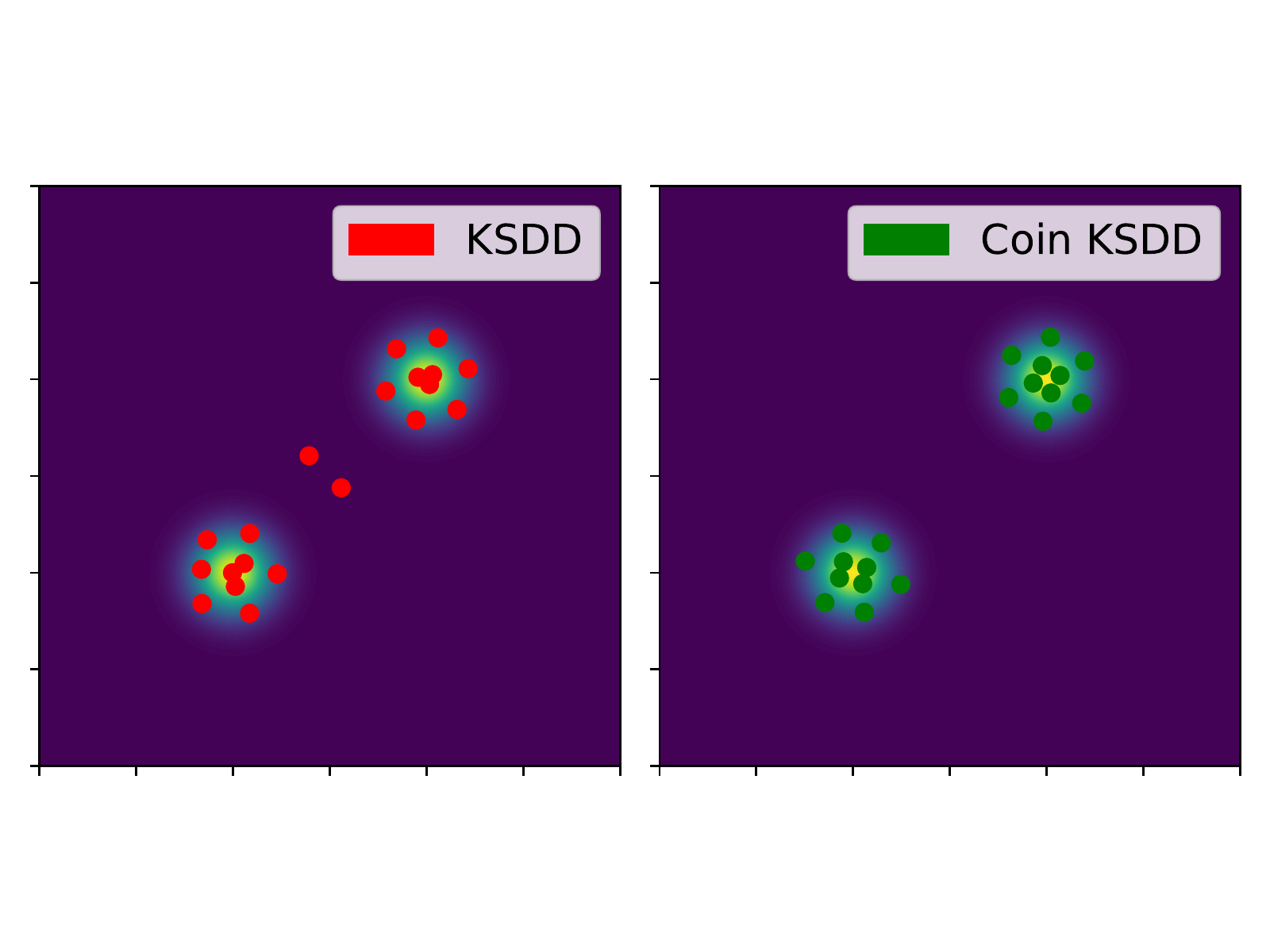}}
\caption{\textbf{A comparison between annealing KSDD \cite{Korba2021} and annealing Coin KSDD (Alg. \ref{alg:param_free_ksd})}. Samples generated by both methods using no annealing ($\beta=1$), after the first step of the annealing method ($\beta=0.02$), and after the full annealing method ($\beta=0.02\rightarrow\beta=1$).}
\label{fig:KSD_anneal}
\vspace{-4mm}
\end{figure}

\subsection{Bayesian Independent Component Analysis}
\label{sec:bayes-ica-add}

\textbf{Additional Experimental Details}. For the results in Fig. \ref{fig:BayesICA} (Sec. \ref{sec:bayes_ica}) and Fig. \ref{fig:bayes-ica-add} (below), we tune the SVGD learning rate by running SVGD for $T=1000$ iterations, using learning rates $\gamma\in\{1\times 10^{-5}, 1\times 10^{-4}, 1\times 10^{-3}, 1\times 10^{-2}, 1\times 10^{-1}, 1\times 10^{0}\}$. We then define the optimal learning rate as the one for which SVGD outputs approximate unmixing matrices $(\bar{\boldsymbol{W}}_i)_{i=1}^{10}$ with the lowest Amari distance to the true unmixing matrix $\boldsymbol{W}$, averaged over 10 random trials. The small and large learning rates are then chosen to be one order of magnitude smaller or greater than the optimal learning rate, respectively.

\textbf{Additional Numerical Results}. 
In Fig. \ref{fig:bayes-ica-add}, we provide a further comparison between Coin SVGD and SVGD, plotting the KSD (Fig. \ref{fig:bayes-ica-add-a}) and the clock time (Fig. \ref{fig:bayes-ica-add-b}) as a function of the number of particles $N$, in the case $p=4$. Similar to elsewhere, the performance of Coin SVGD is similar to the best performance of SVGD (out of the learning rates considered), and both algorithms provide increasingly accurate approximations of the posterior as the number of particles increases. The computational cost of both algorithms is essentially identical.

\begin{figure}[b!]
\centering
\subfigure[KSD vs Number of Particles. \label{fig:bayes-ica-add-a}]{\includegraphics[trim=0 0mm 0 0mm, clip, width=.4\columnwidth]{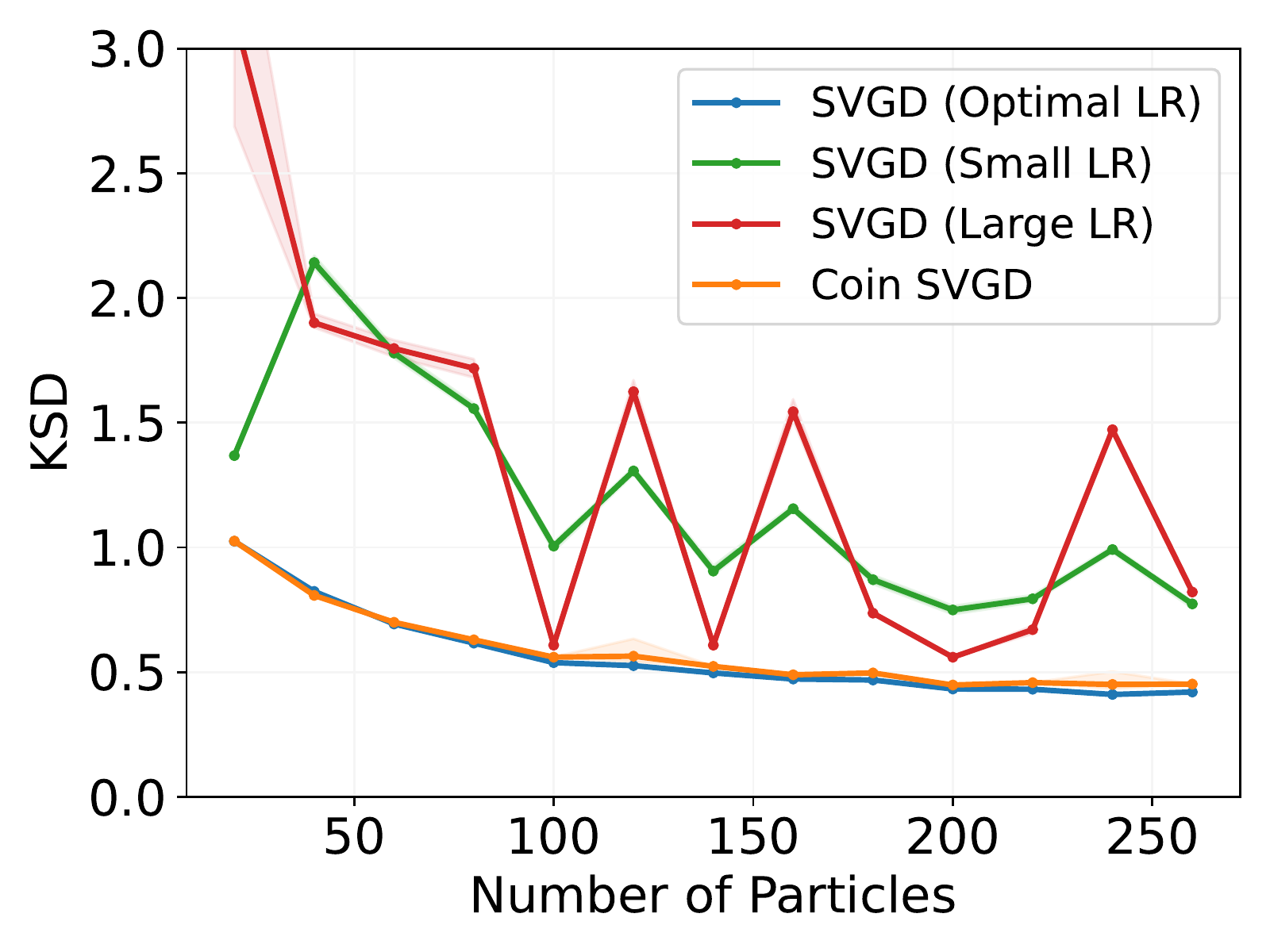}} \hspace{8mm}
\subfigure[Time vs Number of Particles. \label{fig:bayes-ica-add-b}]{\includegraphics[trim=0 0mm 0mm 0mm, clip, width=.4\columnwidth]{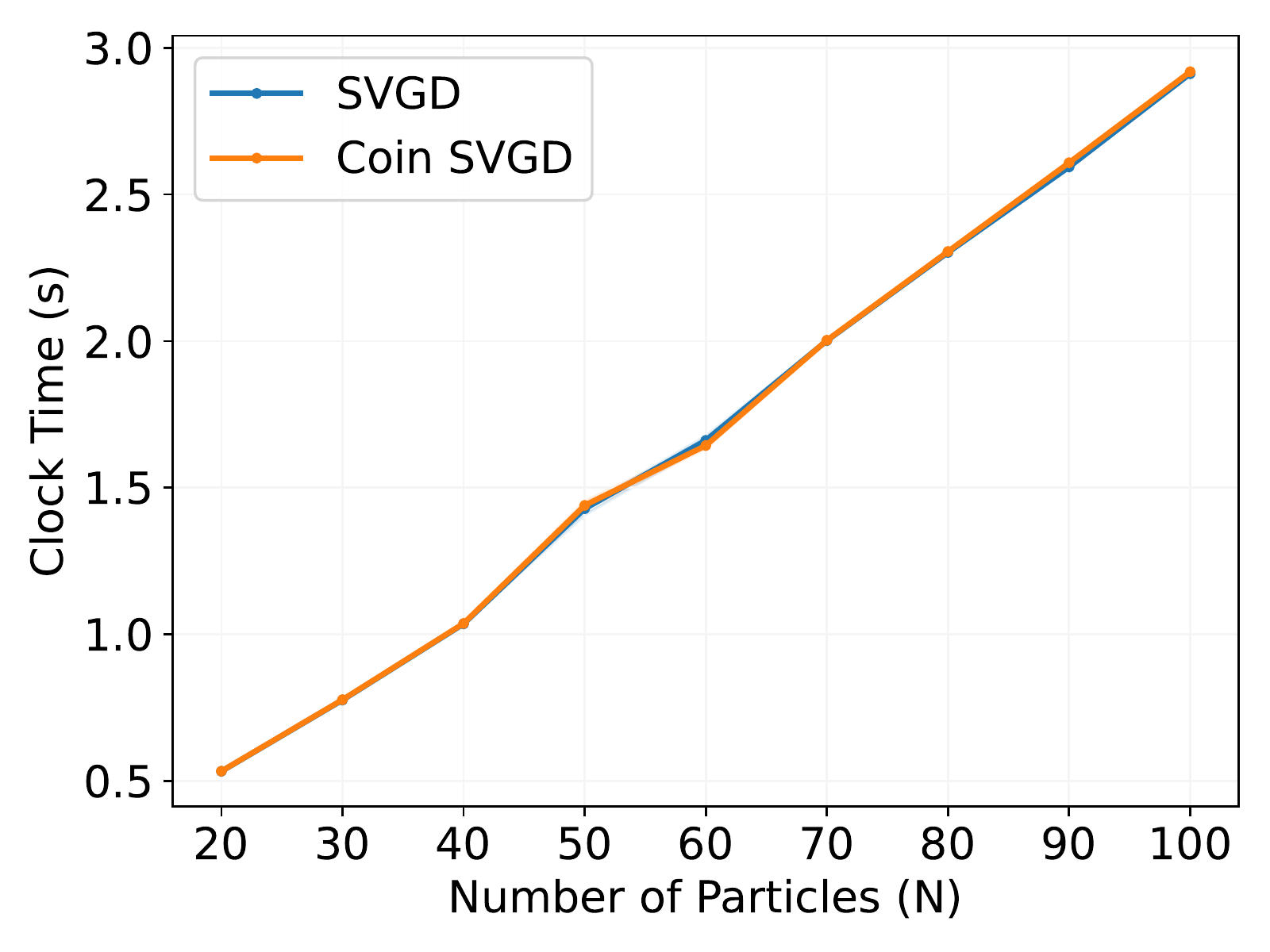}}
\caption{\textbf{Additional results for the Bayesian ICA model}. (a) KSD between the target posterior and the posterior approximations generated by Coin SVGD and SVGD after $T=1000$ iterations, as a function of the number of particles $N$. (b) Time (s) to run Coin SVGD and SVGD for $T=1000$ iterations as a function of the number of particles. For both sets of results, we average the results over 10 random trials.}
\label{fig:bayes-ica-add}
\vspace{-4mm}
\end{figure}

\subsection{Bayesian Logistic Regression}
\label{sec:bayes-lr-extra}

\textbf{Additional Experimental Details}. For the results in Fig. \ref{fig:BayesLRa} and Fig. \ref{fig:BayesLRb}, we run SVGD using a grid of 10 logarithmically spaced learning rates $\gamma\in[1\times 10^{-5},1\times 10^{0}]$. For the results in Fig. \ref{fig:BayesLRc} and Fig. \ref{fig:BayesLRd}, we then define the `optimal' learning rate to be the one which obtained the highest test accuracy after $T=5000$ iterations. Meanwhile, the `small' and `big' learning rates correspond to those with indices two smaller or two larger than the index of the optimal learning rate in the original grid of learning rates. As in \citet{Liu2016a}, we use Adagrad \cite{Duchi2011} to adapt the learning rate for SVGD on the fly.

\textbf{Additional Numerical Results}. In Fig. \ref{fig:bayes-lr-add}, we provide additional numerical results for the Bayesian logistic regression considered in Sec. \ref{sec:bayes-lr}. In Fig. \ref{fig:bayes-lr-add-a}, we plot the KSD between the posterior approximations generated by SVGD and Coin SVGD, and the true target posterior, as a function of the number of particles. For SVGD, we consider several fixed learning rates, namely $\gamma \in\{2\times 10^{-2}, 5\times 10^{-2}, 1\times 10^{-1}\}$ across all values of $N$, which are determined based on the results in Fig. \ref{fig:BayesLRa} and Fig. \ref{fig:BayesLRb}. Our results suggest, as we would expect, that the KSD achieved by both methods decreases as a function of the number of particles; i.e., both methods generate increasingly accurate approximations of the target posterior as the number of particles increases. Meanwhile, the performance of Coin SVGD is broadly comparable to the best performance of SVGD (among the learning rates considered), as measured by the KSD. 

In Fig. \ref{fig:bayes-lr-add-b}, we plot the time taken by Coin SVGD and SVGD to complete $T=2500$ iterations, again as a function of the number of particles. Here, we see no meaningful difference between the two algorithms. This is unsurprising on the basis of our earlier discussion on computational cost in Sec. \ref{sec:adaptive-coin-svgd}.

\begin{figure}[t!]
\centering
\subfigure[KSD vs Number of Particles. \label{fig:bayes-lr-add-a}]{\includegraphics[trim=0 0mm 0 0mm, clip, width=.38\columnwidth]{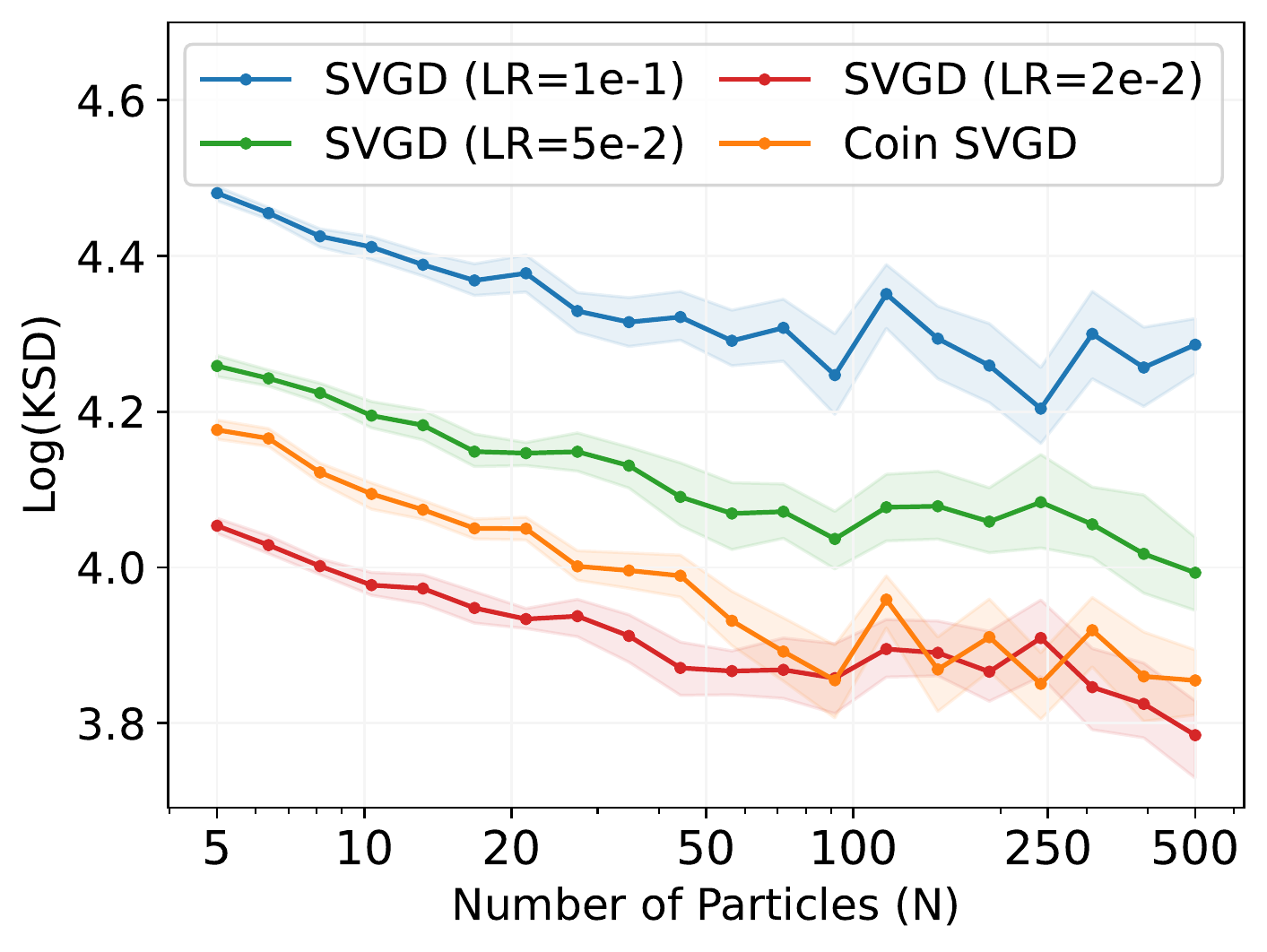}} \hspace{8mm}
\subfigure[Time vs Number of Particles. \label{fig:bayes-lr-add-b}]{\includegraphics[trim=0 0mm 0mm 0mm, clip, width=.38\columnwidth]{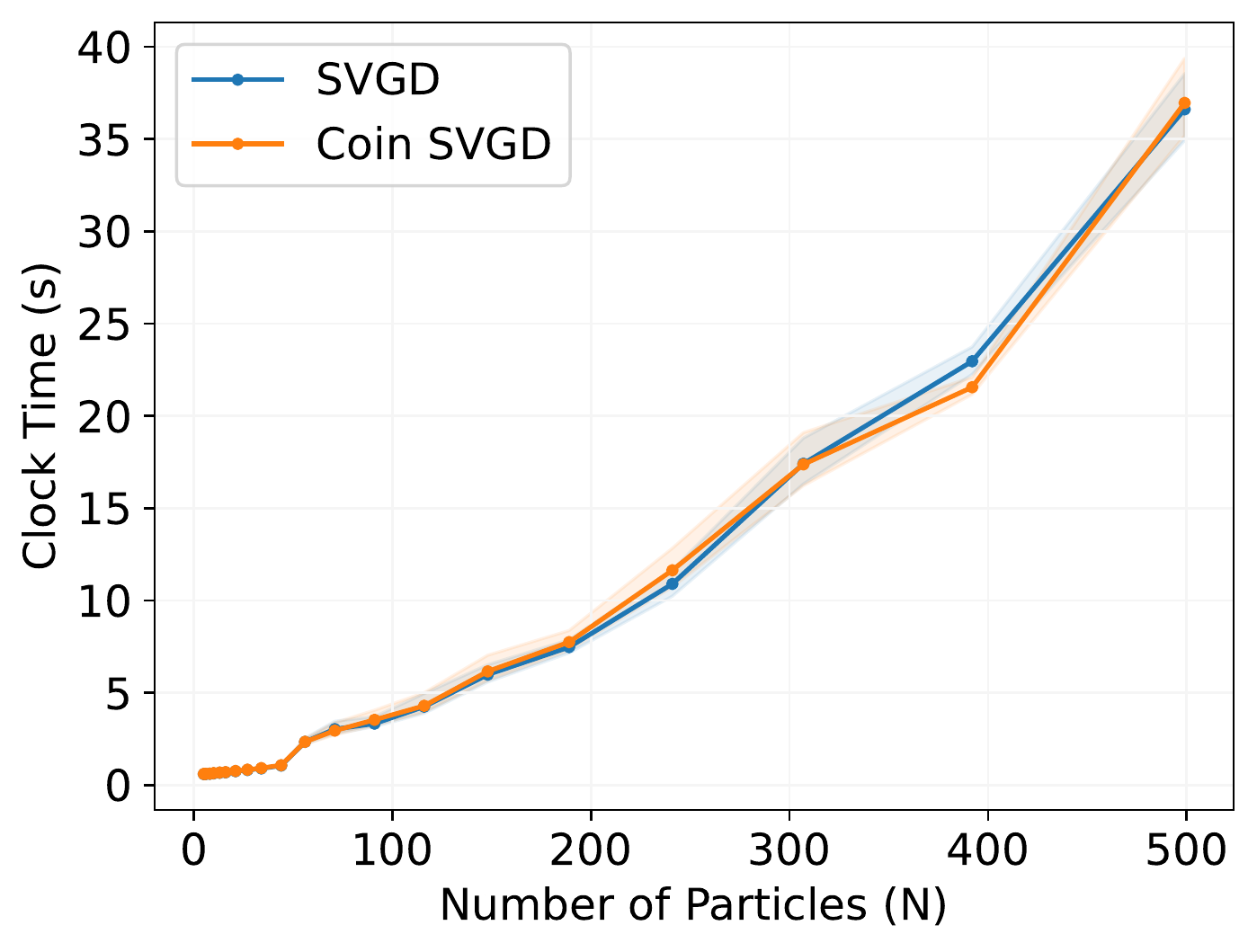}}
\caption{\textbf{Additional results for the Bayesian logistic regression model}. (a) KSD between the target posterior and the posterior approximations generated by Coin SVGD and SVGD after $T=2500$ iterations, as a function of the number of particles $N$. (b) Time (s) to run Coin SVGD and SVGD for $T=2500$ iterations as a function of the number of particles. For both sets of results, we average the results over 20 random train-test splits.}
\label{fig:bayes-lr-add}
\vspace{-4mm}
\end{figure}

\subsection{Bayesian Neural Network}
\label{sec:bnn-extra}

\textbf{Additional Experimental Details}. For the results in Fig. \ref{fig:BayesNN} (Sec. \ref{sec:bnn}) and Fig. \ref{fig:BayesNN-additional} (see below), we run SVGD using a grid of 20 logarithmically spaced learning rates $\gamma\in[1\times 10^{-10},1\times 10^{-0.5}]$. Following \citet{Liu2016a}, for the Protein and Year datasets in Fig. \ref{fig:BayesNN-additional}, we use 100 hidden units (rather than 50). In addition, we use a mini-batch size of 1000 for Year (rather than 100). We report results averaged over 20 random train-test splits for all datasets other than Year, for which we just report a single run. Once again, we use Adagrad \cite{Duchi2011} to adapt the learning rate for SVGD.

\textbf{Additional Numerical Results}. In Fig. \ref{fig:BayesNN-additional}, we plot the average test RMSE achieved by Coin SVGD and SVGD after $T=2000$ iterations, for several additional UCI datasets. As noted in the main text, for certain datasets there remains a considerable gap between the optimal performance of SVGD, and the performance of Coin SVGD (see, e.g., Fig. \ref{fig:BayesNN-wine}). We expect, however, that this performance gap could be significantly reduced by appropriately extending recent advances in parameter-free stochastic optimisation to our setting \citep[e.g.][]{Chen2022,Chen2022a}. 

Finally, in Fig. \ref{fig:BayesNN-time}, we plot the average time (s) for both methods to complete $T=2000$ iterations. As in our previous experiments, there is no meaningful difference between the two methods in terms of clock time. 

\begin{figure*}[t!]
\centering
\subfigure[Protein.]{\includegraphics[trim=0 0mm 15mm 13mm, clip, width=.23\linewidth]{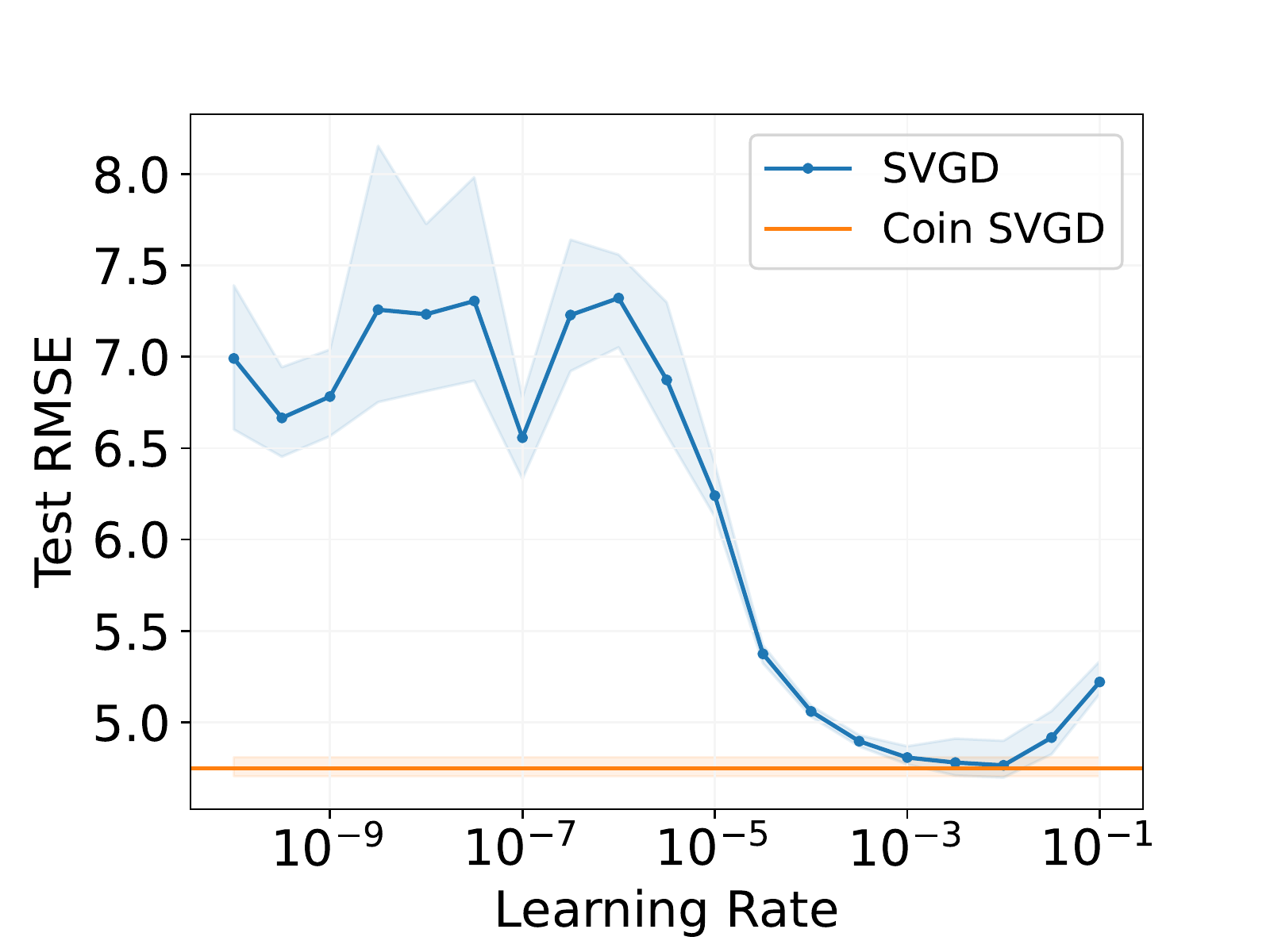}} \hfill
\subfigure[Wine. \label{fig:BayesNN-wine}]{\includegraphics[trim=0 0mm 15mm 13mm, clip, width=.23\linewidth]{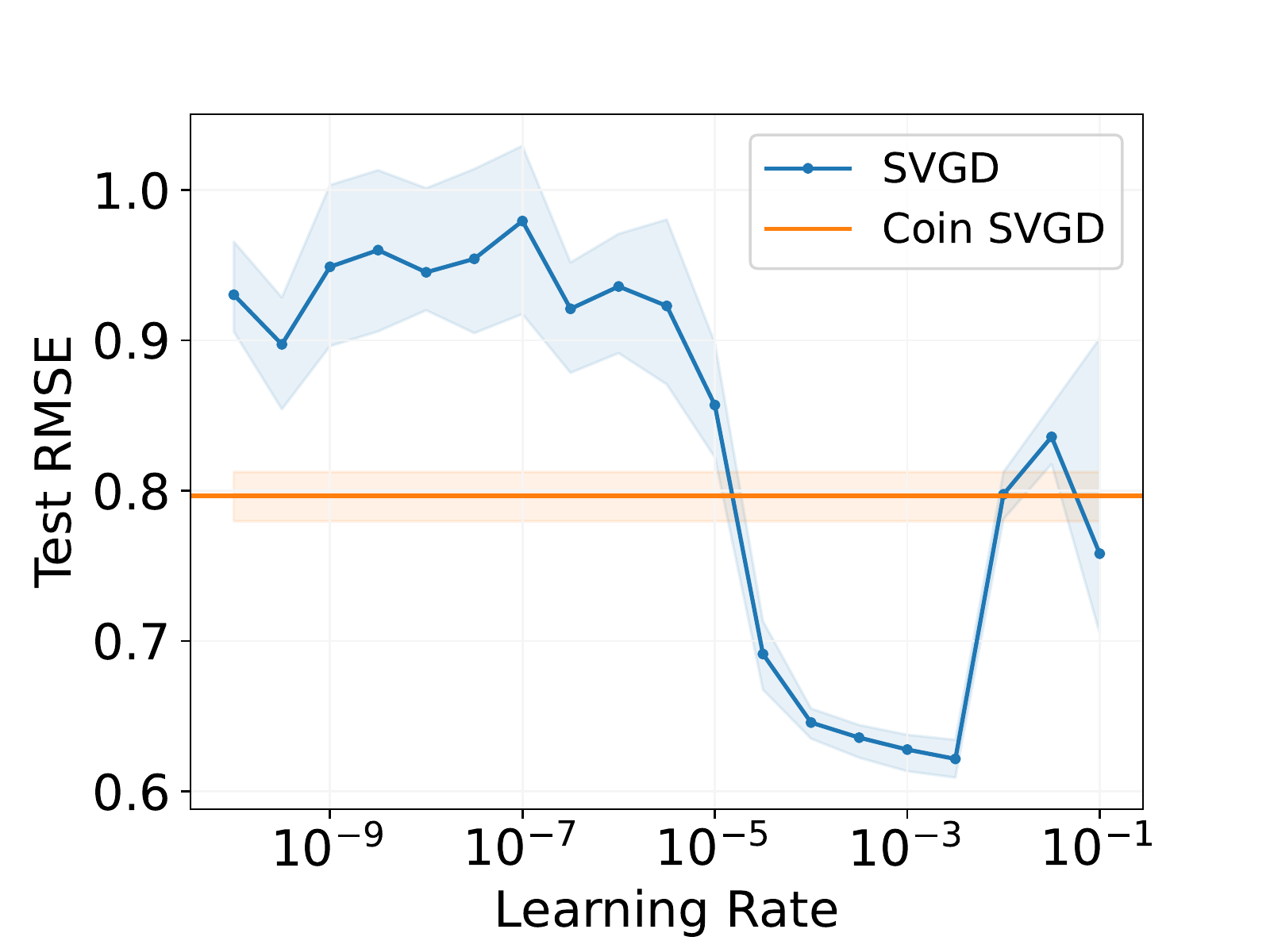}} \hfill
\subfigure[Yacht.]{\includegraphics[trim=0 0mm 15mm 13mm, clip, width=.23\linewidth]{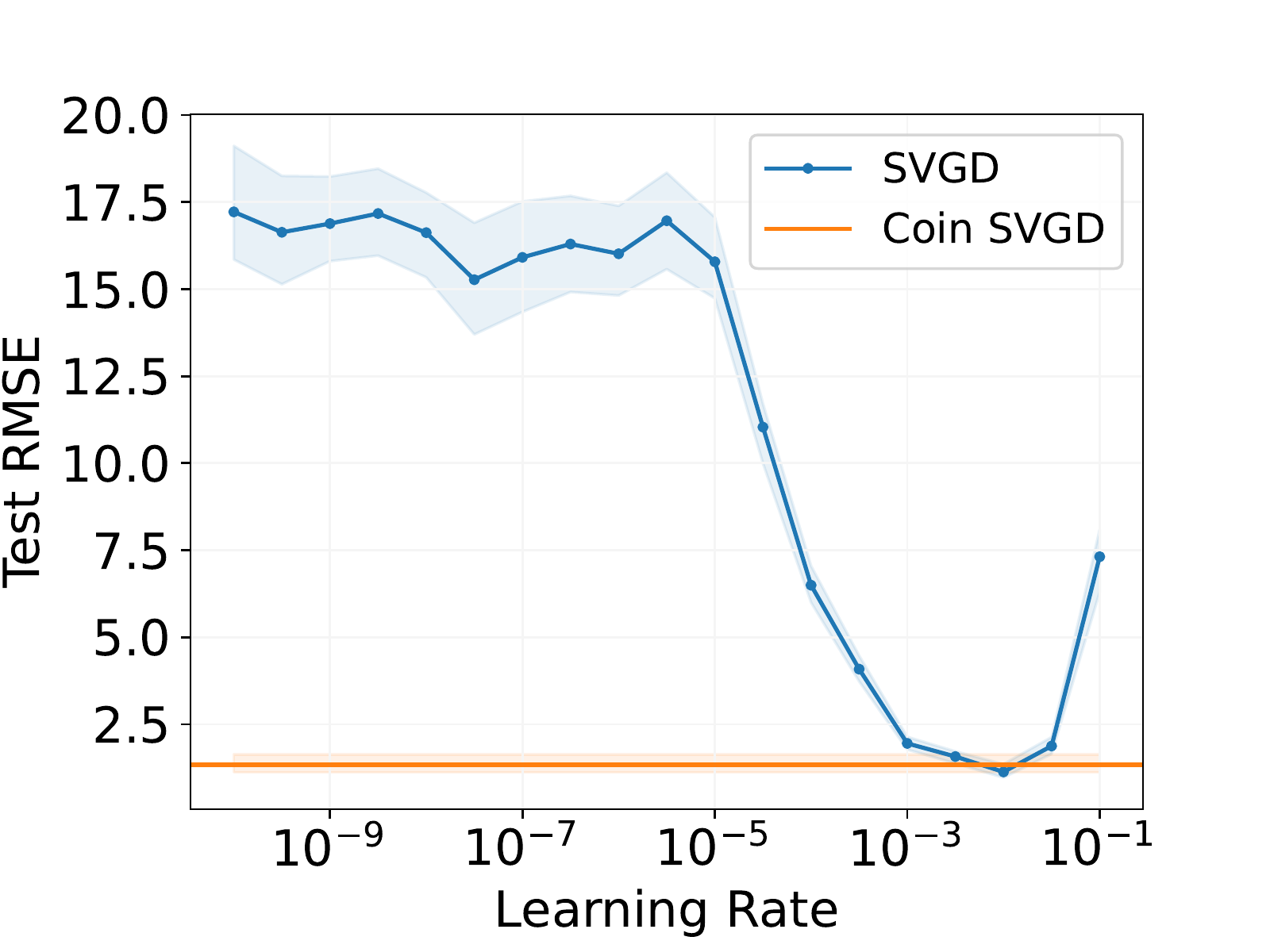}} \hfill
\subfigure[Year.]{\includegraphics[trim=0 0mm 15mm 13mm, clip, width=.23\linewidth]{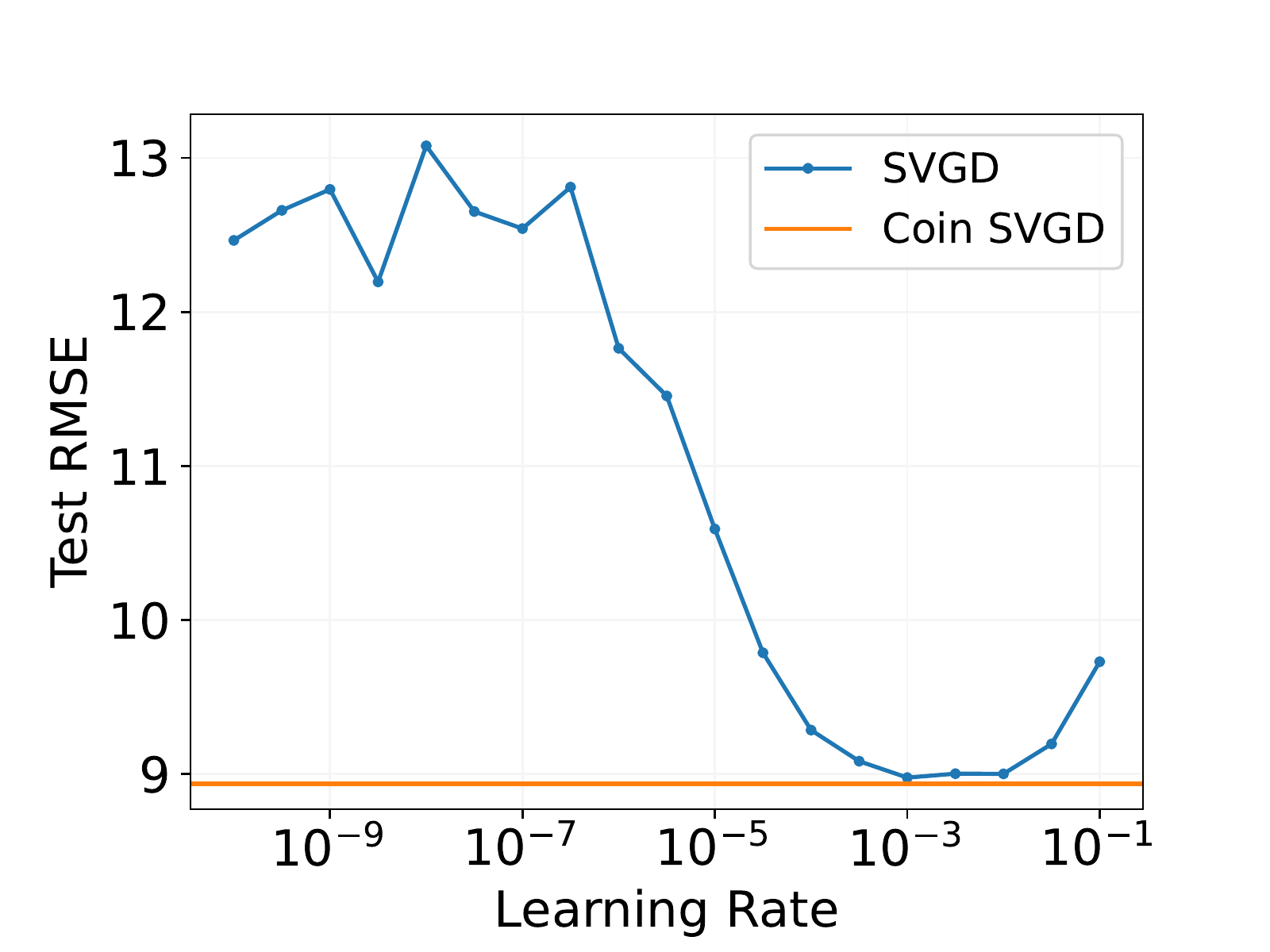}} \hfill
\caption{\textbf{Additional results for the Bayesian neural network}. Average test RMSE for Coin SVGD and SVGD, as a function of the learning rate, after $T=2000$ iterations, for several additional UCI datasets.}
\label{fig:BayesNN-additional}
\vspace{2mm}
\end{figure*}

\begin{figure*}[t!]
\centering
\includegraphics[trim=0 0mm 0mm 0mm, clip, width=.6\linewidth]{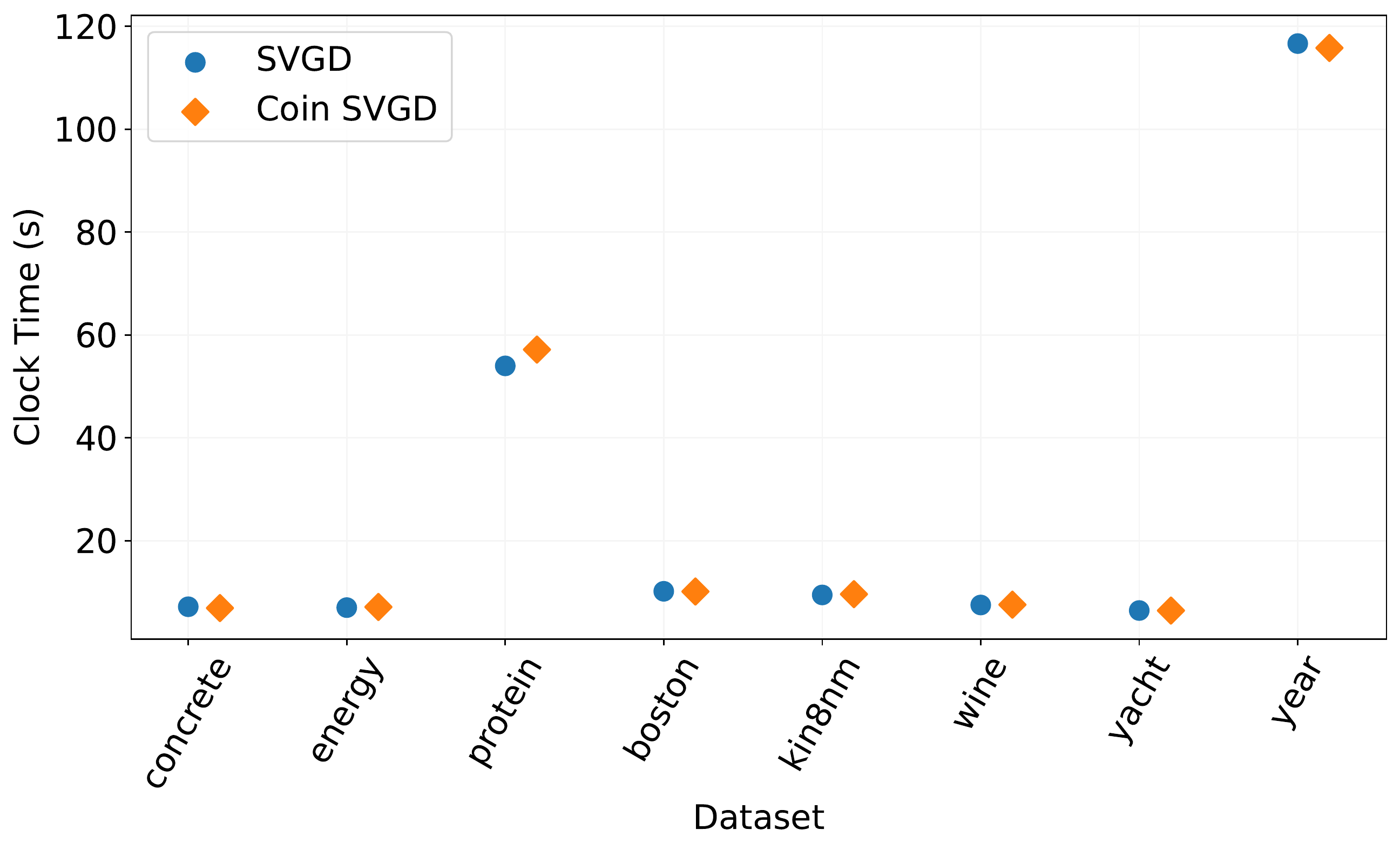}
\caption{\textbf{Additional results for the Bayesian neural network}. Time (s) to run Coin SVGD and SVGD for $T=2000$ iterations, for each of the UCI datasets considered in Fig. \ref{fig:BayesNN} and Fig. \ref{fig:BayesNN-additional}.}
\label{fig:BayesNN-time}
\end{figure*}
\hfill


\end{document}